\documentclass{article}

\usepackage{microtype}
\usepackage{graphicx}
\usepackage{subcaption}
\usepackage{booktabs} %
\usepackage{placeins}

\usepackage{hyperref}

\usepackage[accepted]{icml2026}

\usepackage{amsmath}
\usepackage{amssymb}
\usepackage{mathtools}
\usepackage{amsthm}
\usepackage{wrapfig}

\usepackage[capitalize,noabbrev]{cleveref}

\theoremstyle{plain}
\newtheorem{theorem}{Theorem}[section]
\newtheorem{proposition}[theorem]{Proposition}

\theoremstyle{definition}

\theoremstyle{remark}

\usepackage[textsize=tiny]{todonotes}

\icmltitlerunning{Balancing Plasticity and Stability with Fast and Slow Successor Features}

\begin{document}

\twocolumn[
  \icmltitle{Balancing Plasticity and Stability with Fast and Slow Successor Features}

  \icmlsetsymbol{equal}{*}
  \icmlsetsymbol{senior}{$\dagger$}

  \begin{icmlauthorlist}
    \icmlauthor{Raymond Chua}{yyy,mila}
    \icmlauthor{Doina Precup}{senior,yyy,mila,cifar,deepmind}
    \icmlauthor{Blake Richards}{senior,yyy,mila,cifar,neuro,mni}
  \end{icmlauthorlist}

  \icmlaffiliation{yyy}{School of Computer Science, McGill University, Montr\'eal, Canada}
  \icmlaffiliation{mila}{Mila - Quebec Artificial Intelligence Institute, Montr\'eal, QC, Canada}
  \icmlaffiliation{cifar}{CIFAR Learning in Machines and Brains}
  \icmlaffiliation{deepmind}{Google Deepmind}
  \icmlaffiliation{neuro}{Dept of Neurology \& Neurosurgery, Montr\'eal, Canada}
  \icmlaffiliation{mni}{Montreal Neurological Institute of McGill University, Montr\'eal, Canada}

  \icmlcorrespondingauthor{Raymond Chua}{ray.r.chua@gmail.com}

  \icmlkeywords{Continual Reinforcement Learning, Successor Features, Plasticity, Stability, Synaptic Consolidation}

  \vskip 0.3in
]

\printAffiliationsAndNotice{$^{\dagger}$Joint senior authorship.}  %

\begin{abstract}
A hallmark of intelligence is the ability to adapt in non-stationary environments, yet deep Reinforcement Learning (RL) agents often struggle in such settings. Prior studies introduce non-stationarity through abrupt shifts in features or dynamics, whereas real-world environments often evolve gradually through continual drift. This distinction has important implications for the ``stability-plasticity dilemma'' in RL, as abrupt task changes may demand more plasticity than naturalistic settings. To address this, we modify existing 3D Miniworld and MuJoCo environments to incorporate naturalistic, continual non-stationarity, and use them to examine how stability and adaptation affect performance under continuous environmental change. We find that methods favoring stability, such as synaptic consolidation, outperform approaches focused on plasticity, such as parameters resetting. Motivated by this result, and prior evidence that Successor Features (SFs) reduce interference, we investigate whether SFs are better consolidation targets than Q-values. Across both environments, applying neuro-inspired synaptic consolidation to SFs yields superior performance on continually changing settings. Moreover, consolidation is most effective when SFs are stabilized across multiple timescales, which capture complementary aspects of gradual environmental change. Together, these results suggest that stability is more critical in continual learning when changes are gradual, and that multi-timescale consolidation of predictive representations is an effective approach. 
\end{abstract}

\section{Introduction}
\label{introduction}
Events in the real world are often constantly evolving. Humans and animals must therefore adapt in environments where the underlying dynamics shift naturally and continually. In contrast, many continual learning studies in Artificial Intelligence (AI) focus on abrupt, task-boundary changes, where the features or dynamics across tasks differ substantially. Standard RL techniques, such as Q-learning, struggle under such conditions and often suffer from catastrophic forgetting~\citep{mccloskey1989catastrophic, french1999catastrophic}. Developing methods that enable deep RL agents to learn effectively in naturalistic, continually changing environments remain a major goal in AI research \citep{khetarpal2022towards, abel2023definition, silver2025welcome}. 

While early work in supervised continual learning emphasized stability (i.e., the ability to retain previously acquired knowledge and prevent catastrophic forgetting)~\citep{ kirkpatrick2017overcoming, zenke2017continual}, RL poses unique challenges, since changing policies alter the samples an agent encounters, compounding environmental non-stationarity. In RL, Atari \citep{bellemare2013arcade} has emerged as one of the standard testbeds for sequential task learning, where stability-focused methods such as Elastic Weight Consolidation (EWC) \citep{kirkpatrick2017overcoming} and replay \citep{rolnick2019experience} became dominant strategies. More recently, studies have shifted towards the complementary issue of plasticity (i.e., the capacity to rapidly adapt to new experiences), using either sequential Atari tasks \citep{abbas2023loss} or artificial tasks in MuJoCo created by randomly sampling friction coefficients per task \citep{dohare2024loss}. Stability has mostly been studied in sequential multi-task settings and plasticity in single-task dynamics, but real-world environments rarely fit either case, as naturalistic, continual non-stationarity appears as a single task while still creating a stream of shifting sub-tasks.

Despite these advances, it remains unclear how stability and plasticity trade off in more real-world-like environments that undergo naturalistic, continual non-stationarity, where agents must adapt to naturalistic and continual changes without explicit task boundaries. A natural way to study this problem is to develop environments with naturalistic, continually evolving dynamics, and compare algorithms that are task agnostic, and either enhance plasticity (e.g., parameter resets \citep{nikishin2022primacy, nikishin2023deep, sokar2023dormant, dohare2024loss, lee2024slow}) or preserve stability (e.g., consolidation that either protects important parameters \citep{kirkpatrick2017overcoming} or allow learning across multiple timescales \citep{kaplanis2018continual, kaplanis2019policy, anand2023prediction}). While other approaches exist, such as replay-based methods \citep{riemer2018learning, rolnick2019experience,caccia2023task}, they are less suited to our setting since their benefits rely on storing and mixing past and recent samples, which is problematic when no clear task boundaries exist. More broadly, prior approaches tackle the stability-plasticity trade-off at the level of Q-values or policies, leaving the role of representations largely understudied.

In this paper, we investigate whether predictive representations can offer a principled solution to the plasticity–stability dilemma under naturalistic, continual non-stationarity. We focus on Successor Features (SFs) \citep{barreto2017successor, Borsa_2018, chua2024learning}, which capture predictive structure and enable transfer across tasks with shared dynamics, and study whether they can simultaneously support rapid adaptation and resistance to interference. We evaluate this question in two environments with continuously evolving dynamics: a slippery four rooms environment, where actions are occasionally replaced, and MuJoCo control tasks, where the embodiment’s mass varies over time. These sources of non-stationarity reflect realistic changes in action outcomes (e.g., wet or icy ground) and body dynamics. Non-stationarity is induced via continuous stochastic drift processes, including noisy sinusoidal dynamics \citep{xie2020deep}, as well as its non-periodic variant and Ornstein–Uhlenbeck (OU) drift. 

In summary, our main contributions in this paper are:
\begin{enumerate}
    \item \textbf{A naturalistic continual non-stationarity evaluation protocol.} We introduce a continual RL setup with smooth, continuous non-stationarity and no explicit task boundaries, instantiated using periodic, or non-periodic stochastic sine functions or OU dynamics, in both navigation and continuous control domains.
    \item \textbf{A controlled diagnosis of the plasticity–stability trade-off.} By systematically comparing mechanisms that inject plasticity with those that preserve stability, we provide evidence that performance degradation under continuous non-stationarity is primarily driven by instability rather than insufficient plasticity.
    \item \textbf{A novel integration of SFs with multi-timescale synaptic consolidation.} We propose a principled framework that combines predictive representations (SFs) with synaptic consolidation across multiple timescales, enabling stable learning under continuous non-stationarity while preserving adaptability.
    \item \textbf{Interpretability of predictive representations across timescales.} We use cross-attention over SFs learned at different consolidation timescales as a diagnostic tool to quantify their relative contributions, providing new insights into how stability and plasticity are distributed over temporal dimensions.
\end{enumerate}

\begin{figure*}[t]
       \centering
       \includegraphics[width=0.9\textwidth]{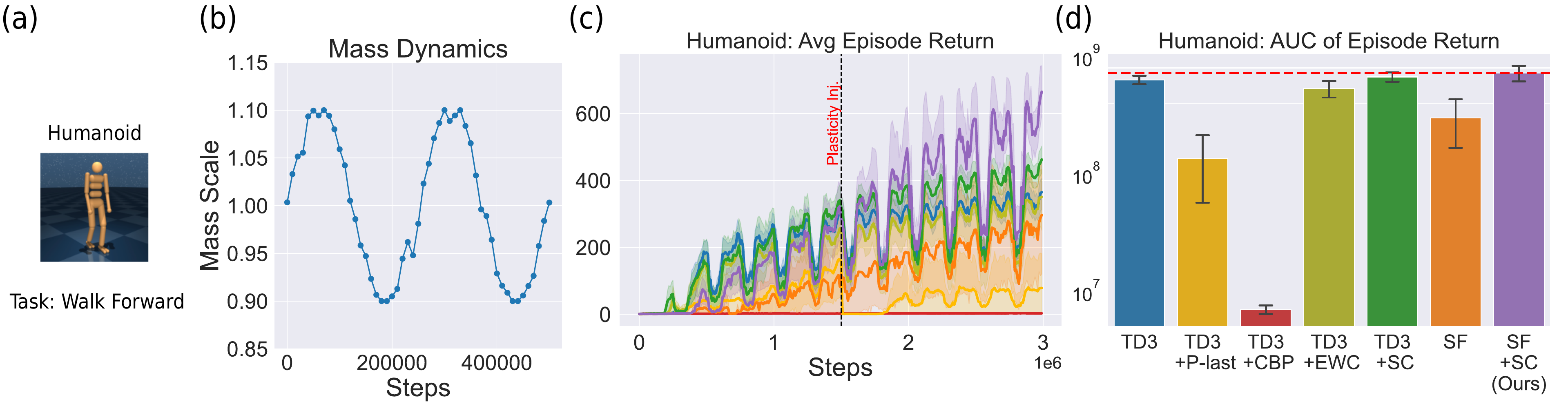}
    \caption{Motivating stability-plasticity tradeoffs in naturalistic, continually non-stationary RL where the environment evolves gradually, rather than abruptly. To illustrate, we show \textbf{(a)} the Humanoid walking forward task and \textbf{(b)} an example of the noisy sine function used to generate smooth changes in its mass.  \textbf{(c)} Average episode return plot and \textbf{(d)} Area under the curve (AUC) show that stability-preserving methods (EWC, SC) outperform purely plastic ones (CBP, P-last), with further gains from consolidating SFs (SF+SC, purple) rather than Q-values (TD3+SC, green). Plasticity injection for TD3+P-last (yellow) was performed halfway through the training.}
    \label{fig:motivation}
\end{figure*}

\section{Related work}
\label{relatedwork}
Our work builds upon prior studies of stability or plasticity in RL. Early work to mitigate forgetting emphasized stability, introducing methods that use importance measures such as Fisher information to protect parameters critical for previous tasks \citep{kirkpatrick2017overcoming, schwarz2018progress}, manipulating replay mechanisms \citep{rolnick2019experience, riemer2018learning, kaplanis2020continual, caccia2023task}, augmenting architectures \citep{powers2022self}, or employing consolidation systems that maintain multiple sets of parameters updated at different timescales \citep{kaplanis2018continual, kaplanis2019policy, anand2023prediction}. Several approaches explicitly rely on task information, for example by using task boundaries to trigger consolidation \citep{kirkpatrick2017overcoming} or distillation phases \citep{schwarz2018progress}.
Other approaches that are described as task-agnostic, but nevertheless rely on auxiliary mechanisms such as recency tracking by separating ``new'' or ``replay'' samples \citep{rolnick2019experience}, or the use of drift detection mechanisms that trigger architecture adaptations \citep{powers2022self}. These assumptions are problematic in environments that evolve naturally and continually, where there are no discrete task boundaries to detect and the very notion of ``new'' versus ``old'' experiences becomes ill-defined.

More recently, studies in continual RL have shifted attention to the problem of loss of plasticity. By analyzing neural activities, effective rank of the representations, and gradient dynamics during training, proposed mitigation strategies have focused on modifying the activation functions or optimizers \citep{ben2022lifelong, abbas2023loss}, regularizing the parameters using weight decay or normalization \citep{lyle2024disentangling}, and, more commonly, injecting plasticity by resetting subsets of network parameters such as the last few layers \citep{nikishin2022primacy, nikishin2023deep} or the ones that are least active \citep{sokar2023dormant, dohare2024loss}. However, most of these approaches have been evaluated only in discrete or single-task settings, or under non-stationarity that is abrupt rather than naturally and continually.

Among these prior approaches, our study is most closely related to consolidation-based approaches \citep{kirkpatrick2017overcoming,schwarz2018progress, kaplanis2018continual} and to recent efforts examining loss of plasticity in deep RL \citep{nikishin2023deep, dohare2024loss} as they do not require explicit or implicit task statistics. However, they have yet to be evaluated under naturalistic, continually evolving settings, and remain limited to discrete tasks or single-task settings. Moreover, whether such approaches are effective when applied to learned representations, rather than Q-values or policies, remain unclear. In this work, we address these limitations by analyzing stability and plasticity under naturalistic, continual changes, and by proposing a synaptic consolidation system with SFs, that consolidates representations across multiple timescales.

\section{Preliminaries}
\label{preliminaries}

\subsection{Reinforcement Learning under Continuous Non-Stationarity}

A Markov Decision Process (MDP) defined by the tuple $(\mathcal{S}, \mathcal{A}, p, r, \gamma)$, where $\mathcal{S}$ and $\mathcal{A}$ denote the state and action spaces, $p(s' \mid s, a)$ is the transition function, $r: \mathcal{S} \rightarrow \mathbb{R}$ is the reward function, and $\gamma \in [0,1)$ is the discount factor \citep{sutton2018reinforcement}. 

At each time step $t$, the agent observes $S_t \in \mathcal{S}$, selects an action $A_t \sim \pi(\cdot \mid S_t)$, transitions to $S_{t+1} \sim p(\cdot \mid S_t, A_t)$, and receives reward $R_{t+1}$.

Standard MDPs assume stationary dynamics, i.e., a fixed transition function $p(s' \mid s, a)$. However, real-world environments are often non-stationary. Prior work in continual reinforcement learning typically models such non-stationarity as a sequence of discrete tasks with abrupt changes.

In contrast, we consider \textit{continuous non-stationarity}, where the environment evolves gradually over time. We introduce a time-varying latent parameter $\omega_t \in \Omega$ that modulates the transition dynamics, yielding a sequence of MDPs:
\begin{equation}
    \mathcal{M}_t = (\mathcal{S}, \mathcal{A}, p_{\omega_t}, r, \gamma),
    \label{eq:mdp_sequence}
\end{equation}
where $p_{\omega_t}(s' \mid s, a) \equiv p(s' \mid s, a; \omega_t)$ varies smoothly with $\omega_t$.

We assume $\omega_t$ evolves according to a continuous stochastic process (e.g., a noisy sine wave), resulting in gradual changes in dynamics rather than abrupt task switches. This setting captures \textit{naturalistic non-stationarity}, where the agent must continually adapt to drifting conditions while retaining prior knowledge. The latent variable $\omega_t$ is not observed by the agent, and its evolution may revisit similar values over time, leading to recurring dynamics (Figure~\ref{fig:motivation}b).

\subsection{Successor Features}

Successor Features (SFs) provide a decomposition of the state-action value function into reward parameters and predictive representations of future feature occupancy:
\begin{equation}
     Q(S_t,A_t, \boldsymbol{w}) = \psi(S_t,A_t,\boldsymbol{w})^{\top}\boldsymbol{w},
    \label{eq:q_sf_w}
\end{equation}
where $\psi \in \mathbb{R}^{n}$ captures the expected discounted occupancy of features, and $\boldsymbol{w} \in \mathbb{R}^{n}$ parameterizes the reward function \citep{Borsa_2018}.

Canonically, SFs for a state-action pair $(s, a)$ under a policy $\pi$ are defined as:
\begin{equation}
   \psi^\pi(s, a) = \mathbb{E}^\pi\left[\sum_{i=t}^{\infty} \gamma^{i-t} \phi(S_{i+1}) \mid S_t=s, A_t=a\right],
\end{equation}
where $\phi \in \mathbb{R}^{n}$ denotes basis features \citep{barreto2017successor}. The reward can be expressed as a linear function of these features:
\begin{equation}
    R_{t+1} = \phi(S_{t+1})^{\top}\boldsymbol{w}.
    \label{eq:r_phi_w}
\end{equation}

Prior work has primarily leveraged SFs $\psi$ for \textit{transfer learning under stationary dynamics}, where the transition function, $p(s' \mid s, a)$, remains fixed and only the reward parameters $\boldsymbol{w}$ change across tasks. In such settings, SFs enable efficient generalization by reusing learned predictive representations across different reward functions \citep{barreto2017successor, Borsa_2018}.

In contrast, this work considers environments with \textit{non-stationary transition dynamics}, where the underlying dynamics evolve continuously over time, induced by the time-varying latent parameter $\omega_t$. In our settings, both basis features $\phi$ and SFs $\psi$ must adapt to changing dynamics, $p(s' \mid s, a; \omega_t)$, potentially leading to instability and interference. This raises the key question of whether SFs alone can remain effective under naturalistic, continuously changing dynamics, whether they suffer from limitations in stability or plasticity, and, if so, how these limitations can be mitigated.

To study this, we build on Simple SFs \citep{chua2024learning}, which learn SFs directly during interaction without auxiliary losses or pre-training. Both $\psi$ and $\boldsymbol{w}$ are learned jointly via:
\begin{equation}
    L_w = \frac{1}{2} \left \|  R_{t+1} - \overline{\phi}(S_{t+1})^\top \boldsymbol{w} \right \|^2,
    \label{eq:r_pr_loss}
\end{equation}
\begin{equation}
    L_{\psi} = \frac{1}{2}\left \| \hat{y} - \psi(S_t, A_t, \boldsymbol{w})^{\top}\boldsymbol{w} \right \|^2,
    \label{eq:sf_td_loss}
\end{equation}
where $\overline{\phi}(S_{t+1})$ is the L2-normalized feature representation treated as constant via a stop-gradient operator. The bootstrapped target is:
\begin{equation}
    \hat{y} = R_{t+1} + \gamma \; \max_{a'} \; \psi(S_{t+1},a', \boldsymbol{w})^{\top}\boldsymbol{w}.
    \label{eq:q_sf_target}
\end{equation}
\begin{figure*}[t]
    \centering
    \includegraphics[width=0.9\textwidth]{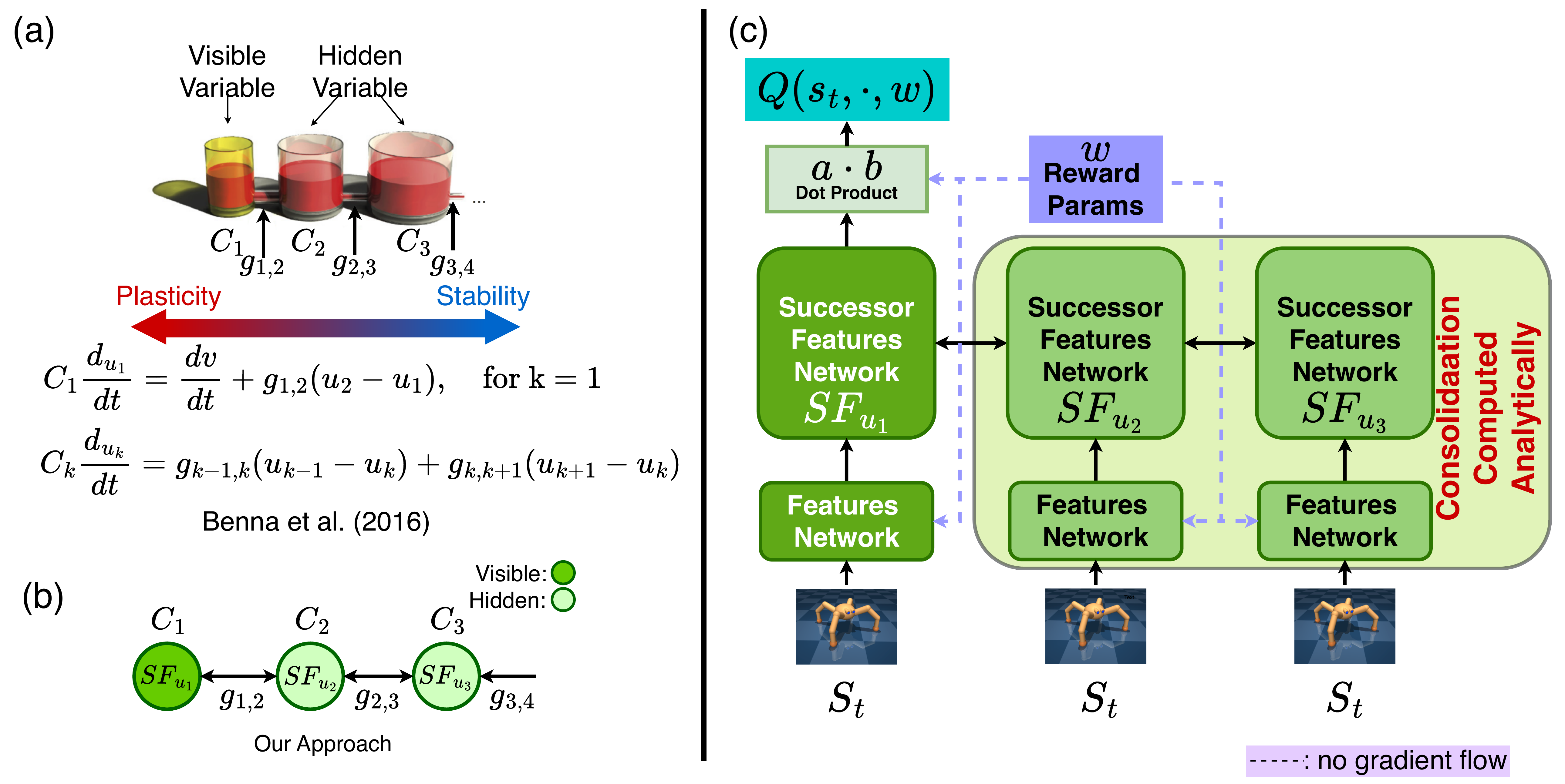}
    \caption{\textbf{a:} Neuro-inspired synaptic consolidation model adapted from \citep{benna2016computational}. The visible variable, $u_1$, represents the synaptic efficacy $v$, while downstream hidden variables $u_2, u_3, ...$ interact bidirectionally across timescales, with beaker capacities $C_1 < C_2 < ..., < C_K$ and tube widths representing flow strength $g_{1,2} > g_{2,3} >  ..., > g_{K,K+1}$ controlling the rate of interaction between the variables. Together, the beaker sizes ($C_k$) and flow strength ($g_{k,k+1}$) govern the effective timescales of plasticity and stability. \textbf{b:} The synaptic efficacy $v$ is replaced by the parameters of SFs, thus allowing SFs to be learned across different timescales. \textbf{c:} Our architectural design. See section \ref{section:our_method} for more details on training the system.}
    \label{fig:synaptic_consolidation_schematic}
\end{figure*}
\subsubsection{Role of the Reward Parameters}
In our setting, the latent variable $\omega_t$ induces time-varying transition dynamics, but does not define a sequence of discrete tasks with distinct reward functions. Instead, the reward remains a function of the state $S_t$ through the basis features $\phi$ (Eq.~\ref{eq:r_phi_w}). 

Accordingly, $\boldsymbol{w}$ should be interpreted as a vector of reward parameters that linearly combines basis features $\phi$ to predict rewards, rather than as a task identifier. While $\boldsymbol{w}$ is learned online, it does not track the non-stationarity induced by $\omega_t$. Instead, adaptation to changing dynamics is primarily captured by the basis features $\phi$ and the SFs $\psi$. Following \citet{Borsa_2018}, we condition SFs on the reward parameter $\boldsymbol{w}$, i.e., $\psi(s,a,\boldsymbol{w})$, ensuring consistency with the decomposition $Q(s,a,\boldsymbol{w}) = \psi(s,a,\boldsymbol{w})^\top \boldsymbol{w}$.

\subsection{Neuro-inspired Synaptic Consolidation Mechanism}
\label{section:neuro_inspired_synaptic_consolidation}
In this work, we revisit the Synaptic Consolidation mechanism (SC)  \citep{benna2016computational} which has been previously adapted to deep RL \citep{kaplanis2018continual, kaplanis2019policy}. Despite these adaptations, SC remains far less studied than Systems Consolidation \citep{mcclelland1995there} in AI. 

We revisit SC because (i) it can be learned without explicit or implicit task statistics, (ii) it generalizes beyond dual fast/slow schemes \citep{anand2023prediction, lee2024slow} by supporting multiple timescales, (iii) its linear-chain formulation provides a principled balance of stability and plasticity without ad-hoc mechanisms, and (iv) it has already been shown to be effective in deep RL \citep{kaplanis2018continual, kaplanis2019policy}. In summary, the multi-timescale mechanism promotes both rapid adaptation and stability. Fast timescale components enable behaviour consistent with \textit{functional plasticity}, defined as the observable ability of an agent to adapt under changing dynamics.

To make this concrete, we now outline the SC mechanism --- originally proposed to model how synaptic strength stabilizes over time \citep{benna2016computational} --- which can be understood as a chain of $K$ interacting variables, $u_1, u_2, \ldots, u_K$, each associated with a capacity $C_k \in \mathbb{Z}{+}$. The first variable $u_1$, corresponds to visible synaptic efficacy $v$, i.e., the strength of the connection between two neurons, and is the most plastic component (see Figure \ref{fig:synaptic_consolidation_schematic}a. for a schematic of this system). Its dynamics are:
\begin{align}
    C_1 \frac{d_{u_1}}{dt} = \frac{dv}{dt}  + g_{1,2}(u_2 - u_1) , \quad \text{for k = 1}
    \label{eq:sc_first_variable}
\end{align}
where $g_{1,2} \in \mathbb{R}$ determines the flow strength between $u_1$ and $u_2$. 

Interior variables $u_k$ (for k = 2,3, \ldots, K-1), interact bidirectionally with their two neighbors:
\begin{align}
    C_k \frac{d_{u_k}}{dt} = g_{k-1, k}(u_{k-1} - u_k) + g_{k,k+1}(u_{k+1} - u_k)
    \label{eq:sc_other_variables}
\end{align}
with $g_{k-1, k}, g_{k,k+1} \in \mathbb{R}$. Finally, the last variable $u_K$, has no downstream neighbor. Thus, setting $u_{K+1} \leftarrow 0$ produces a natural leak term that induces decay:
\begin{align}
    C_K \frac{d_{u_K}}{dt} = g_{K-1, K}(u_{K-1} - u_K) + g_{K,K+1}(- u_K)
    \label{eq:sc_last_variables}
\end{align}
 Together, the capacity $C_k$ and the flow strength $g_{k,k+1}$ define the continuous timescales of plasticity and stability of each variable $u_k$. To implement these dynamics in RL, which operates in discrete steps, we discretize them with Euler's method.

 \begin{figure*}[t]
    \begin{center}
    \includegraphics[width=0.85\textwidth]{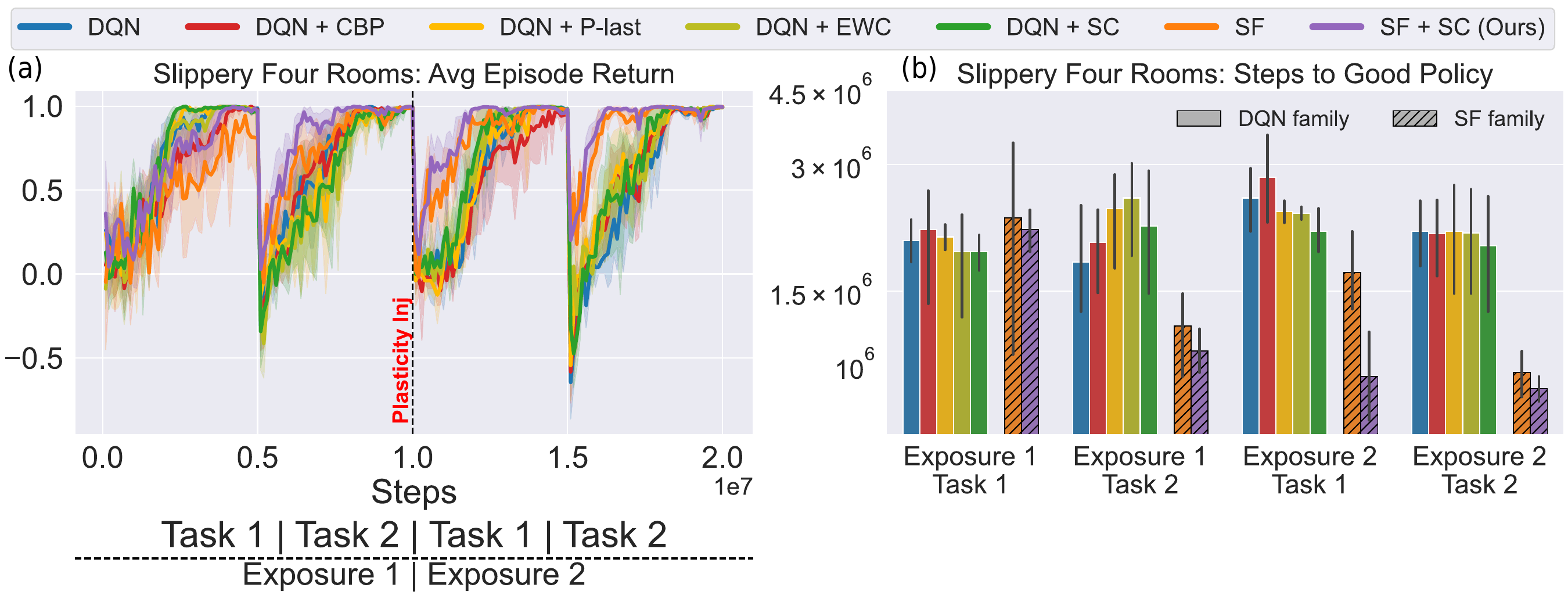}
    \caption{Results from Slippery Four Rooms with naturalistic, continual evolving slip dynamics that randomly replace actions. \textbf{(a): } Average return across two sequential tasks (Task 1 and 2), each repeated twice (Exposure 1 and 2). In DQN+P-last (yellow), plasticity injection is applied midway through training by randomly re-initializing the last layer’s parameters. \textbf{(b): } Steps to reach a predefined performance threshold (fewer is better). Overall, stability-preserving methods (EWC, SC) outperform plastic ones (P-last), with further gains from combining SFs with SC (SF+SC).}
    \label{fig:slippery_four_rooms_results}
    \end{center}
\end{figure*}

\begin{figure}[ht]
  \centering
  \includegraphics[width=0.3\textwidth]{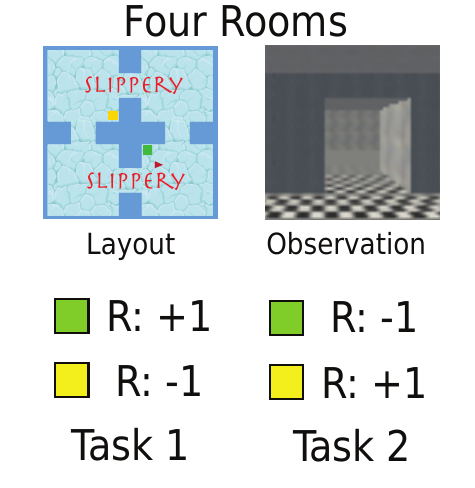} %
  \caption{Slippery Four Rooms}
  \label{fig:slippery_four_room_main}
  \vspace{-5pt}
\end{figure}

\section{Learning Successor Features with Synaptic Consolidation}
\label{section:our_method}
In line with prior work which discretize synaptic consolidation by replacing synaptic efficacy $v$ with the parameters of a Q-value function \citep{kaplanis2018continual} or a policy \citep{kaplanis2019policy}, \emph{we instead apply synaptic consolidation to the parameters of the SFs.} Specifically, each variable $u_k$ is mapped to the corresponding SF parameters $\theta_k \in \mathbb{R}^n$, yielding $\psi_{u_k} = \psi_{\theta_{u_k}} \in \mathbb{R}^{n}$, where $\psi$ denotes the SFs. For brevity, we will write $\psi_{u_k}$ to denote $\psi_{\theta_{u_k}}$. We next derive the learning rules using Euler's method. 

Let $\eta_k = \Delta t / C_k$. There will be two learning phases ($t+\frac{1}{2}$ and $t+1$) for the most plastic variable, SF $\psi_{u_1}$. At phase $t+\frac{1}{2}$, SF $\psi_{u_1}$ is learned via optimizing Q-SF-TD loss $L_{\psi}$ (Eq. \ref{eq:sf_td_loss}):
\begin{align}
    \psi_{u_1}^{t+\frac{1}{2}} =  \psi_{u_1}^{t} - \alpha \nabla_{\psi_{u_1}} L_{\psi_{u_1}} 
\end{align}
where $\alpha \in \mathbb{R}$ is the learning rate. At the second phase, $t+1$, we update $\psi_{u_1}$ and the rest of the SFs variable ($\psi_{u_2}, \psi_{u_3}, \ldots, \psi_{u_K}$) using the Euler update. For the first variable $\psi_{u_1}$:
\begin{align}
    \psi_{u_1}^{t+1} =  \psi_{u_1}^{t+\frac{1}{2}} + \eta_1 \big[g_{1,2}\big(\psi_{u_2}^t - \psi_{u_1}^{t+\frac{1}{2}}\big)\big]
    \label{eq:euler_update_first}
\end{align}
For the interior variables $k=2,\ldots, K-1$:
\begin{align}
    \psi_{u_k}^{t+1} &= \psi_{u_k}^{t} + \eta_{k} \big[g_{k-1, k}(\psi_{u_{k-1}}^{t} - \psi_{u_{k}}^{t}) \nonumber \\
    &+ g_{k,k+1}(\psi_{u_{k+1}}^{t} - \psi_{u_{k}}^{t})\big]
    \label{eq:euler_update_interior}
\end{align}
For the last variable $K$:
\begin{align}
    \psi_{u_K}^{t+1} &= \psi_{u_K}^{t} + \eta_{K} \big[g_{K-1, K}(\psi_{u_{K-1}}^{t} - \psi_{u_{K}}^{t}) \nonumber \\
    &- g_{K,K+1}(\psi_{u_{K}}^{t})\big]
    \label{eq:euler_update_last}
\end{align}

We provide a pseudocode of the algorithm in Appendix \ref{section:pseudocode}. It is important that these updates (Eqs. \ref{eq:euler_update_first}, \ref{eq:euler_update_interior} and \ref{eq:euler_update_last}) are performed using Stochastic Gradient Descent (SGD) rather than adaptive approaches like Adaptive Moment Estimation (Adam, \citep{kingma2014adam}), which do not preserve the timescales information. We provide a proof sketch in Appendix \ref{section:mathematical_proof} supporting this claim.

\section{Experimental results}
\label{expresults}
\begin{figure*}[t]
    \begin{center}
    \includegraphics[width=0.9\textwidth]{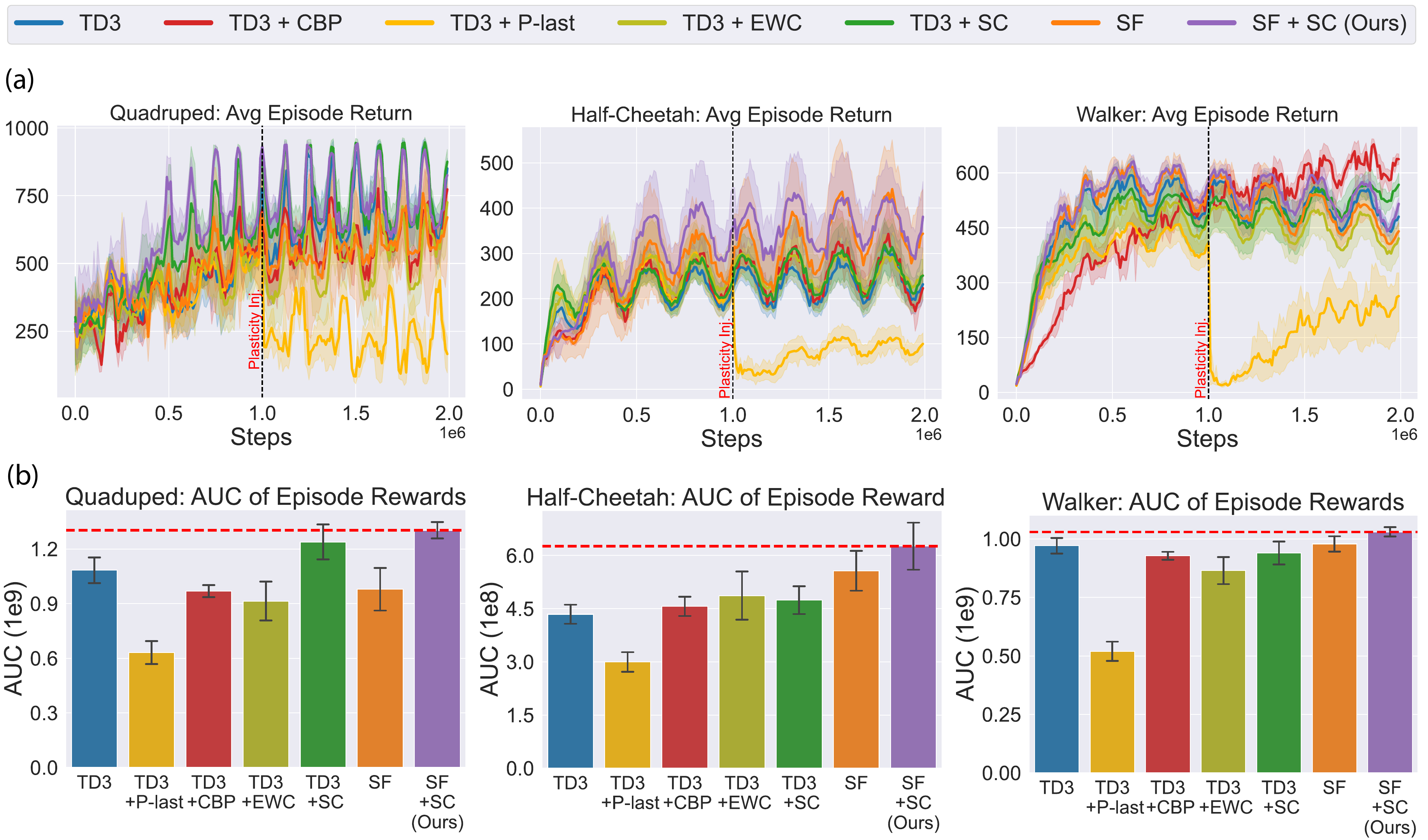}
    \caption{Results from MuJoCo, where agent embodiments undergo continuous mass changes during training. \textbf{(a): }Average episode return. Plasticity injection for TD3+P-last (yellow) was performed halfway through the training. \textbf{(b): }Area under the Curve (AUC) of returns in (a). Together, the plots show that stability-preserving methods (EWC and SC) outperform plastic ones (CBP and P-last), with further gains achieved by combining synaptic consolidation with SFs (SF+SC). Results for Humanoid can be seen in Figure \ref{fig:motivation}(c-d).}
    \label{fig:mujoco_results}
    \end{center}
\end{figure*}

In this study, we consider two environments. The first is a slippery variant of the 3D Four Rooms environments, adapted from \citet{chua2024learning}, which mimics conditions such as walking on wet or icy surfaces. In this environment, the ``slippery'' event refers to the agent's action being randomly replaced by an alternative, based on a probability value sampled from a noisy sine function to simulate continual dynamics shifts (Figure \ref{fig:slippery_all_four_rooms_env} in Appendix \ref{section:environment}).

The agent alternates between two tasks, where in the first task it receives a reward of +1 for reaching the green box and -1 for reaching the yellow box, and in the second task the rewards are reversed. The agent cycles through this two-task sequence twice and only receives egocentric pixel observations (Figure \ref{fig:slippery_four_room_main}).

The second environment is the MuJoCo suite \citep{todorov2012mujoco}, using the DeepMind Control Suite (DMC) \citep{tunyasuvunakool2020dm_control}, which provides an accessible framework for modifying dynamics of the embodiments. We focus on four embodiments, Half-cheetah, Walker, Quadruped and Humanoid, ordered by increasing complexity in terms of their observation and action spaces. The agents are rewarded for walking or running forward. To simulate the continual dynamics shifts, at every ten episodes, we perturbed the embodiment's mass by sampling from a noisy sine function. See Appendix~\ref{section:code} for code availability.

For base models, we use Double Deep Q-Network (DQN) \citep{Hasselt_2015} for Slippery Four Rooms environment and the Deterministic Policy Gradient algorithm \citep{silver2014deterministic} with twin critics for MuJoCo (TD3) \citep{fujimoto2018addressing}. For SFs, we use Simple SFs \citep{chua2024learning} which can be learned without auxiliary losses. We selected these models due to their flexibility, which makes extensions with plasticity injections or synaptic consolidation mechanisms rather straightforward.

We compared against baselines that do not require task statistics. For plasticity, we reset subsets of parameters—last layer (DQN+P-last, TD3+P-last) \citep{nikishin2023deep} or least used via continual backprop (TD3+CBP) \citep{dohare2024loss}, the latter more effective in MuJoCo. For stability, we used online Elastic Weight Consolidation (EWC) \citep{schwarz2018progress} for Q-values and SFs (DQN+EWC, TD3+EWC, SF+EWC). We also included synaptic consolidation (SC, Figure \ref{fig:synaptic_consolidation_schematic}) with Q-value (DQN+SC, TD3+SC) and SF variants (SF+SC). Results are averaged over 5 seeds. 

Our experiments are designed to address the following questions:
\begin{enumerate}
    \item When agents undergo naturalistic, continual non-stationary shifts, is the primary bottleneck one of plasticity or stability? If stability is the limiting factor, which consolidation mechanism is more effective? EWC or SC? 
    \item Second, is it more effective to consolidate parameters of Q-value or Successor Features?
    \item Third, will SC remain effective under continual non-periodic or stochastic drifts?
\end{enumerate}

\subsection{Stability as the Bottleneck: Synaptic Consolidation outperforms EWC}
\begin{figure*}[htb]
    \begin{center}
    \includegraphics[width=0.9\textwidth]{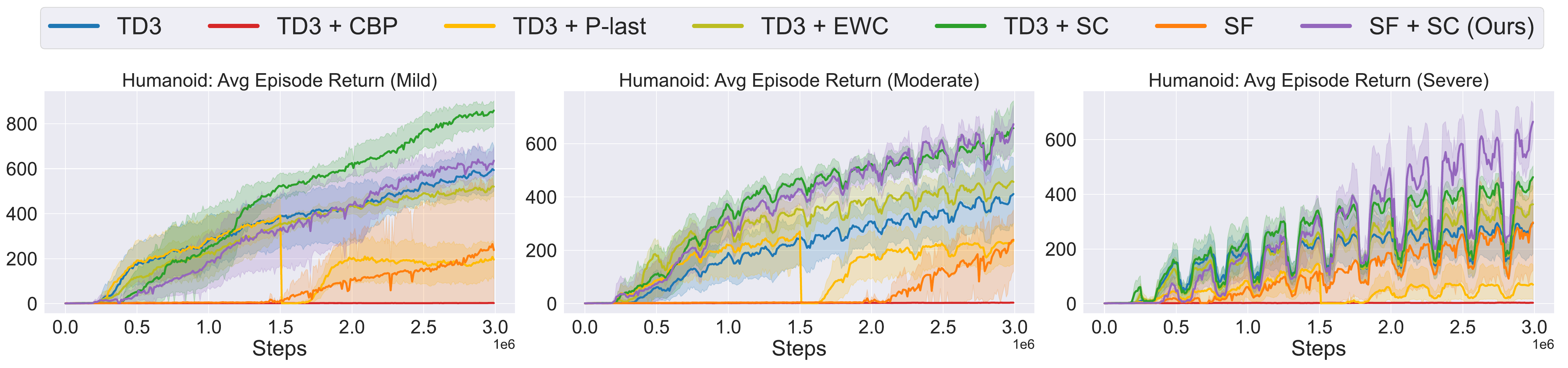}
    \caption{Quantification of mass changes for the Humanoid embodiment. We consider three levels of mass dynamics variation: mild (25\%), moderate (50\%), and severe (100\%), corresponding to the maximum change allowed before the physical simulation becomes unstable. Across these settings, plasticity-preserving methods (CBP, P-last) are less effective than approaches incorporating synaptic consolidation (SC). Under moderate—and especially severe—conditions, where the agent experiences substantial changes in dynamics, applying SC to Successor Features (SF + SC) yields the best performance. In contrast, under mild changes, the benefits of SC are limited, as the environment remains close to stationary. For other results, please refer to Appendix \ref{section:quantifying_non_stationarity}.}
\label{fig:humanoid_mass_quantification_results}
    \end{center}
\end{figure*}

\begin{figure*}[htb]
    \begin{center}
    \includegraphics[width=0.9\textwidth]{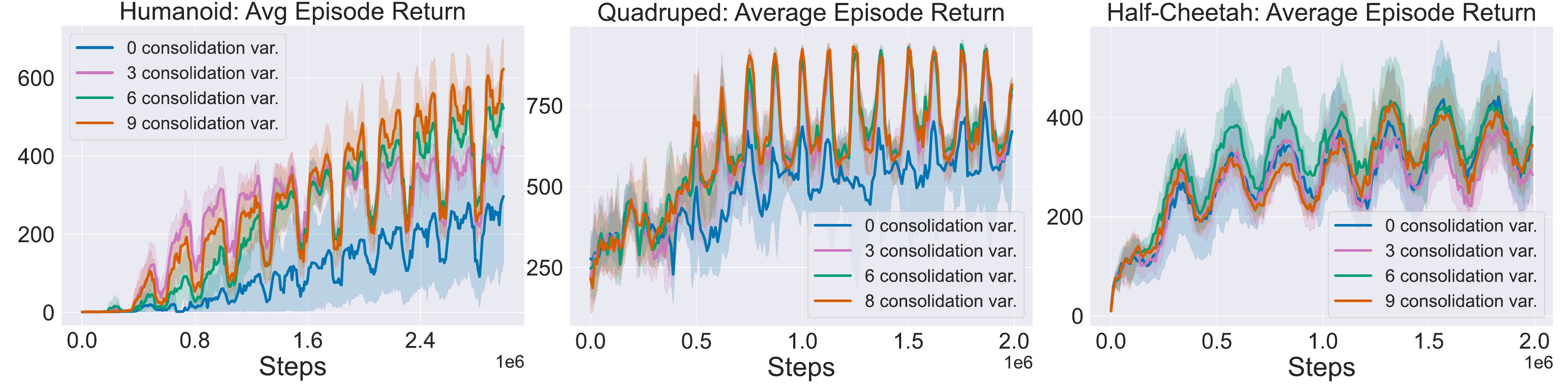}
    \caption{Analysis of timescales in MuJoCo using our model, SF + SC. More consolidation variables (6–9) improve learning efficiency, highlighting the benefit of slower timescales. Zero variables correspond to the Simple SF agent. See Appendix \ref{section:num_consolidation_modules_analysis} for results in the Slippery Four Rooms environment.}
    \label{fig:num_timescale_variables_analysis_results}
    \end{center}
\end{figure*}

To study the question of plasticity versus stability, we first evaluated the performance in the slippery Four Rooms environment using DQN, along with its plasticity injection variants (P-last) and stability-preserving variants (EWC and SC). The results in Figure \ref{fig:slippery_four_rooms_results} show that stability-preserving models (DQN + EWC and DQN + SC) consistently outperformed the plasticity-injection model (DQN + P-last), with synaptic consolidation (DQN + SC) achieving higher learning efficiency. 

We next evaluated performance in the MuJoCo suite using TD3, together with its two plasticity-injection variants (P-last and CBP) and stability-preserving variants (EWC and SC). The results in Figure \ref{fig:mujoco_results} show that stability-preserving models (TD3 + EWC and TD3 + SC) consistently outperformed the plasticity-injection model (TD3 + P-last and TD3 + CBP). Once again, synaptic consolidation (TD3 + SC) achieved higher learning efficiency, particularly in the more complex embodiments, Humanoid (Figure \ref{fig:motivation}) and Quadruped (Figure \ref{fig:mujoco_results}a and b). 

Both sets of results indicate that \textit{stability is the primary bottleneck} as agents lacking stability fail to learn effectively in the naturalistic, continual non-stationary environments. The results further showed that the synaptic consolidation (SC) mechanism is more effective than EWC. 

Next, we asked, what exactly should be stabilized: the Q-value parameters themselves, or the parameters of the underlying representations such as SFs?

\subsection{What should be stabilized: Q-values or Successor Features?}
As many stability preserving methods focus on stabilizing parameters of Q-value functions, we investigate if SFs could be a better target for consolidation. To address this, we evaluated a SFs variant combined with synaptic consolidation (SF + SC) in both slippery Four Rooms and the MuJoCo suite. The results for both slippery Four Rooms (Figure \ref{fig:slippery_four_rooms_results}) and the MuJoCo suite (Figures \ref{fig:motivation} and \ref{fig:mujoco_results}) showed that SF + SC consistently improves performance compared to Q-value based consolidation. We also compared with EWC (Appendix \ref{section:sc_ewc_qval_vs_sf_analysis}), which revealed that while SC is more effective than EWC overall, only the combination of SFs with SC yields consistently strong performance.

\begin{figure*}[htb]
    \begin{center}
    \includegraphics[width=0.9\textwidth]{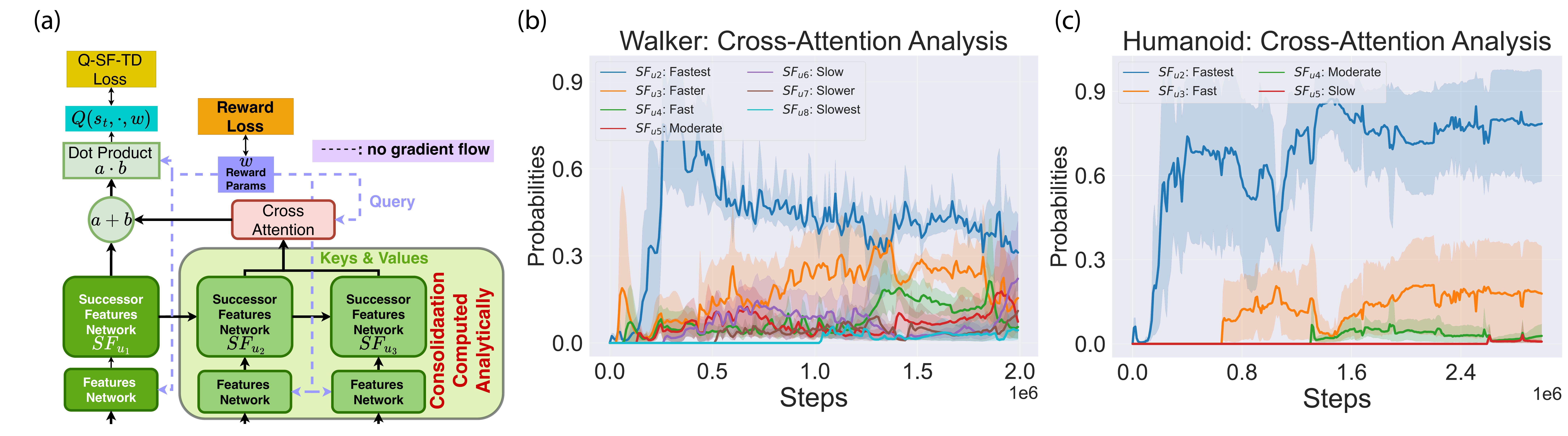}
    \caption{Cross-Attention analysis of individual consolidations, replacing memory recall via backflow. \textbf{(a): }Implementation design. \textbf{(b-c): }Attention probabilities over consolidation variables, where higher probability indicates greater contribution to learning. Full results are provided in Appendix \ref{section:cross_attention_analysis}.} \label{fig:cross_attention_schematic_results}
    \end{center}
\end{figure*}

\subsection{Quantifying Non-Stationarity via Mass Perturbations}
To systematically study how the magnitude of environmental change affects learning dynamics, we parameterize non-stationarity through controlled perturbations of body mass. Specifically, we consider three regimes of variation—mild (25\%), moderate (50\%), and severe (100\%)—defined relative to the maximum perturbation before the MuJoCo simulation becomes unstable.\footnote{Unless otherwise stated, the main results presented correspond to the severe (100\%) setting.}

Across Humanoid (Figure \ref{fig:humanoid_mass_quantification_results}) and other embodiments, we observe a clear dependence on the degree of non-stationarity. Under mild variation, where the transition dynamics remain close to stationary, applying synaptic consolidation (SC) to Q-values is most effective. In contrast, as the magnitude of mass perturbation increases, SC applied to Successor Features (SFs) becomes increasingly advantageous, particularly under moderate and severe regimes where the environment undergoes substantial and continuous changes. 

\subsection{Robustness to Non-Periodic and Stochastic Drift}
To test robustness beyond periodic drift, we replace the noisy sine modulation with either a non-periodic sine function (Appendix \ref{section:mujoco_non_periodic}) or Ornstein–Uhlenbeck (OU) process (Appendix \ref{section:mujoco_ou}). Results from mass changes induced by a non-periodic sine function (Appendix \ref{section:plasticity_stability_analysis_non_periodic}) and OU process (Appendix \ref{section:plasticity_stability_analysis_OU}) show that applying SC to SFs continues to improve learning performance.

\section{Analyzing Multi-Timescale Contributions}
\label{section:analysis}
To gain insights into why combining SFs and SC yields an effective model, we perform ablation studies by varying the number of consolidation variables, and we complement this with a cross-attention \citep{dosovitskiy2020image} analysis to analyze the relative contributions of individual variables. 
\subsection{Do fast or slow timescale variables matter more for learning?}
\label{section:timescales_sweep_analysis}

In this analysis, we varied over the number of consolidation variables (3, 6, or 9), with fewer variables yielding greater plasticity, and more variables yielding greater stability. The aim was to assess whether the inclusion of slower timescale variables improve policy learning. Figure \ref{fig:num_timescale_variables_analysis_results} illustrates the results using the embodiments from MuJoCo (see Appendix \ref{section:num_consolidation_modules_analysis} for more results). We found that using six or more timescale variables leads to better learning performance. Zero variables correspond to the Simple SF agent. For the slippery four rooms environment (Figure \ref{fig:slippery_fourrooms_num_consolidation_variables_analysis} in Appendix \ref{section:timescales_sweep_analysis}), we observed a similar trend: using six or nine consolidation variables leads to better performance. Together, these results demonstrated that preserving stability through the use of slow timescales is crucial in our  settings.

\subsection{What does cross-attention reveal about the contributions of consolidation variables?}
\label{section:cross_attention_results}
While varying over the number of consolidation variables provides a coarse measure of their utility, it does not reveal which specific variables contribute most. At the same time, using a cross-attention mechanism prevents the need for information to propagate gradually via the flow strength ($g_{k,k+1}$), instead providing an instant readout from all the consolidation variables (see Figure \ref{fig:cross_attention_schematic_results}a).

We adapt the cross-attention mechanism by letting the reward weight vector $w$ serves as the query, while the SF consolidation variables (excluding the most plastic $SF_{u_{1}}$) serve as keys and values. The softmax probabilities from the cross-attention mechanism (Figure \ref{fig:cross_attention_schematic_results}b and c) showed that the faster timescale variables, particularly $SF_{u_2}$ and $SF_{u_3}$, received the highest attention. However, slower variables also contributed. Together, these findings suggest that fast timescales drive most learning, while slower ones provide complementary stability.

\subsection{Can Larger Networks Replace Multi-Timescale Consolidation?}
\label{section:large_networks_analysis}
\begin{figure}[htb]
    \begin{center}
    \includegraphics[width=0.4\textwidth]{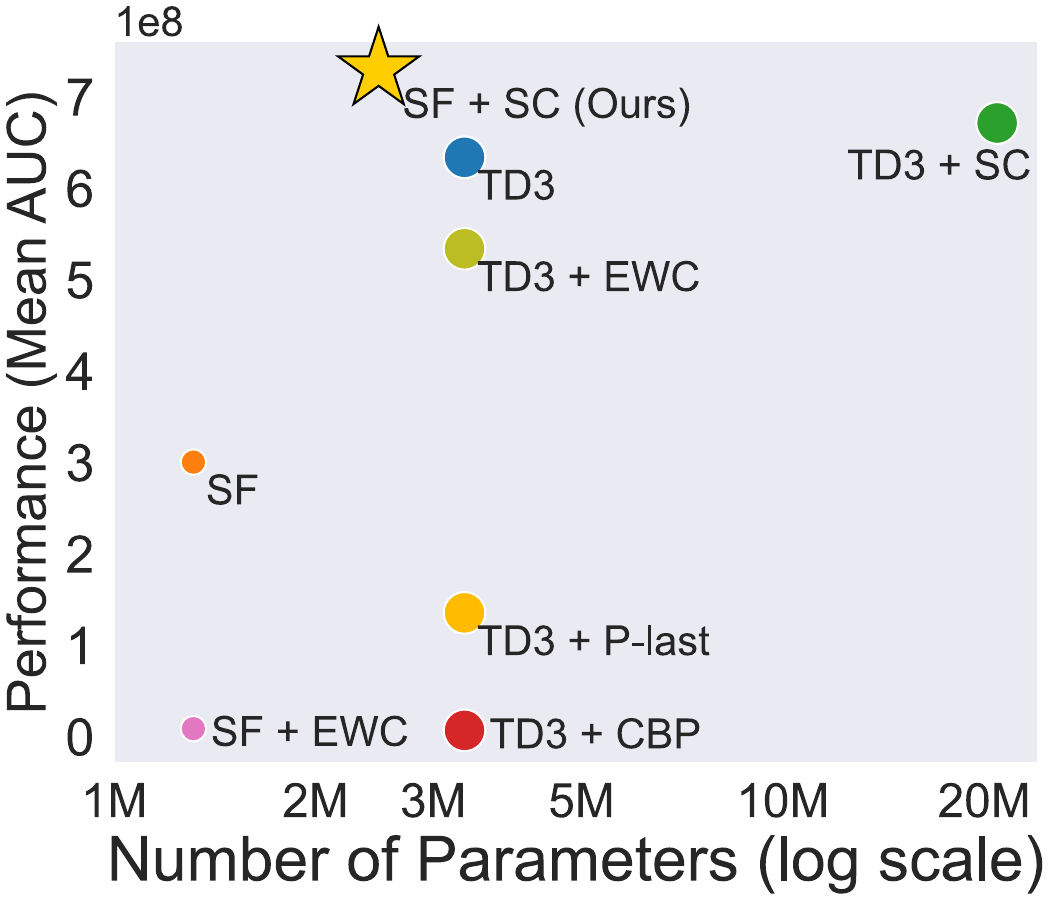}
    \caption[Larger Network Results using Humanoid]{Capacity analysis using Humanoid. X-axis shows the number of parameters, while the y-axis shows performance measured by area under the curve (AUC). Increasing the parameter count of TD3 and its variants did not consistently improve performance compared to SF + SC (star), suggesting the contribution of consolidating SFs beyond network capacity scaling alone.} \label{fig:larger_network_results}
    \end{center}
    \vspace{-15pt}
\end{figure}
Since each consolidation variable introduces an additional set of parameters, improvements from synaptic consolidation could potentially be attributed to increased model capacity rather than the consolidation dynamics themselves. To control for this possibility, we compare against enlarged baseline networks (TD3, TD3 + CBP, TD3 + EWC, TD3 + P-last) with parameter counts exceeding those of the SF-based models (SF, SF + EWC, SF + SC).

The results in Figure~\ref{fig:larger_network_results} show that, despite substantially increasing their capacity, TD3 and its variants fail to match the performance of SF + SC, which achieves the strongest performance while using significantly fewer parameters. This suggests that the gains do not arise solely from increased model size, but rather from the combination of predictive representations and multi-timescale consolidation dynamics. From a scalability perspective, these findings further indicate that our approach is parameter-efficient and capable of
 achieving strong continual learning performance.

\section{Discussion}
\label{section:discussion}
In this work, we introduced naturalistic continual non-stationarity benchmarks to study the stability–plasticity trade-off in deep RL. All our experimental results suggest that stability is often the primary bottleneck, and that maintaining plasticity (e.g., reset-based methods) is insufficient. It is important to note that we do not directly measure plasticity, and evaluate adaptation through learning performance.

We find that combining SC with SFs yields a multi-timescale learning system in which slower components preserve stable predictive structure while faster components enable rapid adaptation. Cross-attention analysis indicates that different timescales contribute differently to behavior. Importantly, this stability–plasticity balance does not emerge from policy constraints alone. Compared with Proximal Policy Optimization (PPO) \citep{schulman2017proximal}, which relies on trust-region updates to stabilize learning, SFs combined with SC exhibited substantially greater robustness under continuous non-stationarity (Appendix \ref{section:ppo_comparison}).

However, several limitations remain. First, the method is restricted to SGD-based updates, as combining it with adaptive optimizers like Adam can lead to instability. Second, it introduces computational overhead due to non-parallelizable analytical updates, which grows with the number of timescales. More extreme settings such as multi-agents remain interesting directions for future research.

\section*{Impact statement}
Our studies primarily revolve around navigation tasks and embodiments simulations, with the techniques developed being most pertinent to the field of robotics. Given this specific focus, the broader societal implications of our work are likely to be quite limited. 

\section*{Acknowledgments}
We would like to express our deepest gratitude to Christos Kaplanis for his tremendous support throughout this project. From its earliest stages, Christos generously shared his expertise, insights, and experience on applying synaptic consolidation mechanisms to value and policy networks, building on his prior work~\cite{kaplanis2018continual,kaplanis2019policy}. His guidance and discussions played an important role in shaping many of the ideas explored in this paper, and we are sincerely grateful for his encouragement and collaboration throughout this journey.

We would also like to thank Guillaume Lajoie, Razvan Pascanu, Claudia Clopath, Rui Ponte Costa, Marcus Benna, and Stefano Fusi for insightful discussions on continual reinforcement learning and the role synaptic consolidation can play in supporting continual adaptation and stability.

We would also like to thank Paul Masset, Isabeau Pr\'emont-schwarz, Nanda Harishankar Krishna, Roy Henha Eyono, Hafez Ghaemi, and Maren Wehrheim for their thoughtful feedback and for reviewing earlier drafts of the manuscript. 

We are also grateful for the wonderful research community at Mila\footnote{\url{https://mila.quebec/en}}, McGill University, and in Montr\'eal for fostering an inspiring and collaborative environment that motivated us to pursue ambitious and interdisciplinary research at the intersection of Neuroscience and Artificial Intelligence (NeuroAI).

Raymond Chua was supported by the DeepMind Graduate Award and UNIQUE Excellence Scholarship (PhD). We extend our gratitude to the FRQNT Strategic Clusters Program (2020-RS4-265502 - Centre UNIQUE - Quebec Neuro-AI Research Center).

Blake A. Richards was supported by NSERC (Discovery Grant RGPIN-2020- 05105, RGPIN-2018-04821; Discovery Accelerator Supplement: RGPAS-2020-00031), CIFAR (Canada AI Chair; Learning in Machine and Brains Fellowship), and by funds provided by the National Science Foundation and DoD OUSD (R\&E) under Cooperative Agreement DBI-2229929 (The NSF AI Institute for Artificial and Natural Intelligence).

This research was also enabled by computational resources provided by Calcul Qu\'ebec \footnote{\url{https://www.calculquebec.ca/}} and the Digital Research Alliance of Canada \footnote{\url{https://alliancecan.ca/en}}. The authors acknowledge the material support of NVIDIA in the form of computational resources.

Last but not least, we are also grateful to the anonymous reviewers whose insightful comments and suggestions significantly enhanced the quality of this manuscript.

\bibliography{example_paper}

\begin{thebibliography}{54}
\providecommand{\natexlab}[1]{#1}
\providecommand{\url}[1]{\texttt{#1}}
\expandafter\ifx\csname urlstyle\endcsname\relax
  \providecommand{\doi}[1]{doi: #1}\else
  \providecommand{\doi}{doi: \begingroup \urlstyle{rm}\Url}\fi

\bibitem[Abbas et~al.(2023)Abbas, Zhao, Modayil, White, and Machado]{abbas2023loss}
Abbas, Z., Zhao, R., Modayil, J., White, A., and Machado, M.~C.
\newblock Loss of plasticity in continual deep reinforcement learning.
\newblock In \emph{Conference on lifelong learning agents}, pp.\  620--636. PMLR, 2023.

\bibitem[Abel et~al.(2023)Abel, Barreto, Van~Roy, Precup, van Hasselt, and Singh]{abel2023definition}
Abel, D., Barreto, A., Van~Roy, B., Precup, D., van Hasselt, H.~P., and Singh, S.
\newblock A definition of continual reinforcement learning.
\newblock \emph{Advances in Neural Information Processing Systems}, 36:\penalty0 50377--50407, 2023.

\bibitem[Anand \& Precup(2023)Anand and Precup]{anand2023prediction}
Anand, N. and Precup, D.
\newblock Prediction and control in continual reinforcement learning.
\newblock \emph{Advances in Neural Information Processing Systems}, 36:\penalty0 63779--63817, 2023.

\bibitem[Barreto et~al.(2017)Barreto, Dabney, Munos, Hunt, Schaul, van Hasselt, and Silver]{barreto2017successor}
Barreto, A., Dabney, W., Munos, R., Hunt, J.~J., Schaul, T., van Hasselt, H.~P., and Silver, D.
\newblock Successor features for transfer in reinforcement learning.
\newblock \emph{Advances in neural information processing systems}, 30, 2017.

\bibitem[Bellemare et~al.(2013)Bellemare, Naddaf, Veness, and Bowling]{bellemare2013arcade}
Bellemare, M.~G., Naddaf, Y., Veness, J., and Bowling, M.
\newblock The arcade learning environment: An evaluation platform for general agents.
\newblock \emph{Journal of artificial intelligence research}, 47:\penalty0 253--279, 2013.

\bibitem[Ben-Iwhiwhu et~al.(2022)Ben-Iwhiwhu, Nath, Pilly, Kolouri, and Soltoggio]{ben2022lifelong}
Ben-Iwhiwhu, E., Nath, S., Pilly, P.~K., Kolouri, S., and Soltoggio, A.
\newblock Lifelong reinforcement learning with modulating masks.
\newblock \emph{arXiv preprint arXiv:2212.11110}, 2022.

\bibitem[Benna \& Fusi(2016)Benna and Fusi]{benna2016computational}
Benna, M.~K. and Fusi, S.
\newblock Computational principles of synaptic memory consolidation.
\newblock \emph{Nature neuroscience}, 19\penalty0 (12):\penalty0 1697--1706, 2016.

\bibitem[Biewald(2020)]{wandb}
Biewald, L.
\newblock Experiment tracking with weights and biases, 2020.
\newblock URL \url{https://www.wandb.com/}.
\newblock Software available from wandb.com.

\bibitem[Borsa et~al.(2018)Borsa, Barreto, Quan, Mankowitz, Munos, Van~Hasselt, Silver, and Schaul]{Borsa_2018}
Borsa, D., Barreto, A., Quan, J., Mankowitz, D., Munos, R., Van~Hasselt, H., Silver, D., and Schaul, T.
\newblock Universal successor features approximators.
\newblock \emph{arXiv preprint arXiv:1812.07626}, 2018.

\bibitem[Bradbury et~al.(2018)Bradbury, Frostig, Hawkins, Johnson, Leary, Maclaurin, Necula, Paszke, Vander{P}las, Wanderman-{M}ilne, and Zhang]{jax2018github}
Bradbury, J., Frostig, R., Hawkins, P., Johnson, M.~J., Leary, C., Maclaurin, D., Necula, G., Paszke, A., Vander{P}las, J., Wanderman-{M}ilne, S., and Zhang, Q.
\newblock {JAX}: composable transformations of {P}ython+{N}um{P}y programs, 2018.
\newblock URL \url{http://github.com/google/jax}.

\bibitem[Caccia et~al.(2023)Caccia, Mueller, Kim, Charlin, and Fakoor]{caccia2023task}
Caccia, M., Mueller, J., Kim, T., Charlin, L., and Fakoor, R.
\newblock Task-agnostic continual reinforcement learning: Gaining insights and overcoming challenges.
\newblock In \emph{Conference on Lifelong Learning Agents}, pp.\  89--119. PMLR, 2023.

\bibitem[Chevalier-Boisvert et~al.(2023)Chevalier-Boisvert, Dai, Towers, de~Lazcano, Willems, Lahlou, Pal, Castro, and Terry]{MinigridMiniworld23}
Chevalier-Boisvert, M., Dai, B., Towers, M., de~Lazcano, R., Willems, L., Lahlou, S., Pal, S., Castro, P.~S., and Terry, J.
\newblock Minigrid \& miniworld: Modular \& customizable reinforcement learning environments for goal-oriented tasks.
\newblock \emph{CoRR}, abs/2306.13831, 2023.

\bibitem[Chua et~al.(2024)Chua, Ghosh, Kaplanis, Richards, and Precup]{chua2024learning}
Chua, R., Ghosh, A., Kaplanis, C., Richards, B.~A., and Precup, D.
\newblock Learning successor features the simple way.
\newblock \emph{Advances in Neural Information Processing Systems}, 37:\penalty0 49957--50030, 2024.

\bibitem[Dohare et~al.(2024)Dohare, Hernandez-Garcia, Lan, Rahman, Mahmood, and Sutton]{dohare2024loss}
Dohare, S., Hernandez-Garcia, J.~F., Lan, Q., Rahman, P., Mahmood, A.~R., and Sutton, R.~S.
\newblock Loss of plasticity in deep continual learning.
\newblock \emph{Nature}, 632\penalty0 (8026):\penalty0 768--774, 2024.

\bibitem[Dosovitskiy et~al.(2020)Dosovitskiy, Beyer, Kolesnikov, Weissenborn, Zhai, Unterthiner, Dehghani, Minderer, Heigold, Gelly, et~al.]{dosovitskiy2020image}
Dosovitskiy, A., Beyer, L., Kolesnikov, A., Weissenborn, D., Zhai, X., Unterthiner, T., Dehghani, M., Minderer, M., Heigold, G., Gelly, S., et~al.
\newblock An image is worth 16x16 words: Transformers for image recognition at scale.
\newblock \emph{arXiv preprint arXiv:2010.11929}, 2020.

\bibitem[French(1999)]{french1999catastrophic}
French, R.~M.
\newblock Catastrophic forgetting in connectionist networks.
\newblock \emph{Trends in cognitive sciences}, 3\penalty0 (4):\penalty0 128--135, 1999.

\bibitem[Fujimoto et~al.(2018)Fujimoto, van Hoof, and Meger]{fujimoto2018addressing}
Fujimoto, S., van Hoof, H., and Meger, D.
\newblock Addressing function approximation error in actor-critic methods, 2018.

\bibitem[Godwin* et~al.(2020)Godwin*, Keck*, Battaglia, Bapst, Kipf, Li, Stachenfeld, Veli\v{c}kovi\'{c}, and Sanchez-Gonzalez]{jraph2020github}
Godwin*, J., Keck*, T., Battaglia, P., Bapst, V., Kipf, T., Li, Y., Stachenfeld, K., Veli\v{c}kovi\'{c}, P., and Sanchez-Gonzalez, A.
\newblock {J}raph: {A} library for graph neural networks in jax., 2020.
\newblock URL \url{http://github.com/deepmind/jraph}.

\bibitem[Heek et~al.(2024)Heek, Levskaya, Oliver, Ritter, Rondepierre, Steiner, and van {Z}ee]{flax2020github}
Heek, J., Levskaya, A., Oliver, A., Ritter, M., Rondepierre, B., Steiner, A., and van {Z}ee, M.
\newblock {F}lax: A neural network library and ecosystem for {JAX}, 2024.
\newblock URL \url{http://github.com/google/flax}.

\bibitem[Hunter(2007)]{Hunter:2007}
Hunter, J.~D.
\newblock Matplotlib: A 2d graphics environment.
\newblock \emph{Computing in Science \& Engineering}, 9\penalty0 (3):\penalty0 90--95, 2007.
\newblock \doi{10.1109/MCSE.2007.55}.

\bibitem[Kaplanis et~al.(2018)Kaplanis, Shanahan, and Clopath]{kaplanis2018continual}
Kaplanis, C., Shanahan, M., and Clopath, C.
\newblock Continual reinforcement learning with complex synapses.
\newblock In \emph{International Conference on Machine Learning}, pp.\  2497--2506. PMLR, 2018.

\bibitem[Kaplanis et~al.(2019)Kaplanis, Shanahan, and Clopath]{kaplanis2019policy}
Kaplanis, C., Shanahan, M., and Clopath, C.
\newblock Policy consolidation for continual reinforcement learning.
\newblock \emph{arXiv preprint arXiv:1902.00255}, 2019.

\bibitem[Kaplanis et~al.(2020)Kaplanis, Clopath, and Shanahan]{kaplanis2020continual}
Kaplanis, C., Clopath, C., and Shanahan, M.
\newblock Continual reinforcement learning with multi-timescale replay (2020).
\newblock \emph{DOI: https://doi. org/10.48550/arXiv}, 2020.

\bibitem[Khetarpal et~al.(2022)Khetarpal, Riemer, Rish, and Precup]{khetarpal2022towards}
Khetarpal, K., Riemer, M., Rish, I., and Precup, D.
\newblock Towards continual reinforcement learning: A review and perspectives.
\newblock \emph{Journal of Artificial Intelligence Research}, 75:\penalty0 1401--1476, 2022.

\bibitem[Kingma \& Ba(2014)Kingma and Ba]{kingma2014adam}
Kingma, D.~P. and Ba, J.
\newblock Adam: A method for stochastic optimization.
\newblock \emph{arXiv preprint arXiv:1412.6980}, 2014.

\bibitem[Kingma \& Welling(2013)Kingma and Welling]{kingma2013auto}
Kingma, D.~P. and Welling, M.
\newblock Auto-encoding variational bayes.
\newblock \emph{arXiv preprint arXiv:1312.6114}, 2013.

\bibitem[Kirkpatrick et~al.(2017)Kirkpatrick, Pascanu, Rabinowitz, Veness, Desjardins, Rusu, Milan, Quan, Ramalho, Grabska-Barwinska, et~al.]{kirkpatrick2017overcoming}
Kirkpatrick, J., Pascanu, R., Rabinowitz, N., Veness, J., Desjardins, G., Rusu, A.~A., Milan, K., Quan, J., Ramalho, T., Grabska-Barwinska, A., et~al.
\newblock Overcoming catastrophic forgetting in neural networks.
\newblock \emph{Proceedings of the national academy of sciences}, 114\penalty0 (13):\penalty0 3521--3526, 2017.

\bibitem[Kluyver et~al.(2016)Kluyver, Ragan-Kelley, P{\'e}rez, Granger, Bussonnier, Frederic, Kelley, Hamrick, Grout, Corlay, Ivanov, Avila, Abdalla, and Willing]{Kluyver:2016aa}
Kluyver, T., Ragan-Kelley, B., P{\'e}rez, F., Granger, B., Bussonnier, M., Frederic, J., Kelley, K., Hamrick, J., Grout, J., Corlay, S., Ivanov, P., Avila, D., Abdalla, S., and Willing, C.
\newblock Jupyter notebooks -- a publishing format for reproducible computational workflows.
\newblock In Loizides, F. and Schmidt, B. (eds.), \emph{Positioning and Power in Academic Publishing: Players, Agents and Agendas}, pp.\  87 -- 90. IOS Press, 2016.

\bibitem[Lee et~al.(2024)Lee, Cho, Kim, Kim, Min, Choo, and Lyle]{lee2024slow}
Lee, H., Cho, H., Kim, H., Kim, D., Min, D., Choo, J., and Lyle, C.
\newblock Slow and steady wins the race: Maintaining plasticity with hare and tortoise networks.
\newblock \emph{arXiv preprint arXiv:2406.02596}, 2024.

\bibitem[Lillicrap et~al.(2015)Lillicrap, Hunt, Pritzel, Heess, Erez, Tassa, Silver, and Wierstra]{Lillicrap_2015}
Lillicrap, T.~P., Hunt, J.~J., Pritzel, A., Heess, N., Erez, T., Tassa, Y., Silver, D., and Wierstra, D.
\newblock Continuous control with deep reinforcement learning.
\newblock \emph{arXiv preprint arXiv:1509.02971}, 2015.

\bibitem[Lyle et~al.(2024)Lyle, Zheng, Khetarpal, van Hasselt, Pascanu, Martens, and Dabney]{lyle2024disentangling}
Lyle, C., Zheng, Z., Khetarpal, K., van Hasselt, H., Pascanu, R., Martens, J., and Dabney, W.
\newblock Disentangling the causes of plasticity loss in neural networks.
\newblock \emph{arXiv preprint arXiv:2402.18762}, 2024.

\bibitem[McClelland et~al.(1995)McClelland, McNaughton, and O'Reilly]{mcclelland1995there}
McClelland, J.~L., McNaughton, B.~L., and O'Reilly, R.~C.
\newblock Why there are complementary learning systems in the hippocampus and neocortex: insights from the successes and failures of connectionist models of learning and memory.
\newblock \emph{Psychological review}, 102\penalty0 (3):\penalty0 419, 1995.

\bibitem[McCloskey \& Cohen(1989)McCloskey and Cohen]{mccloskey1989catastrophic}
McCloskey, M. and Cohen, N.~J.
\newblock Catastrophic interference in connectionist networks: The sequential learning problem.
\newblock In \emph{Psychology of learning and motivation}, volume~24, pp.\  109--165. Elsevier, 1989.

\bibitem[Nikishin et~al.(2022)Nikishin, Schwarzer, D’Oro, Bacon, and Courville]{nikishin2022primacy}
Nikishin, E., Schwarzer, M., D’Oro, P., Bacon, P.-L., and Courville, A.
\newblock The primacy bias in deep reinforcement learning.
\newblock In \emph{International conference on machine learning}, pp.\  16828--16847. PMLR, 2022.

\bibitem[Nikishin et~al.(2023)Nikishin, Oh, Ostrovski, Lyle, Pascanu, Dabney, and Barreto]{nikishin2023deep}
Nikishin, E., Oh, J., Ostrovski, G., Lyle, C., Pascanu, R., Dabney, W., and Barreto, A.
\newblock Deep reinforcement learning with plasticity injection.
\newblock \emph{Advances in Neural Information Processing Systems}, 36:\penalty0 37142--37159, 2023.

\bibitem[Paszke et~al.(2019)Paszke, Gross, Massa, Lerer, Bradbury, Chanan, Killeen, Lin, Gimelshein, Antiga, et~al.]{paszke2019pytorch}
Paszke, A., Gross, S., Massa, F., Lerer, A., Bradbury, J., Chanan, G., Killeen, T., Lin, Z., Gimelshein, N., Antiga, L., et~al.
\newblock Pytorch: An imperative style, high-performance deep learning library.
\newblock \emph{Advances in neural information processing systems}, 32, 2019.

\bibitem[Powers et~al.(2022)Powers, Xing, and Gupta]{powers2022self}
Powers, S., Xing, E., and Gupta, A.
\newblock Self-activating neural ensembles for continual reinforcement learning.
\newblock In \emph{Conference on Lifelong Learning Agents}, pp.\  683--704. PMLR, 2022.

\bibitem[Riemer et~al.(2018)Riemer, Cases, Ajemian, Liu, Rish, Tu, and Tesauro]{riemer2018learning}
Riemer, M., Cases, I., Ajemian, R., Liu, M., Rish, I., Tu, Y., and Tesauro, G.
\newblock Learning to learn without forgetting by maximizing transfer and minimizing interference.
\newblock \emph{arXiv preprint arXiv:1810.11910}, 2018.

\bibitem[Rolnick et~al.(2019)Rolnick, Ahuja, Schwarz, Lillicrap, and Wayne]{rolnick2019experience}
Rolnick, D., Ahuja, A., Schwarz, J., Lillicrap, T., and Wayne, G.
\newblock Experience replay for continual learning.
\newblock \emph{Advances in neural information processing systems}, 32, 2019.

\bibitem[Schulman et~al.(2017)Schulman, Wolski, Dhariwal, Radford, and Klimov]{schulman2017proximal}
Schulman, J., Wolski, F., Dhariwal, P., Radford, A., and Klimov, O.
\newblock Proximal policy optimization algorithms.
\newblock \emph{arXiv preprint arXiv:1707.06347}, 2017.

\bibitem[Schwarz et~al.(2018)Schwarz, Czarnecki, Luketina, Grabska-Barwinska, Teh, Pascanu, and Hadsell]{schwarz2018progress}
Schwarz, J., Czarnecki, W., Luketina, J., Grabska-Barwinska, A., Teh, Y.~W., Pascanu, R., and Hadsell, R.
\newblock Progress \& compress: A scalable framework for continual learning.
\newblock In \emph{International conference on machine learning}, pp.\  4528--4537. PMLR, 2018.

\bibitem[Silver \& Sutton(2025)Silver and Sutton]{silver2025welcome}
Silver, D. and Sutton, R.~S.
\newblock Welcome to the era of experience.
\newblock \emph{Google AI}, 1, 2025.

\bibitem[Silver et~al.(2014)Silver, Lever, Heess, Degris, Wierstra, and Riedmiller]{silver2014deterministic}
Silver, D., Lever, G., Heess, N., Degris, T., Wierstra, D., and Riedmiller, M.
\newblock Deterministic policy gradient algorithms.
\newblock In \emph{International conference on machine learning}, pp.\  387--395. Pmlr, 2014.

\bibitem[Sokar et~al.(2023)Sokar, Agarwal, Castro, and Evci]{sokar2023dormant}
Sokar, G., Agarwal, R., Castro, P.~S., and Evci, U.
\newblock The dormant neuron phenomenon in deep reinforcement learning.
\newblock In \emph{International Conference on Machine Learning}, pp.\  32145--32168. PMLR, 2023.

\bibitem[Sutton \& Barto(2018)Sutton and Barto]{sutton2018reinforcement}
Sutton, R.~S. and Barto, A.~G.
\newblock \emph{Reinforcement learning: An introduction}.
\newblock MIT press, 2018.

\bibitem[Todorov et~al.(2012)Todorov, Erez, and Tassa]{todorov2012mujoco}
Todorov, E., Erez, T., and Tassa, Y.
\newblock Mujoco: A physics engine for model-based control.
\newblock In \emph{2012 IEEE/RSJ international conference on intelligent robots and systems}, pp.\  5026--5033. IEEE, 2012.

\bibitem[Tunyasuvunakool et~al.(2020)Tunyasuvunakool, Muldal, Doron, Liu, Bohez, Merel, Erez, Lillicrap, Heess, and Tassa]{tunyasuvunakool2020dm_control}
Tunyasuvunakool, S., Muldal, A., Doron, Y., Liu, S., Bohez, S., Merel, J., Erez, T., Lillicrap, T., Heess, N., and Tassa, Y.
\newblock dm\_control: Software and tasks for continuous control.
\newblock \emph{Software Impacts}, 6:\penalty0 100022, 2020.

\bibitem[Van~Hasselt et~al.(2016)Van~Hasselt, Guez, and Silver]{Hasselt_2015}
Van~Hasselt, H., Guez, A., and Silver, D.
\newblock Deep reinforcement learning with double q-learning.
\newblock In \emph{Proceedings of the AAAI conference on artificial intelligence}, 2016.

\bibitem[Van~Rossum \& Drake(2009)Van~Rossum and Drake]{Rossum_2009}
Van~Rossum, G. and Drake, F.~L.
\newblock \emph{Python 3 Reference Manual}.
\newblock CreateSpace, Scotts Valley, CA, 2009.
\newblock ISBN 1441412697.

\bibitem[Waskom(2021)]{Waskom2021}
Waskom, M.~L.
\newblock seaborn: statistical data visualization.
\newblock \emph{Journal of Open Source Software}, 6\penalty0 (60):\penalty0 3021, 2021.
\newblock \doi{10.21105/joss.03021}.
\newblock URL \url{https://doi.org/10.21105/joss.03021}.

\bibitem[Xie et~al.(2020)Xie, Harrison, and Finn]{xie2020deep}
Xie, A., Harrison, J., and Finn, C.
\newblock Deep reinforcement learning amidst lifelong non-stationarity.
\newblock \emph{arXiv preprint arXiv:2006.10701}, 2020.

\bibitem[Yadan(2019)]{Yadan2019Hydra}
Yadan, O.
\newblock Hydra - a framework for elegantly configuring complex applications.
\newblock Github, 2019.
\newblock URL \url{https://github.com/facebookresearch/hydra}.

\bibitem[Yarats et~al.(2021)Yarats, Fergus, Lazaric, and Pinto]{yarats2021mastering}
Yarats, D., Fergus, R., Lazaric, A., and Pinto, L.
\newblock Mastering visual continuous control: Improved data-augmented reinforcement learning.
\newblock \emph{arXiv preprint arXiv:2107.09645}, 2021.

\bibitem[Zenke et~al.(2017)Zenke, Poole, and Ganguli]{zenke2017continual}
Zenke, F., Poole, B., and Ganguli, S.
\newblock Continual learning through synaptic intelligence.
\newblock In \emph{International conference on machine learning}, pp.\  3987--3995. PMLR, 2017.

\end{thebibliography}
\bibliographystyle{icml2026}

\newpage
\appendix
\onecolumn
\section{Appendix}
This supplementary section provides detailed insights and additional information that supports the findings and methodology discussed in the main paper. Below is a brief overview of what each section contains:

\begin{table}[h]
\begin{tabular}{|l|}
\hline
\textbf{Appendix Section}     
\\ \hline
Appendix B: Proof Sketch on preserving timescales with SGD                                                \\ \hline
Appendix C: Pseudocode Implementation                                                 \\ \hline
Appendix D: Code Availability                                                 \\ \hline
Appendix E: Environments                                                                \\ \hline
Appendix F: Experimental Details                                  \\ \hline
Appendix G: Plasticity-Stability Analysis in non-stationary conditions induced by a periodic noisy sinusoidal function                                            \\ \hline
Appendix H: Plasticity-Stability Analysis in non-stationary conditions induced by a non-periodic noisy sinusoidal function                                            \\ \hline
Appendix I: Plasticity-Stability Analysis in non-stationary conditions induced by Ornstein–Uhlenbeck processes                                            \\ \hline
Appendix J: Schematic of Synaptic Consolidation for Q-values vs. Successor Features                              \\ \hline
Appendix K: Q-values vs. SFs, Elastic Weight Consolidation vs Synaptic Consolidation Comparison                             \\ \hline
Appendix L: Q-values vs. SFs, With and Without Synaptic Consolidation Comparison
\\ \hline
Appendix M: Analysis of Fast and Slow Timescale Variables
\\ \hline
Appendix N: Architecture for recalling consolidated Successor Features using Cross-Attention                            \\ \hline
Appendix O: Cross-Attention Analysis of Fast and Slow Timescale Variables
\\ \hline
Appendix P: Agents                                                    \\ \hline
Appendix Q: Implementation details                                                    \\ \hline
Appendix R: Computational Complexity                                                 \\ \hline
Appendix S: Comparison with Trust-Region Methods (PPO) \\ \hline
Appendix T: Quantifying Continuous Non-Stationarity \\ \hline
\end{tabular}
\end{table}

\section{Proof Sketch on preserving timescales with SGD} 
\label{section:mathematical_proof}
In this section, using mathematical analysis, we provide the intuition why the learning of the multiple-timescales Successor Features (SFs) must be done using Stochastic Gradient Descent (SGD) and not the commonly used Adaptive moment estimation (Adam) \citep{kingma2014adam}. 

For the sake of brevity, we consider a tabular Reinforcement Learning setting where the SFs $\psi(s,a)$ are not parameterised and depends only on a state-action pair $(s,a)$: 

\begin{align}
    \psi(s,a) = \mathbb{E_{\pi}}\bigg[ \sum_{t=0}^{\infty} \gamma^t \phi(s_t, a_t)\mid s_0 = s, a_0 = a \bigg]  
\end{align}

where $\gamma \in [0,1]$ is the discount factor and $\phi$ is the basis features. Without loss of generality, we consider an arbitrary consolidation system with $K \in \mathbb{Z}$ possible consolidation variables operating at $K$ possible timescales, where $K > 1$. Let $u_k$ be a variable and $\psi_{u_k} \in \mathbb{R}^n$ be the SFs operating at timescale $k \in (1, 2, ..., K)$. The terms $g_{k-1, k}/{C_k}, g_{k,k+1}{/C_k}$, where $C_k = 2^{k-1}$ and $g_{k,k+1} \propto 2^{-k-2}$ determines the overall timescales of learning the SFs $\psi_{u_k}$.

Recall that after applying the Euler's method to the continuous dynamics (section \ref{section:neuro_inspired_synaptic_consolidation}), we get the following update rules for the consolidation variables $(\psi_{u_1}, \psi_{u_2}, \ldots, \psi_{u_K})$. Let $\eta_k = \Delta t / C_k$ and ignoring the first learning phase $t+1/2$, at step $t+1$, for the first variable $\psi_{u_1}$, we get:
\begin{align}
    \psi_{u_1}^{t+1} =  \psi_{u_1}^{t+\frac{1}{2}} + \eta_1[g_{1,2} \big(\psi_{u_2}^t - \psi{u_1}^{t+\frac{1}{2}}\big)] 
    \label{eq:euler_update_first_var}
\end{align}
For the interior variables $k=2,\ldots, K-1$:
\begin{align}
    \psi_{u_k}^{t+1} = \psi_{u_k}^{t} + \eta_{k} \big[g_{k-1, k}(\psi_{u_{k-1}^{t}} - \psi_{u_{k}}^{t}) + g_{k,k+1}(\psi_{u_{k+1}}^{t} - \psi_{u_{k}}^{t})\big]
\end{align}
For the last variable $K$:
\begin{align}
    \psi_{u_K}^{t+1} = \psi_{u_K}^{t} + \eta_{K} \big[g_{K-1, K}(\psi_{u_{K-1}}^{t} - \psi_{u_{K}}^{t}) - g_{K,K+1}(\psi_{u_{K}}^{t})\big]
\end{align}

Without loss of generality, we consider the case of updating 
$\psi_{u_1}$ in Eq. \ref{eq:euler_update_first_var}, where $u_1$ and $u_2$ represent the first and second consolidation variables. Let $\Delta t=1$ and $\kappa_{1,2}$ be the timescale ratio $\frac{g_{1,2}}{C_1}$, the update term for $\psi_{u_1}^{t+1}$, corresponding to: 
\begin{align}
    \eta_1[g_{1,2} \big(\psi_{u_2}^t - \psi_{u_1}^{t+\frac{1}{2}}\big)] 
    &= \frac{g_{1,2}}{C_1} \big(\underbrace{\psi_{u_2}^t - \psi_{u_1}^{t+\frac{1}{2}}}_{:= g_t} \big) \\
    &=  \kappa_{1,2} \odot g_t \label{eq:timescale_ratio_gradient} \\
    &=  \tilde{g_t} 
    \label{eq:grad_fast_beaker_update}
\end{align}
In optimization, we usually use a learning rate, such as $\alpha \in \mathbb{R}$ to update the variables.

As we shall see in the proof sketch below, both $\kappa_{1,2} \in \mathbb{R}$ and $\alpha \in \mathbb{R}$ contribute to the effective learning rate. We will first present our analysis using SGD, followed by Adam. 

\subsection{Stochastic Gradient Descent (SGD)}

Recall that the update rule for SGD at step $t$ is defined as: 
\begin{align}
    \psi_{t+1}(s,a) \leftarrow \psi_{t}(s,a) - \alpha \odot \tilde{g_t} 
\end{align}

\begin{proposition}
Optimizing Eq. \ref{eq:grad_fast_beaker_update} using Stochastic Gradient Descent (SGD) ensures that the gradient updates explicitly scales with $\alpha$, thus preserving the relative timescale information.
\end{proposition}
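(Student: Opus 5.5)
The argument is a direct unrolling of the SGD update combined with Eq.~\ref{eq:timescale_ratio_gradient}. I would make precise "preserving the relative timescale information" as follows. For every consolidation variable $k$, the increment applied to $\psi_{u_k}$ should equal a fixed scalar multiple of the Euler increment prescribed by the continuous dynamics in Section~\ref{section:neuro_inspired_synaptic_consolidation}. Moreover, that multiplier should be the same constant $\alpha$ for all $k$ and all $t$. Then the ratios $\kappa_{k-1,k}/\kappa_{k,k+1}$, with $\kappa_{k,k+1}=g_{k,k+1}/C_k$, survive intact in the realised updates.

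\textbf{Step 1.} Substitute $\tilde g_t=\kappa_{1,2}\,g_t$ (Eq.~\ref{eq:grad_fast_beaker_update}) into the SGD rule. This gives
\begin{align}
\Delta\psi_{u_1}^{t}=-\alpha\,\tilde g_t=-\alpha\,\kappa_{1,2}\bigl(\psi_{u_2}^t-\psi_{u_1}^{t+\frac12}\bigr).
\end{align}
The effective step is the product $\alpha\kappa_{1,2}$. This product is linear in $\alpha$ and independent of the history $\{\tilde g_\tau\}_{\tau<t}$.

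\textbf{Step 2.} Repeat the computation for the interior and last variables. Their Euler increments are written the same way as $\tilde g^{(k)}_t=\kappa_{k-1,k}(\cdot)+\kappa_{k,k+1}(\cdot)$, with the last variable using the leak term. For each of them SGD yields the increment $\alpha\tilde g^{(k)}_t$. The ratio between the steps of any two variables, or between the two flow terms of one variable, is therefore $\kappa_{k-1,k}/\kappa_{k,k+1}$, exactly as in the continuous model. With $C_k=2^{k-1}$ and $g_{k,k+1}\propto 2^{-k-2}$, these ratios form the geometric ladder of timescales.

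\textbf{Step 3.} Read the update as an Euler step with a rescaled time increment $\Delta t'=\alpha\Delta t$. The SGD-driven system is then the original linear chain run at a uniformly rescaled clock. All eigen-timescales of the chain's linear operator are multiplied by the same factor $1/\alpha$, so their relative ordering and ratios are preserved. This identifies the claim with a statement about the spectrum of the linear ODE.

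The main obstacle is less the algebra than stating the claim in a form that is not vacuous. I would emphasise that the key property is that SGD's preconditioner is the scalar $\alpha I$, which is constant and state-independent. The contrast with Adam, treated next, would come from noting that its division by $\sqrt{\hat v_t}$ cancels the factor $\kappa_{k,k+1}$ coordinatewise, since Adam is invariant to rescaling $\tilde g_t\mapsto c\,\tilde g_t$. Stating this invariance explicitly in Step 1 sets up the comparison.
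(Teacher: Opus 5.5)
Your Step 1 is the paper's proof. The paper substitutes $\tilde g_t=\kappa g_t$ into $\psi_{t+1}=\psi_t-\alpha\tilde g_t$ and reads off the effective step $\alpha\kappa$. It then specializes to $\alpha=1$, so that the SGD step reproduces the Euler coefficient and the ordering $\kappa_{1,2}\gg\kappa_{2,3}\gg\cdots$ survives. Your Steps 2--3 genuinely extend this: you cover every variable of the chain, and you define ``relative timescale information'' as ratios of eigen-timescales of the chain's linear operator. That shows any constant $\alpha$ preserves those ratios. The paper's $\alpha=1$ buys something stronger: the absolute timescales are also kept relative to the TD-driven input to $\psi_{u_1}$, which the consolidation learning rate does not rescale.

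There is, however, a sign error that breaks Step 3 as written. With $g_t=\psi_{u_2}^t-\psi_{u_1}^{t+\frac12}$ and the rule $\psi\leftarrow\psi-\alpha\tilde g_t$, your Step 1 increment $-\alpha\kappa_{1,2}\bigl(\psi_{u_2}^t-\psi_{u_1}^{t+\frac12}\bigr)$ is $-\alpha$ times the Euler increment of Eq.~\ref{eq:euler_update_first}. It pushes $\psi_{u_1}$ away from $\psi_{u_2}$, while Step 2 silently switches to $+\alpha\tilde g^{(k)}_t$. If the Step 1 convention is applied to every variable, Step 3 becomes $\Delta t'=-\alpha\Delta t$. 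That is the dissipative chain run backward in time: its modes grow instead of decaying, so there are no consolidation timescales left to compare. The paper's proof uses the same convention but only tracks magnitudes, so it is unaffected; your spectral reading is not. Fix it by feeding SGD the pseudo-gradient $\tilde g_t=\kappa_{1,2}\bigl(\psi_{u_1}^{t+\frac12}-\psi_{u_2}^t\bigr)$, the gradient of $\tfrac{\kappa_{1,2}}{2}\|\psi_{u_1}-\psi_{u_2}^t\|^2$, and do the same for the other variables.

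Two smaller points. First, with $\kappa_{k,k+1}=g_{k,k+1}/C_k$, the upstream coefficient of variable $k$ is $g_{k-1,k}/C_k$, not $\kappa_{k-1,k}=g_{k-1,k}/C_{k-1}$. So the within-variable ratio is $g_{k-1,k}/g_{k,k+1}$. Also, the ratio of the actual steps of two variables depends on the state differences, so only coefficient ratios are invariant. Second, the exact $1/\alpha$ rescaling holds for the ODE $\dot u=\alpha Au$. The iteration actually run, $u\mapsto(I+\alpha\Delta t\,A)u$, has per-step factors $1+\alpha\Delta t\,\lambda_i$. Its timescales scale as $1/\alpha$ only to first order in $\alpha\Delta t\,|\lambda_i|$, and the chain stays stable only while $\alpha\Delta t\max_i|\lambda_i|<2$. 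So ``any constant $\alpha$'' is too strong, though for $\alpha=1$ and the paper's small flow coefficients this is harmless.
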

\textit{Proof.} Without loss of generality, let $\kappa \in \{\kappa_{1,2}, \kappa_{2,3}, ..., \kappa_{K,K+1}\}$:
\begin{align}
    \psi_{t+1}(s,a) &\leftarrow \psi_{t}(s,a) - \alpha \tilde{g_t} \\
    &= \psi_{t}(s,a) - \alpha (\kappa \cdot g_t) &&\text{Sub } \tilde{g_t} = \kappa \cdot g_t \text{ from Eq.}\ref{eq:timescale_ratio_gradient}\\
    &= \psi_{t}(s,a) - (\alpha \cdot \kappa \cdot g_t)
\end{align}
It is then trivial to conclude that when the learning rate $\alpha=1$, the effective learning rate $\alpha \cdot \kappa \cdot g_t = \kappa \cdot g_t$, thus preserving the relative scale of the updates, even when the timescale ratio $\kappa$ decreases as we move down the chain of dynamic variables due to the fact that $\kappa_{1,2} >> \kappa_{2,3} >> ..., \kappa_{K,K+1}$.
$\square$

\subsection{Adaptive moment estimation (Adam)}
Recall that the update rule for Adam \cite{kingma2014adam} at step $t$ is defined as: 

\begin{align}
    &m_t \leftarrow \beta_1 \cdot m_{t-1} + (1-\beta_1) \cdot \tilde{g_t} &&\qquad \text{First moment} \\
    &v_t \leftarrow \beta_2 \cdot v_{t-1} + (1-\beta_2) \cdot \tilde{g_t}^2 &&\qquad \text{Second moment} \label{eq:SecondMoment} \\
    &\hat{m_t} \leftarrow \frac{m_t}{(1-\beta_1^t)} &&\qquad \text{Bias correction for first moment} \\
    &\hat{v_t} \leftarrow \frac{v_t}{(1-\beta_2^t)} &&\qquad \text{Bias correction for second moment} \label{eq:SecondMomentCorrection}\\
    &\psi_{t+1}(s,a) \leftarrow \psi_{t}(s,a) - \frac{\alpha}{\sqrt{\hat{v_t}}+ \epsilon} \cdot \hat{m_t}
\end{align}
where $\frac{\alpha}{\sqrt{\hat{v_t}} + \epsilon}$ is the effective learning rate and $\tilde{g_t}$ is the update term as defined in Eq. \ref{eq:grad_fast_beaker_update}. 

\begin{proposition}
Optimizing Eq. \ref{eq:grad_fast_beaker_update} using Adam results in gradient updates to not preserve the relative timescale information.
\end{proposition}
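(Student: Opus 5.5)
The plan is to mirror the SGD argument: substitute $\tilde{g_t} = \kappa\, g_t$ from Eq.~\ref{eq:timescale_ratio_gradient} into the Adam recursions and show that $\kappa$ cancels between numerator and denominator. Fix any $\kappa \in \{\kappa_{1,2}, \ldots, \kappa_{K,K+1}\}$ and take zero initial moments, $m_0 = v_0 = 0$, as in standard Adam.

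First, by induction on $t$ I will show that both moments are rescalings of the moments computed from the raw differences $g_t$. Unrolling the first-moment recursion gives
\begin{equation*}
m_t = (1-\beta_1)\sum_{i=1}^{t}\beta_1^{t-i}\kappa g_i = \kappa\, m_t^{(g)} .
\end{equation*}
Unrolling the second moment, Eq.~\ref{eq:SecondMoment}, gives
\begin{equation*}
v_t = (1-\beta_2)\sum_{i=1}^{t}\beta_2^{t-i}\kappa^2 g_i^2 = \kappa^2 v_t^{(g)} .
\end{equation*}
Here $m_t^{(g)}$ and $v_t^{(g)}$ are the moments Adam would form from $g_t$ itself. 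The bias corrections (Eq.~\ref{eq:SecondMomentCorrection}) divide by factors that do not depend on the gradients. Hence $\hat{m_t} = \kappa\,\hat{m}_t^{(g)}$ and $\sqrt{\hat{v_t}} = |\kappa|\sqrt{\hat{v}_t^{(g)}}$.

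Second, I substitute these into the parameter update:
\begin{equation*}
\frac{\alpha\,\hat{m_t}}{\sqrt{\hat{v_t}}+\epsilon} = \frac{\alpha\,\kappa\,\hat{m}_t^{(g)}}{|\kappa|\sqrt{\hat{v}_t^{(g)}}+\epsilon}.
\end{equation*}
In the regime $\epsilon \ll |\kappa|\sqrt{\hat{v}_t^{(g)}}$ this reduces to $\operatorname{sign}(\kappa)\,\alpha\,\hat{m}_t^{(g)}/\sqrt{\hat{v}_t^{(g)}}$, which is independent of the magnitude of $\kappa$. The step size is then governed by $\alpha$ alone, with magnitude at most about $\alpha$ per coordinate.

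Consequently, two links in the chain with $\kappa_{1,2} \gg \kappa_{K,K+1}$ receive updates of the same order. This contrasts with the SGD proposition, where the effective step is $\alpha\kappa g_t$. The ratio of effective step sizes across variables therefore collapses from $\kappa_{k,k+1}/\kappa_{j,j+1}$ to roughly $1$, so the relative timescale information is lost.

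The main obstacle is the role of $\epsilon$. The cancellation is only approximate, and exact cancellation fails precisely for the slowest variables, where $|\kappa|$ is tiny and may be comparable to $\epsilon/\sqrt{\hat{v}_t^{(g)}}$. I would handle this by stating the claim in the regime $\epsilon \to 0$, or $\epsilon \ll |\kappa|\sqrt{\hat{v}_t^{(g)}}$, which is the standard idealization of Adam.

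I would then note that even outside this regime the dependence on $\kappa$ is the nonlinear map $\kappa \mapsto \kappa/(|\kappa| c + \epsilon)$, with $c = \sqrt{\hat{v}_t^{(g)}}$, not the linear dependence $\alpha\kappa$ that SGD gives. Nonlinear rescaling distorts the prescribed ratios $g_{k,k+1}/C_k$, so the relative timescales are still not preserved. That suffices for the proposition.
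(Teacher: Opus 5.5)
Your argument is correct. It starts from the same substitution as the paper, $\tilde{g_t}=\kappa g_t$ inserted into the Adam recursions, but it analyses a different quantity and reaches a sharper conclusion.

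\textbf{What the paper does.} The paper examines only the effective learning rate $\alpha/(\sqrt{\hat{v_t}}+\epsilon)$. It expands one step of the second-moment recursion, leaving $v_{t-1}$ unexpanded. From this it concludes that the rate grows as $\kappa$ shrinks, so that updates become ``inversely proportional to $\kappa$''.

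\textbf{What you do differently.} You also track the first moment and unroll both recursions from $m_0=v_0=0$. This gives the exact homogeneity $m_t=\kappa m_t^{(g)}$ and $v_t=\kappa^2 v_t^{(g)}$. It shows that the $1/|\kappa|$ growth of the effective rate is exactly cancelled by the factor $\kappa$ in $\hat{m_t}$. The net step is therefore independent of $\kappa$ as $\epsilon\to 0$, not inversely proportional to it. That is the accurate description of the failure. Your explicit treatment of $\epsilon$ also covers something the paper ignores, since its effective rate is capped at $\alpha/\epsilon$.

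\textbf{What each route buys.} The paper's route is shorter and sets up a direct comparison with the SGD quantity $\alpha\kappa$, but it misstates the net effect. Your homogeneity gives more. By induction on $t$, in the idealisation $\epsilon=0$, the whole trajectory of the chain is independent of the capacities $C_k$. This holds because each variable's increment is $\eta_k$ times a bracket that involves no $C_j$, and Adam erases positive rescalings of that bracket. It also disposes of the objection that the $g_t$ themselves depend on $\kappa$ through the coupled dynamics.

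\textbf{One caution on your last paragraph.} When $\epsilon\gg|\kappa|c$, the map $\kappa\mapsto\kappa/(|\kappa|c+\epsilon)\approx\kappa/\epsilon$ is essentially linear. If every link sits in that regime, Adam behaves like momentum SGD with rate $\alpha\kappa/\epsilon$, and the prescribed ratios are roughly preserved. Distortion arises only for links on opposite sides of the crossover $|\kappa|c\approx\epsilon$. Moreover, $c=\sqrt{\hat{v}_t^{(g)}}$ differs from variable to variable, so a single map with one $c$ does not describe all links. The proposition should therefore rest on the $\epsilon\to 0$ regime, as your main argument does. The ``nonlinear rescaling'' remark should not be presented as an unconditional second proof.
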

Let's focus our analysis on the effective learning rate $\frac{\alpha}{\sqrt{\hat{v_t}} + \epsilon}$ and once again, without the loss of generality, let $\kappa \in \{\kappa_{1,2}, \kappa_{2,3}, ..., \kappa_{K,K+1}\}$:

\begin{align}
    \frac{\alpha}{\sqrt{\hat{v_t}} + \epsilon}  &= \frac{\alpha}{\sqrt{\frac{v_t}{(1-\beta_2^t)}} + \epsilon} && \text{Sub Eq. \ref{eq:SecondMomentCorrection} into } \hat{v_t}\nonumber\\
    &= \frac{\alpha}{\sqrt{\frac{\beta_2 \cdot v_{t-1} + (1-\beta_2) \cdot \tilde{g_t}^2}{(1-\beta_2^t)}} + \epsilon} && \text{Sub Eq. \ref{eq:SecondMoment} into } \hat{v_t}\nonumber\\
    &= \frac{\alpha}{\sqrt{\frac{\beta_2 \cdot v_{t-1} + (1-\beta_2) \cdot  (\kappa \cdot g_t) ^2}{(1-\beta_2^t)}} + \epsilon} && \text{Sub } \tilde{g_t} =  \kappa \cdot g_t \text{ from Eq.}\ref{eq:grad_fast_beaker_update}  \nonumber\\
\end{align}
We can observe that when the learning rate $\alpha=1$, the effective learning rate, $\frac{1}{\sqrt{\hat{v_t}} + \epsilon}$, increases as the timescale ratio variable $\kappa$ decreases. This is due to the fact that as we move down the chain of consolidation variables, we will get $\kappa_{1,2} >> \kappa_{2,3} >> ..., \kappa_{K,K+1}$. This implies that the timescale will no longer be preserved, as the relative scale of the updates will now become inversely proportional with respect to $\kappa$ rather than proportional to $\kappa$. 
$\square$

\subsection{Conclusion}
These analyses demonstrate that learning the Successor Features (SFs), $\psi_{u_k}$, using Stochastic Gradient Descent (SGD) rather than Adam is critical for preserving the timescale information intrinsic to these features. The differential impact of SGD and Adam on the behavior of the updates highlights the importance of choosing an appropriate optimization strategy in RL settings that require maintenance of structured timescale information.

\section{Pseudocode Implementation} 
\label{section:pseudocode}
In this section, we present the pseudocode of our model (SF + SC), where we apply the synaptic consolidation mechanism \cite{benna2016computational} to Successor Features. 
\begin{algorithm}[H]
\caption{Learning Successor Features with Synaptic Consolidation}\label{alg:our_algo}
    \begin{algorithmic}[1]
        \STATE Determine the number of consolidation variables ($\psi_{u_2}, \psi_{u_3}, \ldots$)
        \STATE Initialize reward weight vector $\boldsymbol{w}$
        \STATE Initialize SF $\psi_{\theta}$ network, SF $\overline{\psi_{\theta}}$ target network
        \STATE Set $\psi_{{\theta}_{u_1}} \leftarrow \psi_{\theta}$
        \STATE Copy ${\theta}_{u_1}$ to the networks of the consolidation variables (e.g., $\psi_{{\theta}_{u_2}} \leftarrow \theta_{u_1}, \psi_{{\theta}_{u_3}} \leftarrow \theta_{u_1}, \ldots)$
        \FOR{$t :=1$, T}
        \STATE Receive observation $S_t$ from environment
        \STATE $A_t \leftarrow \epsilon$-greedy using $Q(S_t, \cdot \mid \boldsymbol{w})$
        \STATE Send $A_t$ to receive $S_{t+1}$ and $R_{t+1}$ from environment
        \STATE $a' \in \underset{b}{\mathrm{argmax}}\; \overline{\psi_{\theta}}(S_{t+1},b, \boldsymbol{w})^{\top} \boldsymbol{w}$
        \STATE $\hat{y} = R_{t+1} + \gamma \overline{\psi_{\theta}}(S_{t+1}, a', \boldsymbol{w})^{\top}\boldsymbol{w}$
        \STATE $\phi \leftarrow$ L2 normalized output from the encoder of SF $\psi$ network
        \STATE $loss_{\psi_{\theta}} = (\psi_{\theta}(S_t, A_t, \boldsymbol{w})^{\top} \boldsymbol{w} - \hat{y})^2 $
        \STATE $loss_w = (\phi^{\top} \boldsymbol{w} - R_{t+1})^2$
        \STATE Gradient descent on $\psi_{\theta}$ and $\boldsymbol{w}$
        \STATE Set $\psi_{{\theta}_{u_1}} \leftarrow \psi_{\theta}$
        \STATE Update the parameters of the consolidation parameters analytically using Eq.\ref{eq:euler_update_first}, Eq. \ref{eq:euler_update_interior}, Eq.\ref{eq:euler_update_last} (Stochastic Gradient Descent)
        \STATE Set $\psi_{\theta}  \leftarrow \psi_{{\theta}_{u_1}}$
        \ENDFOR
    \end{algorithmic}
\end{algorithm}

\section{Code Availability}
\label{section:code}

The PyTorch implementation used for the MuJoCo experiments, together with training scripts, hyperparameter configurations, and instructions for reproducing the results, is publicly available.\footnote{\url{https://github.com/raymondchua/multi-timescale-successor-features-mujoco}}

The repository includes:

\begin{itemize}
    \item implementations of all methods evaluated in the MuJoCo experiments;
    \item continual non-stationarity benchmarks for MuJoCo;
    \item training and evaluation scripts;
    \item hyperparameter configurations used in the paper;
    \item instructions for reproducing the reported results.
\end{itemize}

The JAX implementation used for the 3D Four Rooms experiments is also publicly available.\footnote{\url{https://github.com/raymondchua/multi-timescale-successor-features-fourrooms}} This repository contains the 3D Four Rooms continual non-stationarity experiments together with the corresponding training, evaluation, and reproducibility code.

\newpage
\section{Environments} 
\label{section:environment}
In this section, we present the two environments used throughout this manuscript: the Slippery 3D Four Rooms environment and the MuJoCo control suite, along with the corresponding forms of continuous non-stationarity introduced in each setting. In the Slippery 3D Four Rooms environment, slippery probabilities were varied according to a periodic noisy sine function. In the MuJoCo control suite, we considered a broader range of continuous changes to the embodiment dynamics by perturbing the embodiment mass using periodic noisy sine functions, non-periodic variants, and Ornstein–Uhlenbeck processes.

\subsection{Slippery 3D Four Rooms Environment (Periodic)}
\label{section:slippery_four_rooms_periodic}
\begin{figure}[htbp]
    \centering
    \includegraphics[width=0.8\textwidth]{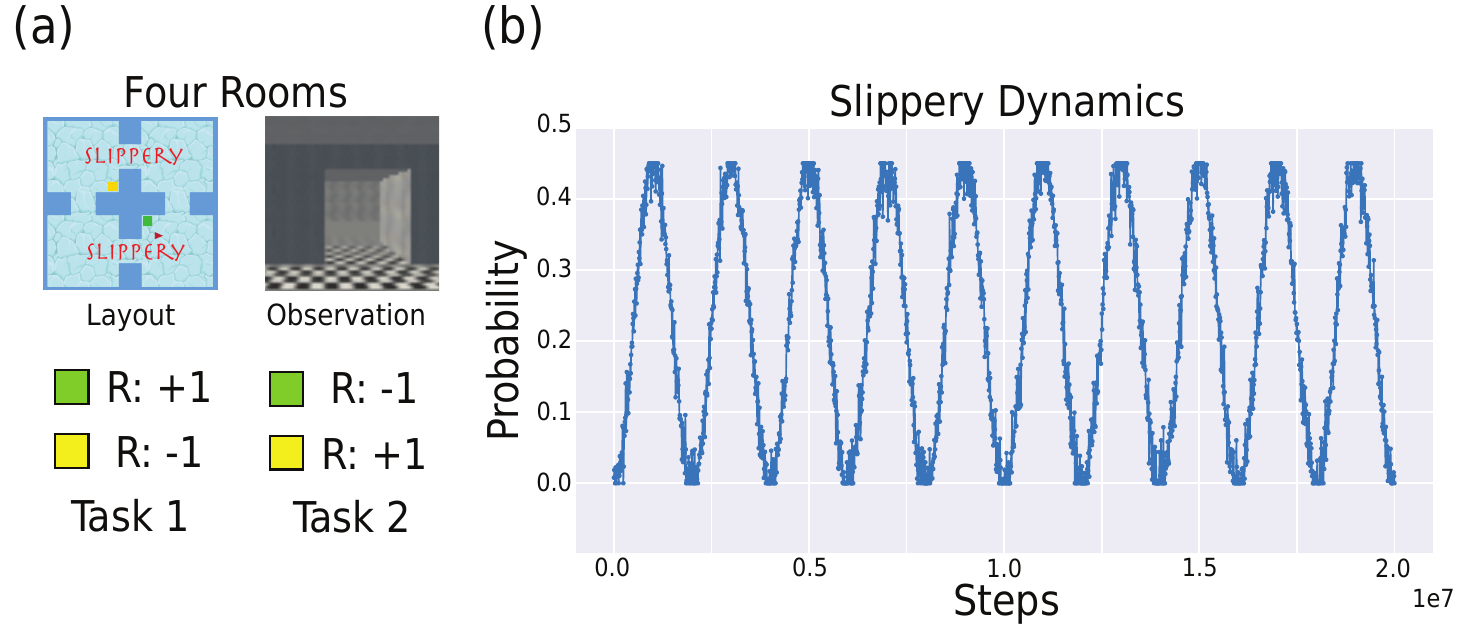}
    \caption[Slippery 3D Four Rooms environment]{\textbf{(a)} The slippery variant of the 3D Four Rooms environment. The agent alternates between two tasks: in Task 1, reaching the green box produces +1 reward and the yellow  box -1, in Task 2, the reward assignment is reversed. At each step, the agent's chosen action may be randomly replaced with a probability sampled from the noisy sine function shown in B. The agent receives only egocentric pixel observations. \textbf{(b)} A noisy sine wave that generates continuously varying slip probabilities, used to stochastically replace the agent's intended actions.}
    \label{fig:slippery_all_four_rooms_env}
\end{figure}

\subsection{Continuous Control in MuJoCo (Periodic)}
\label{section:mujoco_periodic}
\begin{figure}[htbp]
    \centering
    \includegraphics[width=0.9\textwidth]{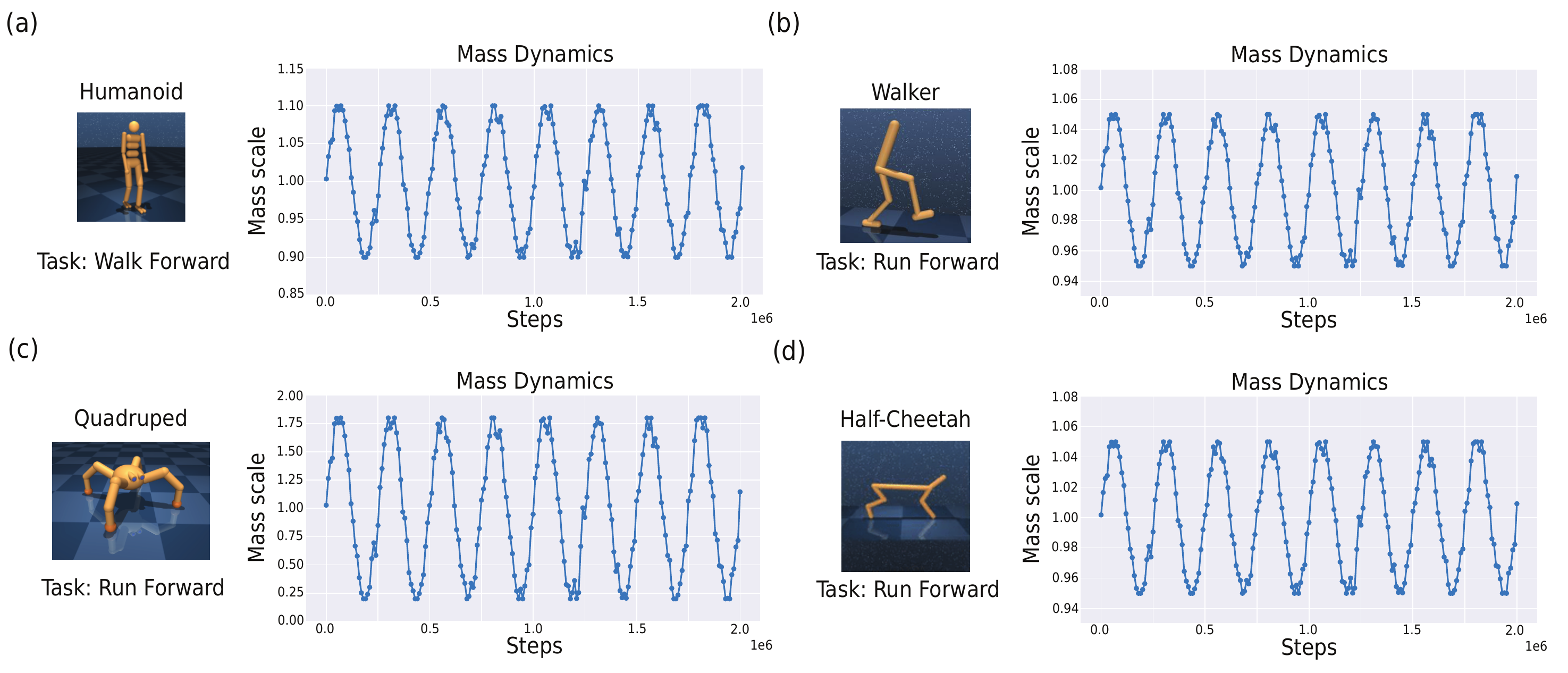}
    \caption[MuJoCo suite with continuous mass changes]{MuJoCo suite with continuous periodic mass changes during training and evaluation for (a) Humanoid, (b)Walker, (c) Quadruped, (d) Half-cheetah. A periodic noisy sine wave that generates continuously varying mass values, used to stochastically scale the agent's mass during training and evaluation.}
    \label{fig:mujoco_mass_suite_periodic}
\end{figure}

\newpage
\subsection{Slippery 3D Four Rooms Environment (Non-Periodic)}
\label{section:slippery_four_rooms_non_periodic}
\begin{figure}[htbp]
    \centering
    \includegraphics[width=\textwidth]{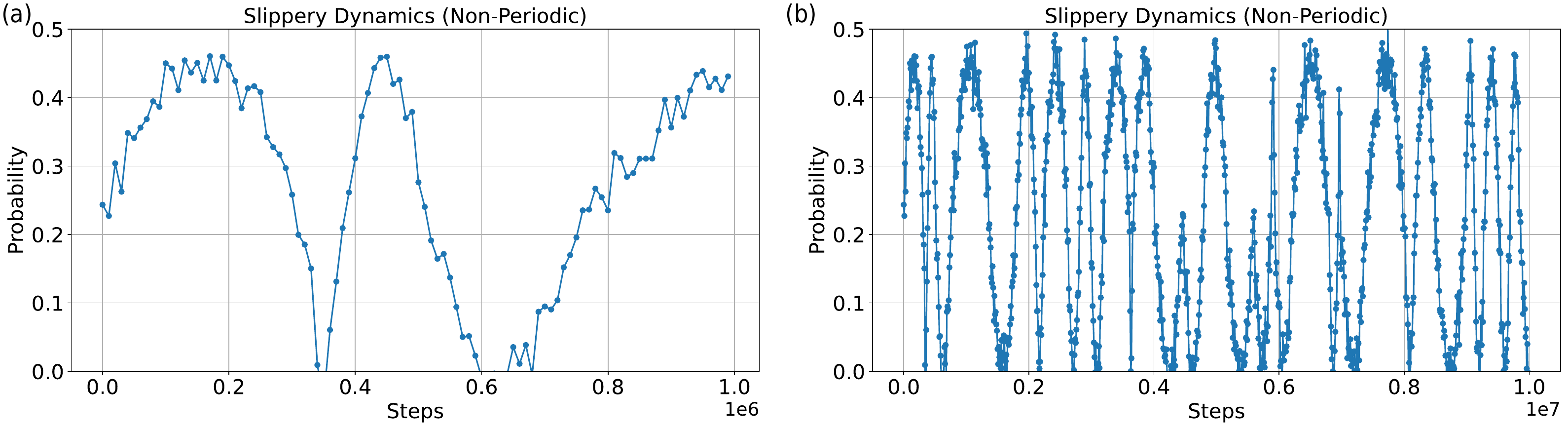}
    \caption[Slippery 3D Four Rooms environment with non-periodic continuous slippery dynamics]{\textbf{(a)} Partial view over the first 1 million environment steps and \textbf{(b)} complete view over the full 10 million environment training steps of the noisy non-periodic sine wave used to generate continuously varying slip probabilities that stochastically replace the agent's intended actions.}
    \label{fig:slippery_all_four_rooms_env_non_periodic_dynamics}
\end{figure}

\subsection{Continuous Control in MuJoCo (Non-Periodic)}
\label{section:mujoco_non_periodic}
\begin{figure}[htbp]
    \centering
    \includegraphics[width=0.9\textwidth]{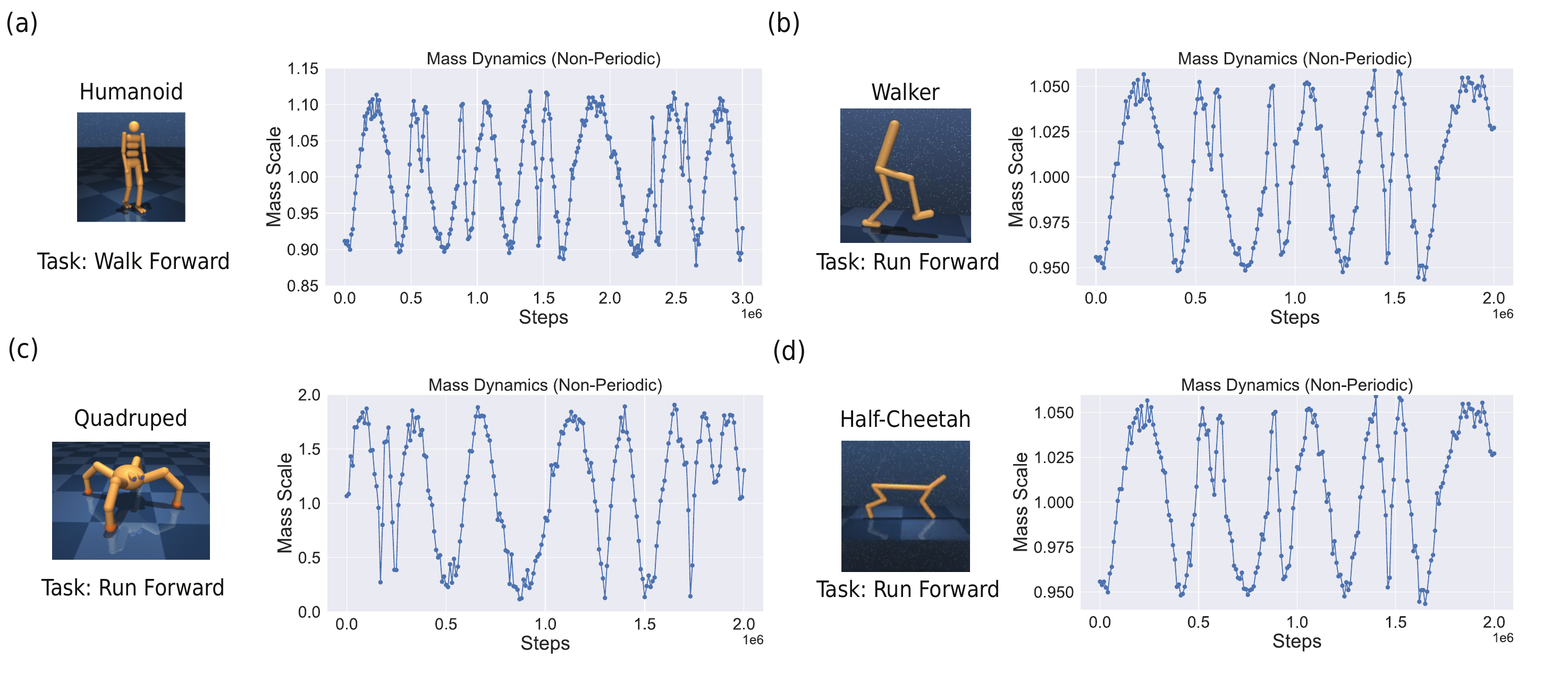}
    \caption[MuJoCo suite with continuous non-periodic mass changes]{MuJoCo suite with continuous non-periodic mass changes during training and evaluation for (a) Humanoid, (b)Walker, (c) Quadruped, (d) Half-cheetah. A non-periodic noisy sine wave that generates continuously varying mass values, used to stochastically scale the agent's mass during training and evaluation.}
    \label{fig:mujoco_mass_suite_non_periodic}
\end{figure}

\newpage
\subsection{Slippery 3D Four Rooms Environment (Ornstein–Uhlenbeck processes)}
\label{section:slippery_four_rooms_ou}
\begin{figure}[htbp]
    \centering
    \includegraphics[width=\textwidth]{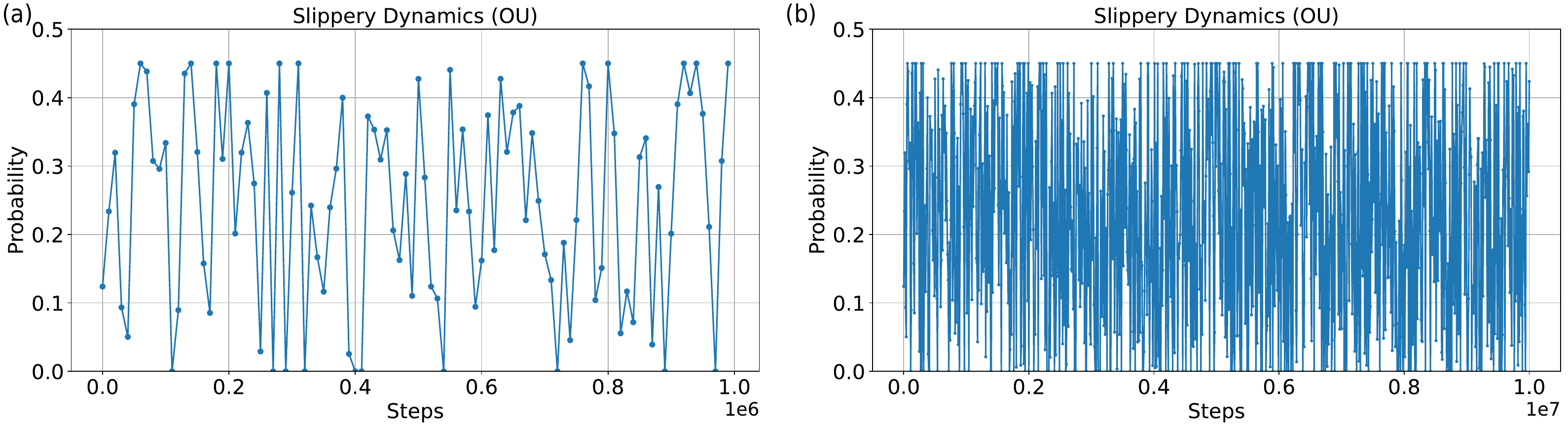}
   \caption[Slippery 3D Four Rooms environment with Ornstein–Uhlenbeck-induced continuous slippery dynamics]{\textbf{(a)} Partial view over the first 1 million environment steps and \textbf{(b)} complete view over the full 10 million training steps of the Ornstein–Uhlenbeck (OU) process used to generate varying slip probabilities that stochastically replace the agent's intended actions.}
    \label{fig:slippery_all_four_rooms_env_ou_dynamics}
\end{figure}

\subsection{Continuous Control in MuJoCo (Ornstein–Uhlenbeck processes)}
\label{section:mujoco_ou}
\begin{figure}[htbp]
    \centering
     \includegraphics[width=0.9\textwidth]{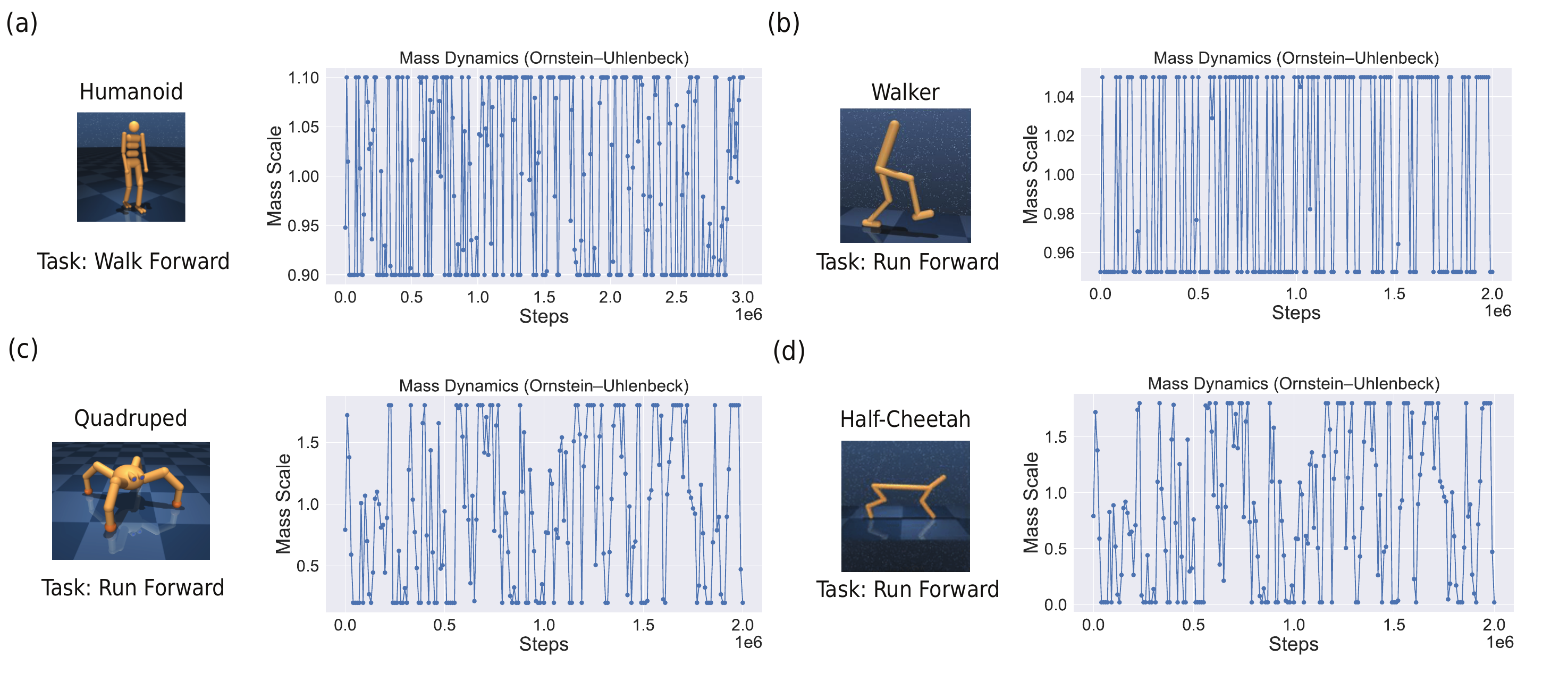}
    \caption[MuJoCo suite with Ornstein–Uhlenbeck processes induced mass changes]{MuJoCo suite with continuous mass changes induced using Ornstein–Uhlenbeck(OU) processes during training and evaluation for (a) Humanoid, (b)Walker, (c) Quadruped, (d) Half-cheetah. An Ornstein–Uhlenbeck processes that generate continuously varying mass values, used to stochastically scale the agent's mass during training and evaluation.}
    \label{fig:mujoco_mass_suite_OU}
\end{figure}

\newpage
\section{Experimental details}
\label{section:experiment_details}

In this section, we provide more details about the environments used in our experiments.

\subsection{3D Slippery Four Rooms Environment}

\begin{figure}[ht]
    \centering
    \includegraphics[width=0.3\textwidth]{figures/Figure_431.pdf}
    \caption{Slippery Four Rooms environment}
    \label{fig:slippery_all_fourrrooms}
\end{figure}

We extended this environment used in the Simple Successor Features \citep{chua2024learning} which was built upon the original 3D Miniworld environment \citep{MinigridMiniworld23}. In the slippery variant of the Four Rooms environment\footnote{\url{https://github.com/raymondchua/miniworld_four_rooms}}, we mimic wet or icy conditions in all four rooms, rather than just two rooms (top right and bottom left). The key difference is, unlike in the prior setup which uses a constant pre-determined slippery probability value, we sample the slippery probabilities using a noisy sine function to ensure continuous changes during training and evaluation. We kept the same two tasks structure, where the rewards alternate when the task switches. The agent receives egocentric pixel observations at every step and the actions are moving forward, backwards, turning left and turning right. 

The specific parameters defining the 3D Slippery Four Rooms Environment are detailed in Table \ref{table:miniworld_hyperparameters}.

\begin{table*}[ht]
\caption{3D Slippery Four Rooms Environment Specific Parameters}
\label{table:miniworld_hyperparameters}
\vskip 0.15in
\begin{center}
\begin{small}
\begin{sc}
\begin{tabular}{l|l}
\toprule
Parameter & Value \\
\midrule
Observation type & Egocentric \\
Frame stacking & No\\
RGB or Greyscaling & RGB \\
num training frames per task & 5 million frames \\
num exposure & 2 \\
num task per exposure & 2 \\
batch size & 32 \\
$\epsilon$ decay & 1 million frames\\
action repeat & no \\
action dimension & 4 \\
observation size & $84 \times 84$ \\
max frames per episode & 4000\\
Task learning rate & 0.001 \\
Slippery probability range & 0 to 0.45\\
\bottomrule
\end{tabular}
\end{sc}
\end{small}
\end{center}
\vskip -0.1in
\end{table*} 

\subsection{MuJoCo}
In this work, we consider only state observations. For the embodiments, we chose both Walker, Half-Cheetah, Quadruped and Humanoid. We broadly follow the same setup as \citet{yarats2021mastering} and \citet{chua2024learning}, and include their models as baselines, which we denote as ``TD3'' and ``SF'' respectively.

The codebase was adapted from the Simple Successor Features repository\footnote{\url{https://github.com/raymondchua/simple_successor_features}}. The specific parameters we used in the MuJoCo environment for training broadly follow the ones defined in \citet{chua2024learning}, with some exceptions, such as the training steps and the learning rate for the reward weight vector. The specific parameters for MuJoCo are detailed in Table \ref{table:mujoco_parameters}.

\begin{table*}[ht]
\caption{MuJoCo Environment Specific Parameters}
\label{table:mujoco_parameters}
\vskip 0.15in
\begin{center}
\begin{small}
\begin{sc}
\begin{tabular}{l|l}
\toprule
Parameter & Value \\
\midrule
Frame stacking & Yes\\
Observation & State \\
num training steps per task & 2 million steps \\
num exposure & 1 \\
action repeat & 1\\
batch size & 1024 \\
feature dim & 128 \\
hidden dim & 1024 \\
max steps per episode & 10000\\
sf dim & 10 \\
Task learning rate & {Half-Cheetah: 1e-3, Walker: 1e-8, Quadruped: 1e-9, Humanoid:1e-8} \\
task update frequency & 10\\
\bottomrule
\end{tabular}
\end{sc}
\end{small}
\end{center}
\vskip -0.1in
\end{table*}

\newpage
\section{Plasticity-Stability Analysis in non-stationary conditions induced by a periodic noisy sinusoidal function} 
\label{section:plasticity_stability_analysis_periodic}
In this section, we present the experimental results for the various MuJoCo embodiments undergoing naturalistic, continuous changes sampled from a periodic noisy sinusoidal function. Results for the Slippery Four Rooms Environment are presented in Figure~\ref{fig:slippery_four_rooms_results} in the main manuscript. See Appendix \ref{section:environment}, specifically Sections \ref{section:slippery_four_rooms_periodic} and \ref{section:mujoco_periodic}, for illustrations of the resulting dynamics.

\subsection{MuJoCo suite Results}
\label{section:mujoco_plasticity_stability_analysis_periodic}

\begin{figure}[htbp]
    \centering
    \includegraphics[width=0.8\textwidth]{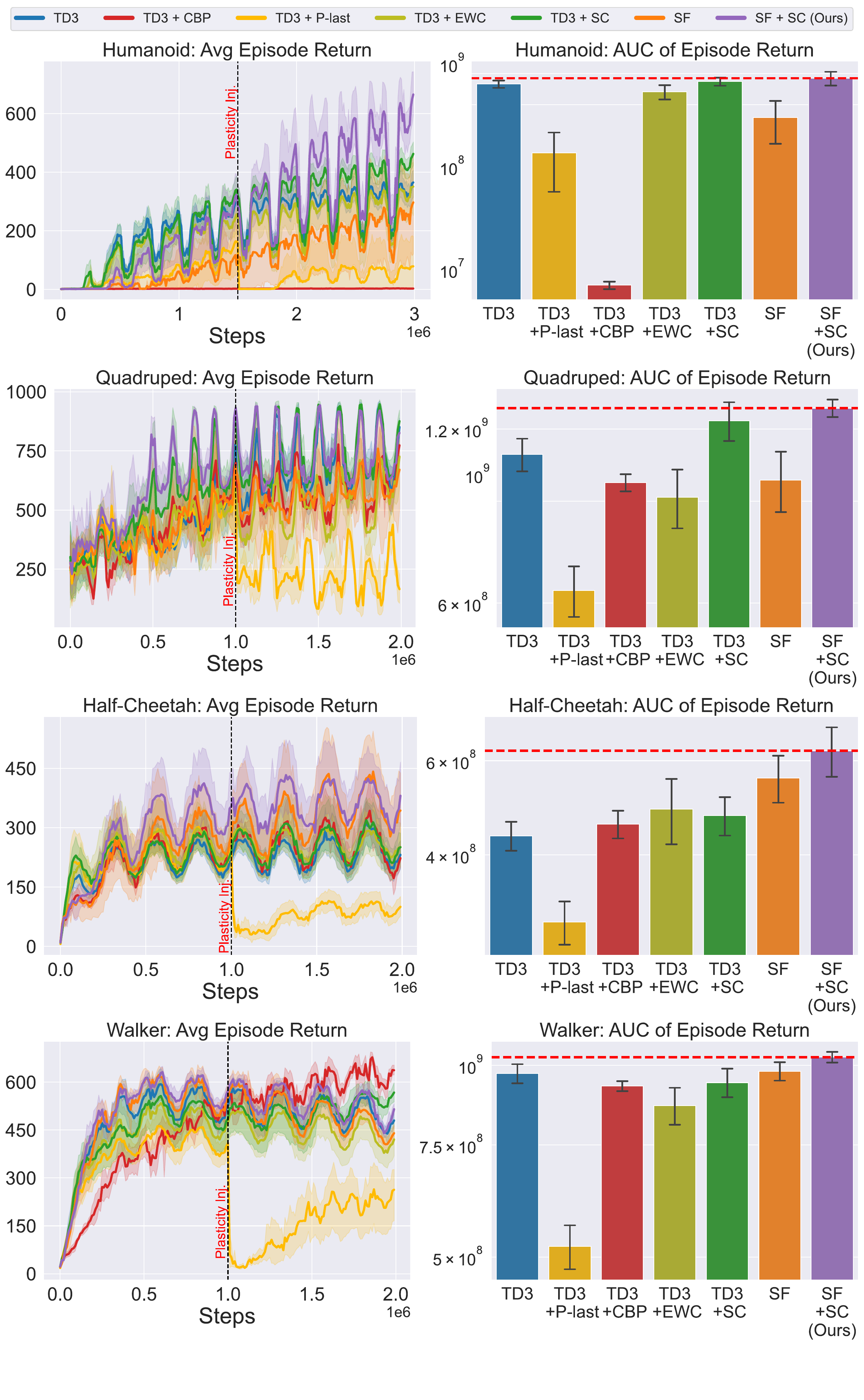}
    \vspace{-20pt}
    \caption[MuJoCo (Mass Changes): Plasticity–Stability Analysis]{Plasticity–stability analysis in the MuJoCo suite under continuous mass changes induced by a periodic noisy sine function during training and evaluation. We compare a baseline TD3 agent with three variants: (i) Continual Backprop (CBP), which selectively resets least-active weights; (ii) plasticity injection by resetting the weights in the last layer (P-last); and (iii) synaptic consolidation (SC). Across embodiments, CBP generally struggled to outperform TD3, with complete collapse in Humanoid, while SC improved stability.}
  \label{fig:mujoco_stability_plasticity_analysis}
\end{figure}

\newpage
\section{Plasticity-Stability Analysis in non-stationary conditions induced by a non-periodic noisy sinusoidal function} 
\label{section:plasticity_stability_analysis_non_periodic}
In this section, we present experimental results for the 3D Miniworld environment and various MuJoCo embodiments undergoing naturalistic, continuous changes generated from a non-periodic noisy sinusoidal function. See Appendix \ref{section:environment}, specifically Sections \ref{section:slippery_four_rooms_non_periodic} and \ref{section:mujoco_non_periodic}, for illustrations of the resulting dynamics.

\subsection{Slippery Four Rooms Environment}
\label{section:slippery_fourrooms_plasticity_stability_analysis_non_periodic}
\begin{figure}[htbp]
    \centering
    \includegraphics[width=\textwidth]{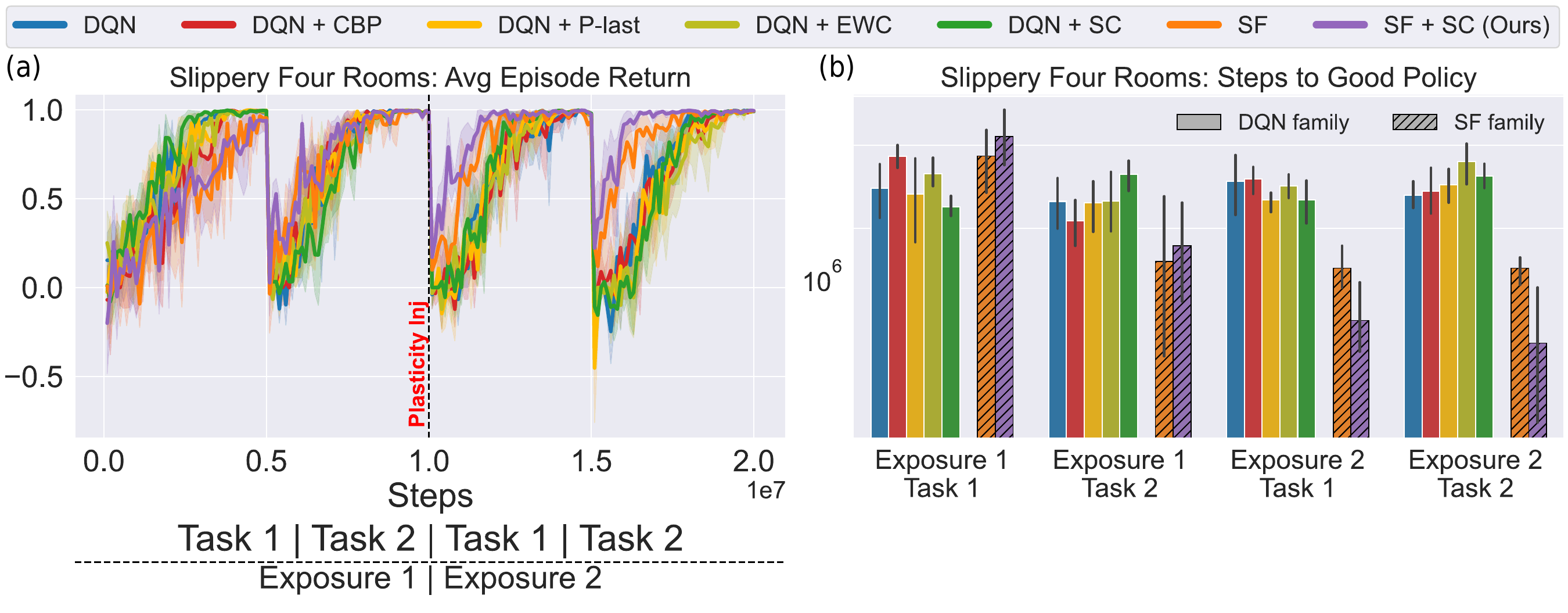}
    \caption[Plasticity–Stability Analysis: Slippery Four Rooms using non-periodic noisy sinusoidal function]{Plasticity–stability analysis in the Slippery Four Rooms environment using non-periodic noisy sinusoidal function. The agent undergoes two exposures; after each learning phase, the reward mapping is reversed. \textbf{(a)} Average return per episode. \textbf{(b)} Learning efficiency (steps to reach a good policy; lower is better). For the plasticity-injection agent, plasticity was injected once at 10 million environment steps (end of Exposure 1). In both panels, the agent with synaptic consolidation (SC) applied to the SFs (SF + SC, purple) consistently achieved better learning efficiency when re-encountering the same set of tasks sequentially in exposure 2.}
  \label{fig:fourrooms_stability_plasticity_analysis_non_periodic}
\end{figure}

\subsection{MuJoCo suite Results}
\label{section:mujoco_plasticity_stability_analysis_non_periodic}
\begin{figure}[htbp]
    \centering
    \includegraphics[width=0.75\textwidth]{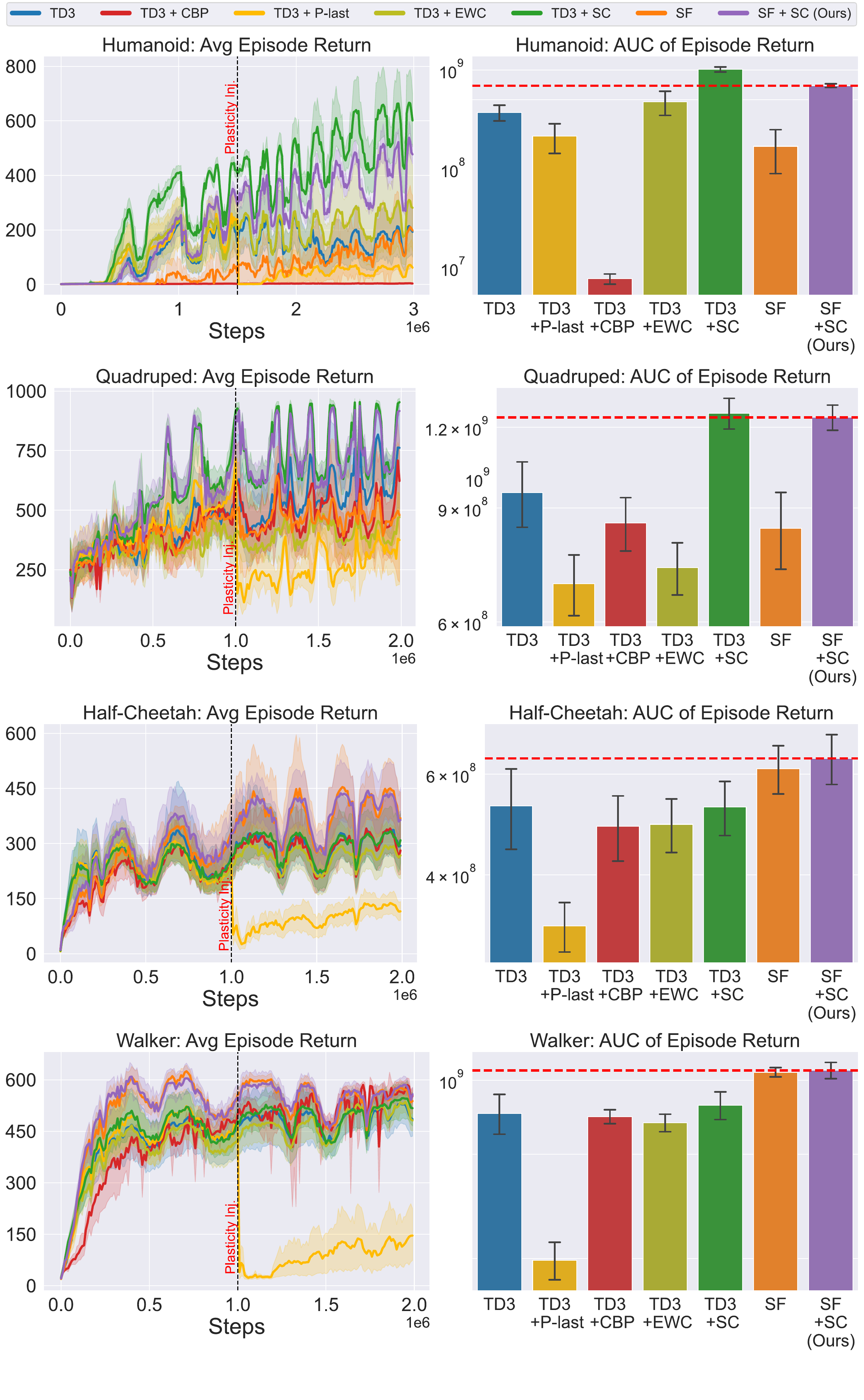}
    \vspace{-20pt}
    \caption[MuJoCo (Mass Changes): Plasticity–Stability Analysis]{Plasticity–stability analysis in the MuJoCo suite under continuous mass changes induced by a non-periodic noisy sine function during training and evaluation. We compare a baseline TD3 agent with three variants: (i) Continual Backprop (CBP), which selectively resets least-active weights; (ii) plasticity injection by resetting the weights in the last layer (P-last); and (iii) synaptic consolidation (SC). Across embodiments, CBP generally struggled to outperform TD3, with complete collapse in Humanoid, while SC improved stability. Applying SC to SFs leads to improved learning performance. However, for complex embodiments such as Quadruped and Humanoid, these gains are reduced. This is unsurprising, as SFs are predictive representations, and the underlying mass dynamics become less predictable in non-periodic settings.}
  \label{fig:mujoco_stability_plasticity_analysis_non_periodic}
\end{figure}

\newpage
\section{Plasticity-Stability Analysis in non-stationary conditions induced by Ornstein–Uhlenbeck processes} 
\label{section:plasticity_stability_analysis_OU}
In this section, we present experimental results for the 3D Miniworld environment and the various MuJoCo embodiments undergoing naturalistic, continuous changes sampled from the Ornstein–Uhlenbeck processes. See Appendix \ref{section:environment}, specifically Sections \ref{section:slippery_four_rooms_ou} and \ref{section:mujoco_ou}, for illustrations of the resulting dynamics.

\subsection{Slippery Four Rooms Environment}
\label{section:slippery_fourrooms_plasticity_stability_analysis_ou}
\begin{figure}[htbp]
    \centering
    \includegraphics[width=\textwidth]{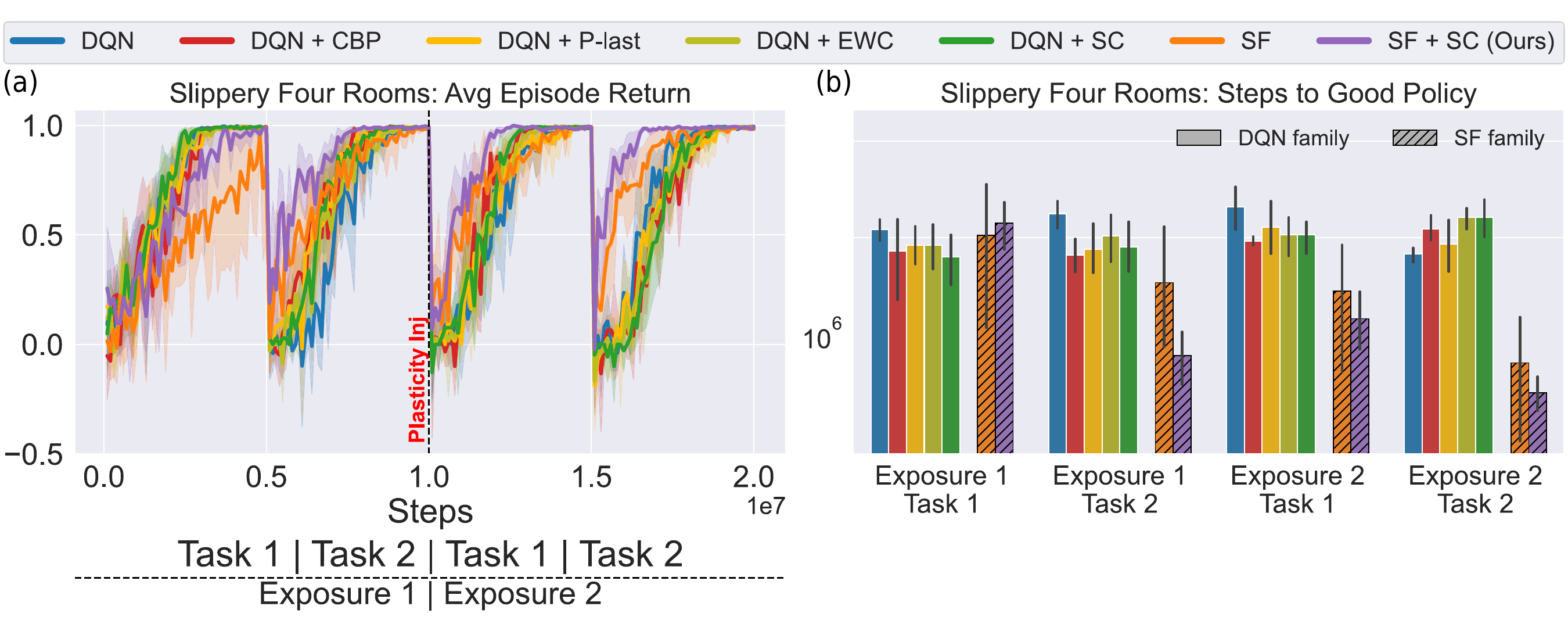}
    \caption[Plasticity–Stability Analysis: Slippery Four Rooms using OU processes]{Plasticity–stability analysis in the Slippery Four Rooms environment using OU processes. The agent undergoes two exposures; after each learning phase, the reward mapping is reversed. \textbf{(a)} Average return per episode. \textbf{(b)} Learning efficiency (steps to reach a good policy; lower is better). For the plasticity-injection agent, plasticity was injected once at 10 million environment steps (end of Exposure 1). In both panels, the agent with synaptic consolidation (SC) applied to the SFs (SF + SC, purple) consistently achieved better learning efficiency when re-encountering the same set of tasks sequentially in exposure 2.}
  \label{fig:fourrooms_stability_plasticity_analysis_ou}
\end{figure}

\subsection{MuJoCo suite Results}
\label{section:mujoco_plasticity_stability_analysis_OU}
\begin{figure}[htbp]
    \centering
    \includegraphics[width=0.75\textwidth]{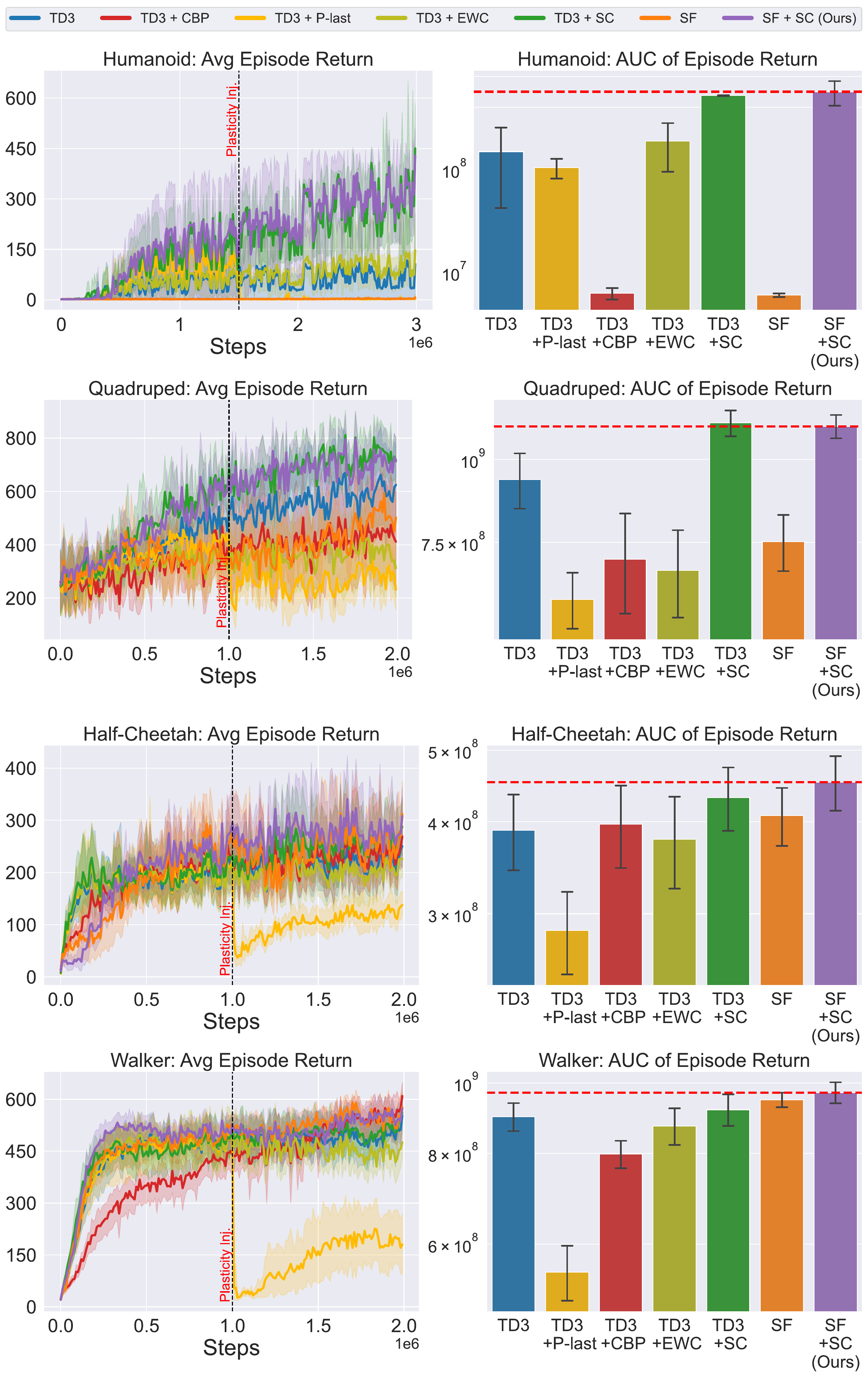}
    \caption[MuJoCo (Mass Changes): Plasticity–Stability Analysis]{Plasticity–stability analysis in the MuJoCo suite under continuous mass changes induced by Ornstein–Uhlenbeck processes during training and evaluation. We compare a baseline TD3 agent with three variants: (i) Continual Backprop (CBP), which selectively resets least-active weights; (ii) plasticity injection by resetting the weights in the last layer (P-last); and (iii) synaptic consolidation (SC). Across embodiments, CBP generally struggled to outperform TD3, with complete collapse in Humanoid, while SC improved stability. Applying SC to SFs (SF+SC) leads to similar performance as applying SC to Q-values (TD3+SC). This is unsurprising, as SFs are predictive representations, and the underlying mass dynamics become less predictable in OU settings.}
  \label{fig:mujoco_stability_plasticity_analysis_OU}
\end{figure}

\newpage
\section{Schematic of Synaptic Consolidation for Q-values vs. Successor Features} 
\label{section:schematic_sc_qval_vs_sf}
In this section, we present a schematic comparison of applying synaptic consolidation to the parameters of Q-values versus Successor Features.
\begin{figure}[htbp]
    \centering
    \includegraphics[width=\textwidth]{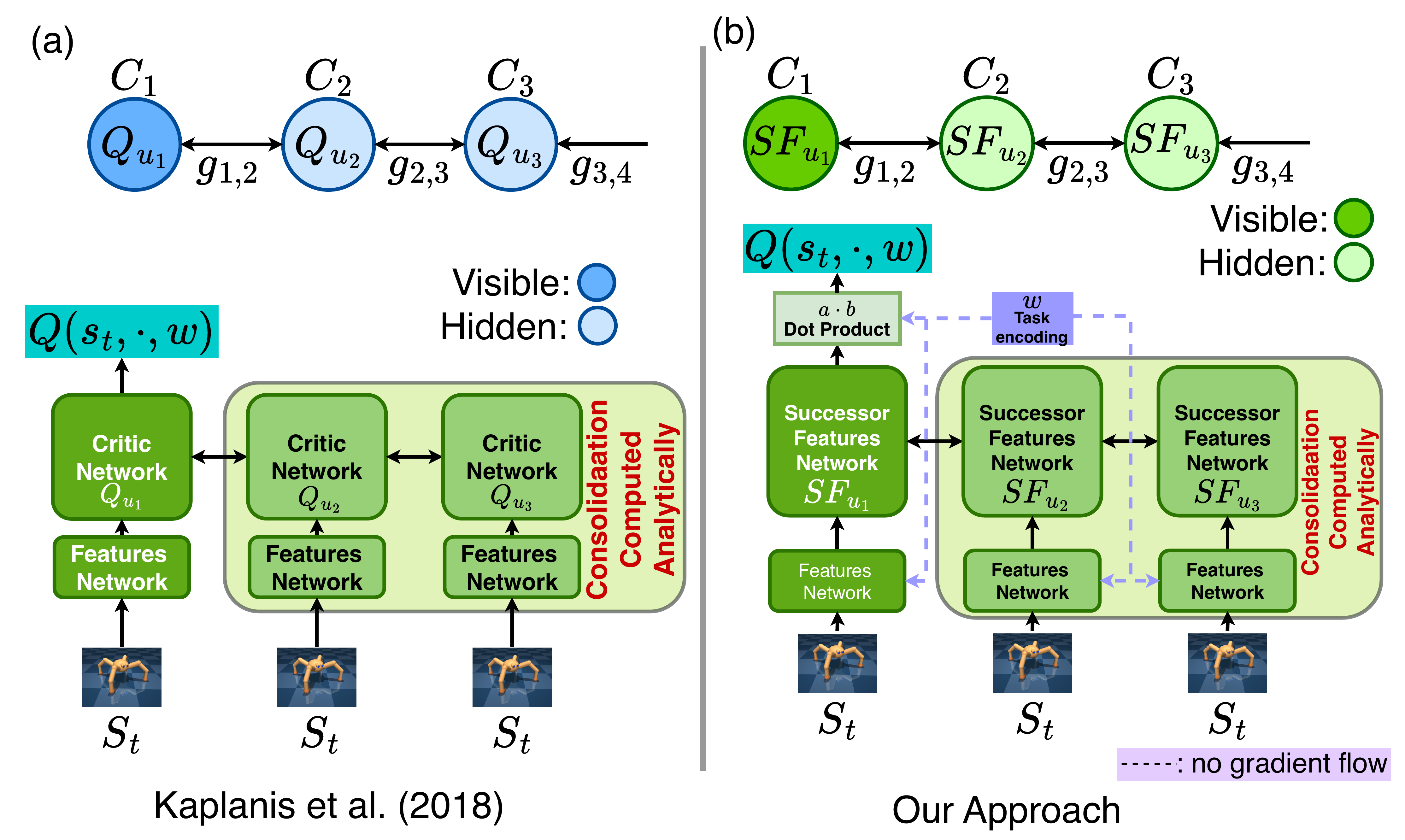}
    \caption{Schematic of synaptic consolidation applied to Q-values and to Successor Features (SFs). \textbf{(a): } \citep{kaplanis2018continual} showed that adapting the synaptic consolidation mechanism of \citep{benna2016computational} to Q-values improves robustness in continual RL. \textbf{(b): } Here, we extend this approach to predictive, generalizable representations using Simple Successor Features \citep{chua2024learning}. Consolidated variables (e.g., $Q_{u_2}, Q_{u_3}, \ldots$ or $SF_{u_2}, SF_{u_3}, \ldots$) are computed analytically and therefore lie outside the computational graph used to update $Q_{u_1}$ or $SF_{u_1}$ by backpropagation.}
    \label{fig:sc_q_vals_vs_sf_schematic}
\end{figure}

\newpage
\section{Q-values vs. SFs, Elastic Weight Consolidation vs Synaptic Consolidation Comparison} 
\label{section:sc_ewc_qval_vs_sf_analysis}
In this section, we compare the effects of applying synaptic consolidation (SC) \citep{benna2016computational} and Elastic Weight Consolidation (EWC) \citep{kirkpatrick2017overcoming} to either the Q-value parameters or the Successor Feature (SF) parameters. Across all environments, SC consistently outperforms EWC, regardless of whether the consolidation mechanism is applied to the Q-values or the SFs. These results suggest that multi-timescale consolidation dynamics provide a more effective mechanism for maintaining stability under continuous non-stationarity than importance-based regularization alone.

\subsection{Slippery Four Rooms Environment}
\label{section:slippery_fourrooms_sc_ewc_qval_vs_sf_analysis}

\begin{figure}[h!]
    \centering
    \includegraphics[width=\textwidth]{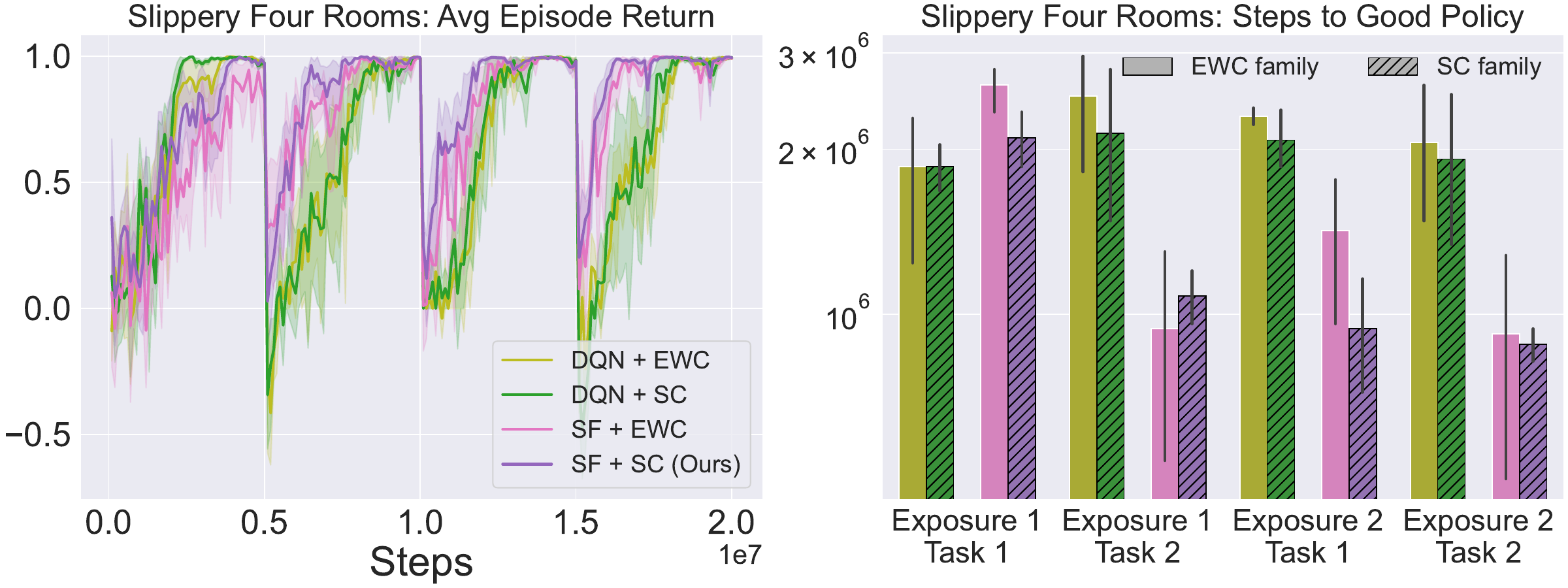}
    \caption[Slippery Four Rooms: Q-values vs. SFs; Synaptic vs. Elastic Weight Consolidation]{Comparison of Q-values and Successor Features (SFs), with synaptic consolidation (SC) or elastic weight consolidation (EWC), in the 3D Slippery Four Rooms environment during training and evaluation. Applying SC to Q-values (green) and SFs (purple) offers higher learning efficiency than their EWC counterparts, requiring fewer steps to learn a good policy. This demonstrates that SC is more effective than EWC, particularly when applied to SFs, during the agent’s second exposure to the tasks.}
    \label{fig:slippery_fourrooms_qvals_sf_sc_ewc_comparison}
\end{figure}

\subsection{MuJoCo suite with continuous mass changes} 
\label{section:mujoco_sc_ewc_qval_vs_sf_analysis}

\begin{figure}[htbp]
    \centering
    \includegraphics[width=0.8\textwidth]{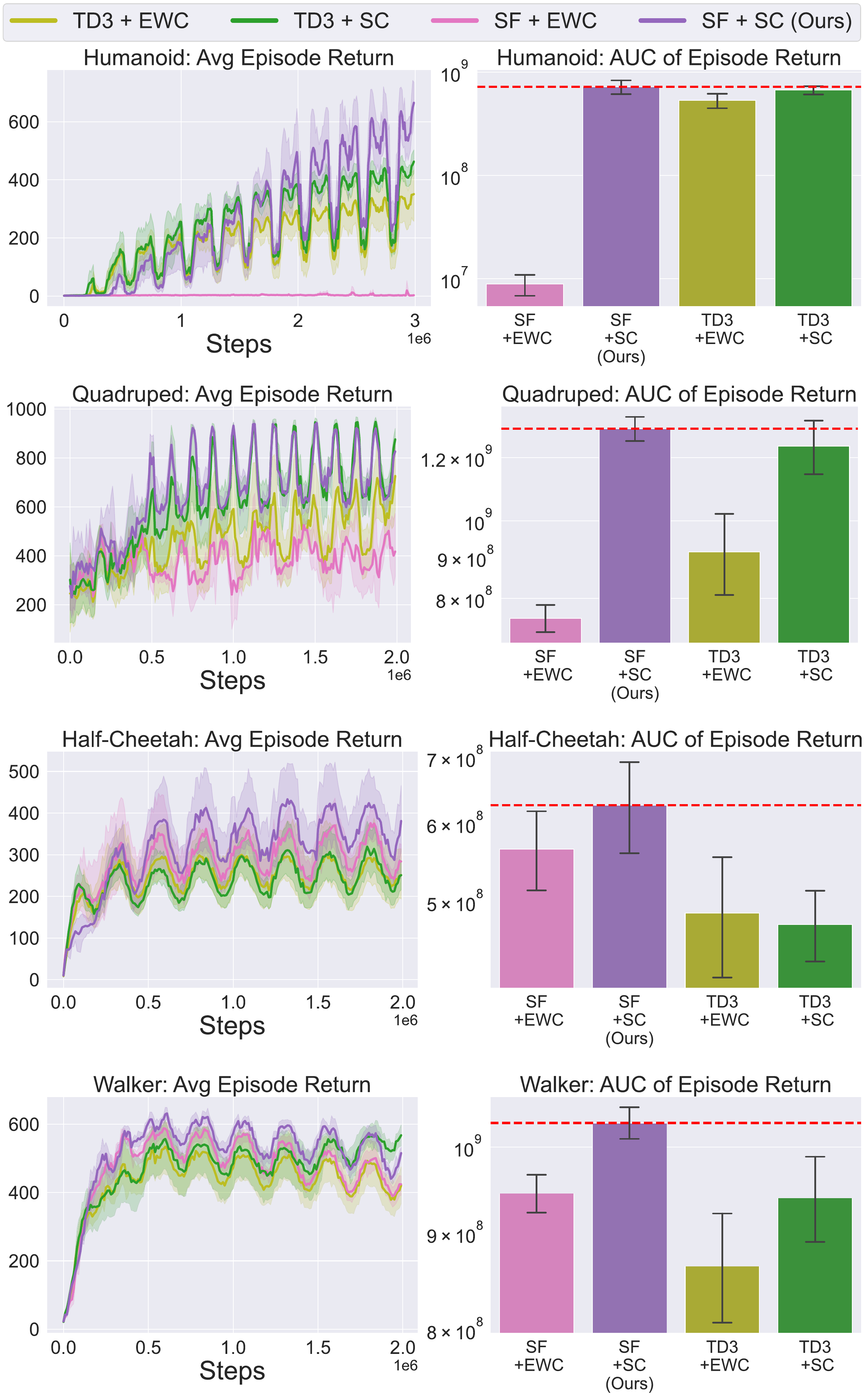}
    \vspace{-5pt}
    \caption[MuJoCo (Mass Changes): Q-values vs. SFs, With and Without Synaptic Consolidation]{Comparison of Q-values and Successor Features (SFs), with and without synaptic consolidation, on the MuJoCo suite under continuous mass changes during training and evaluation. Interestingly, unlike Q-values, applying synaptic consolidation to SFs (purple) yields consistently higher learning efficiency.}
  \label{fig:mujoco_qvals_sf_sc_ewc_comparison}
\end{figure}

\newpage
\section{Q-values vs. SFs, With and Without Synaptic Consolidation Comparison} 
\label{section:sc_qval_vs_sf_analysis}
In this section, we present the results of applying synaptic consolidation to the parameters of Q-values versus the parameters of Successor Features. 

\subsection{Slippery Four Rooms Environment}
\label{section:slippery_fourrooms_sc_qval_vs_sf_analysis}
\begin{figure}[htbp]
    \centering
    \includegraphics[width=\textwidth]{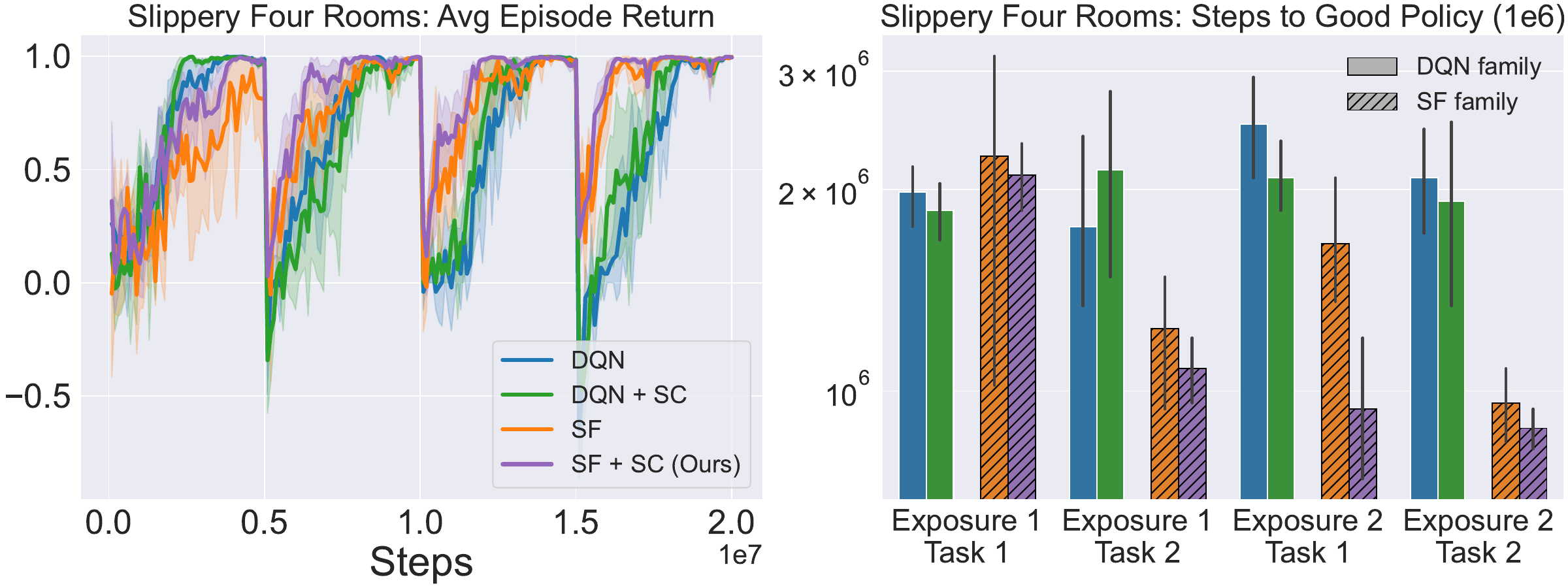}
    \caption[Slippery Four Rooms:Q-values vs SFs synaptic consolidation comparison]{Comparison of consolidating the parameters of Q-values and SFs using Synaptic Consolidation (SC) using the 3D Slippery Four Rooms environment. \textbf{(left): }Average episode return plot. \textbf{(right): } Number of training steps needed to reach a pre-determined good policy. Lesser steps the better.  Applying SC to the SFs (purple) yields better learning performance overall.}
    \label{fig:slippery_fourrooms_qvals_sf_sc_comparison}
\end{figure}

\subsection{MuJoCo suite with periodic mass changes} 
\label{section:mujoco_sc_qval_vs_sf_analysis}
\begin{figure}[htbp]
    \centering
    \includegraphics[width=0.75\textwidth]{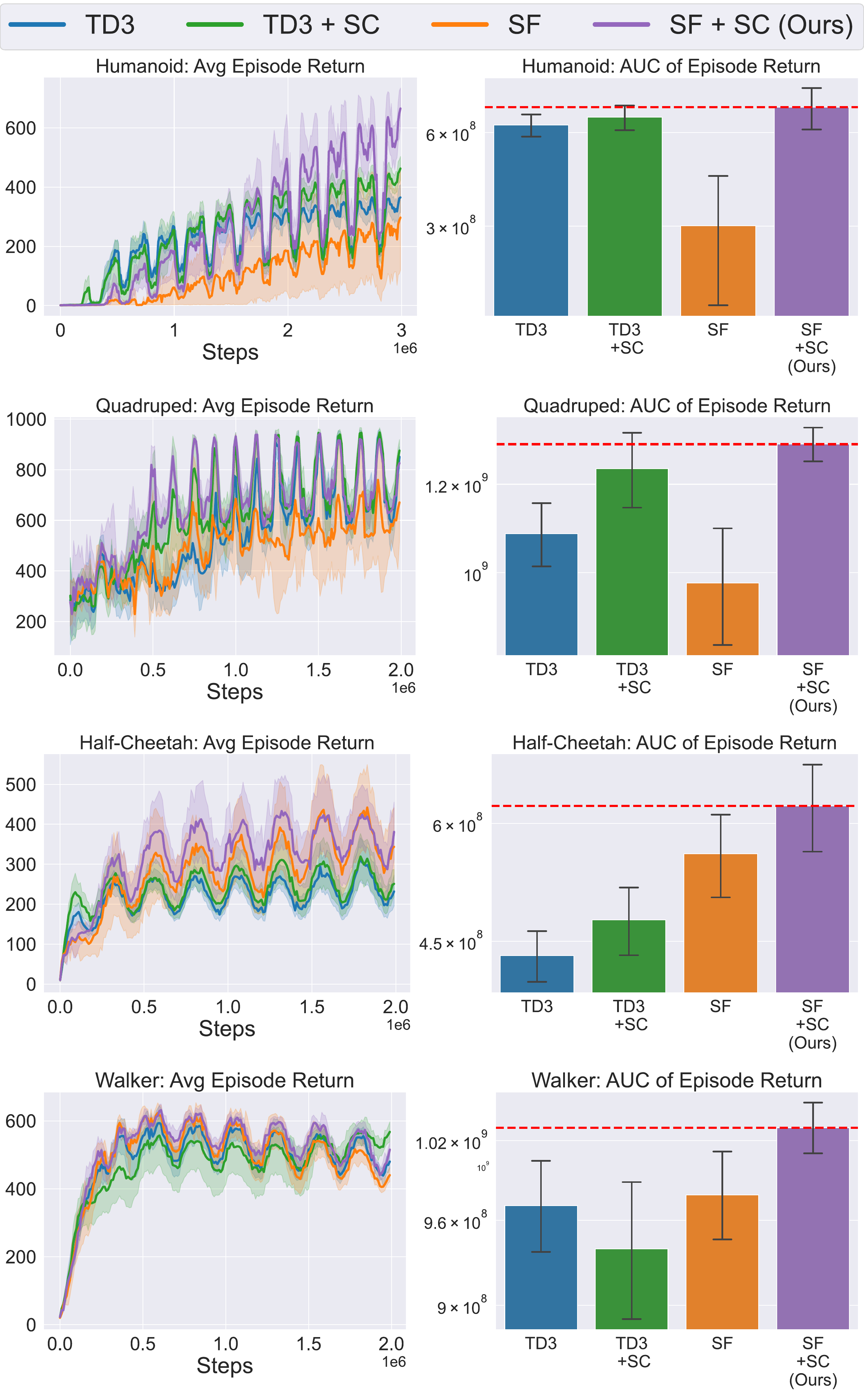}
    \vspace{-10pt}
    \caption[MuJoCo (Periodic Mass Changes): Q-values vs SFs synaptic consolidation comparison]{Comparison of consolidating the parameters of Q-values and SFs using Synaptic Consolidation using the MuJoCo suite. Interestingly, when compared to TD3 (blue), SFs (orange) learn well in Half-Cheetah and Walker but not Quadruped and Humanoid. This is probably due to higher complexity in Quadruped and Humanoid as they have larger state and action spaces. Overall, applying SC to the SFs (purple) yields better learning performance, highlighting their effectiveness when combined together.}
    \label{fig:mujoco_qvals_sf_sc_comparison}
\end{figure}

\newpage
\subsection{MuJoCo suite with non-periodic mass changes} 
\label{section:mujoco_sc_qval_vs_sf_analysis_non_periodic}
\begin{figure}[htbp]
    \centering
    \includegraphics[width=0.75\textwidth]{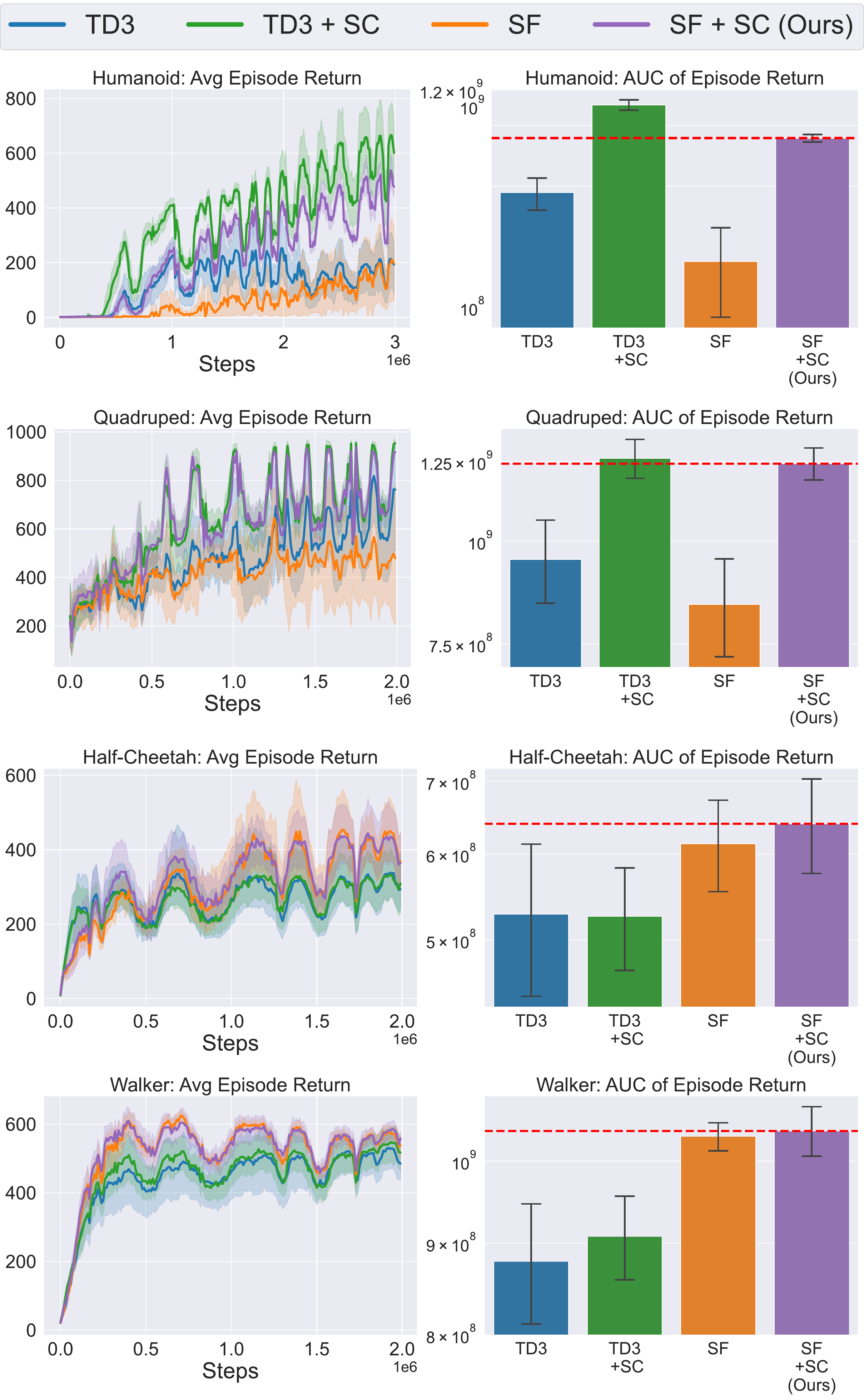}
    \vspace{-10pt}
    \caption[MuJoCo (Non-Periodic Mass Changes): Q-values vs SFs synaptic consolidation comparison]{Comparison of consolidating the parameters of Q-values and SFs using Synaptic Consolidation using the MuJoCo suite when embodiments undergo non-periodic mass changes. Interestingly, when compared to TD3 (blue), SFs (orange) learn well in Half-Cheetah and Walker but not Quadruped and Humanoid. This is probably due to higher complexity in Quadruped and Humanoid as they have larger state and action spaces. Unlike in the periodic mass changes setting, here, applying SC to the SFs (purple) only yields better learning performance in simpler embodiments such as Half-Cheetah and Walker, but struggle in the more complex embodiments such as Quadruped and
 Humanoid when compared to applying SC to the Q-values (green).}
    \label{fig:mujoco_non_periodic_qvals_sf_sc_comparison}
\end{figure}

\newpage
\subsection{MuJoCo suite with Ornstein-Uhlenbeck mass changes} 
\label{section:mujoco_sc_qval_vs_sf_analysis_ou}
\begin{figure}[htbp]
    \centering
    \includegraphics[width=0.8\textwidth]
    {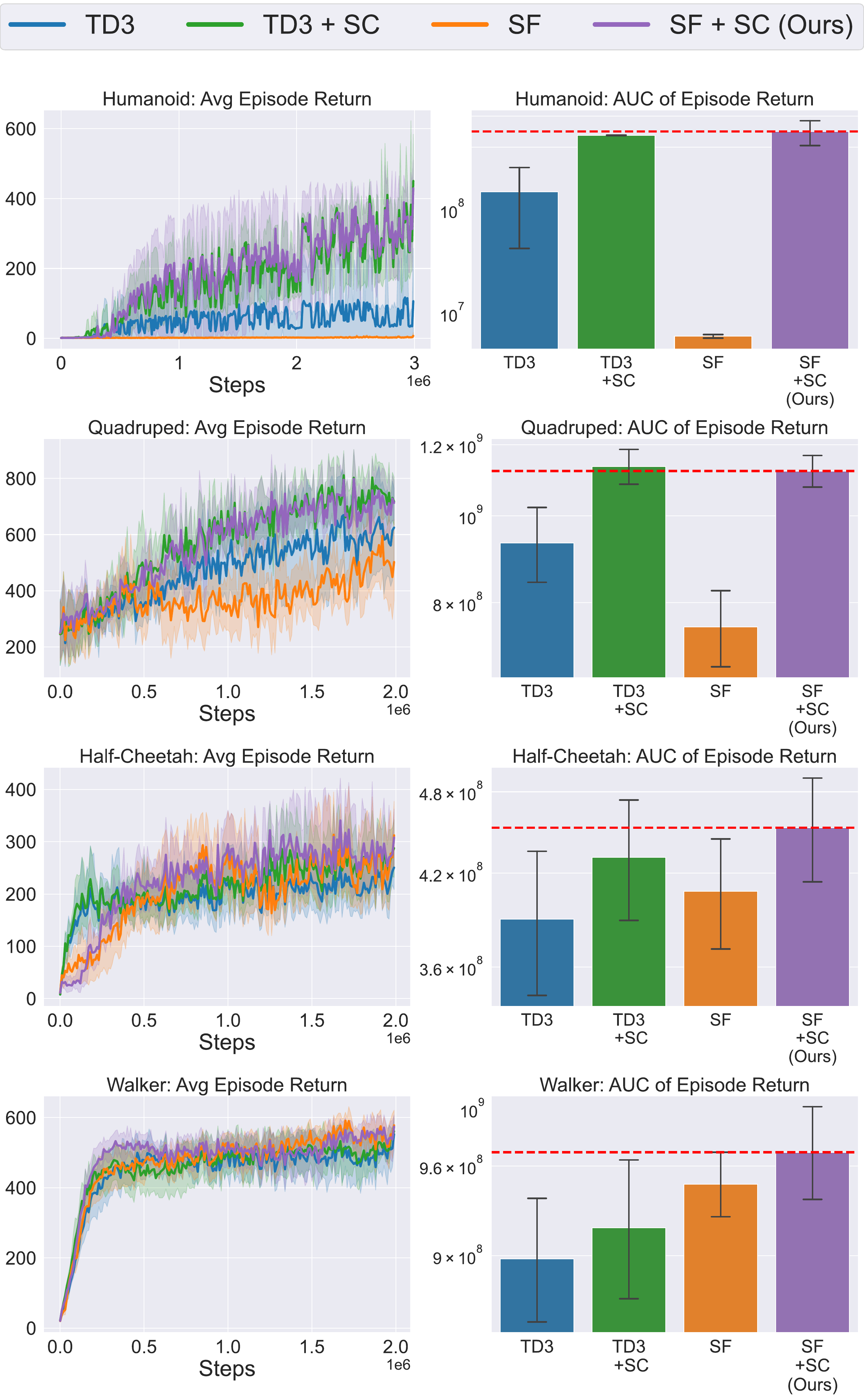}
    \vspace{-10pt}
      \caption[MuJoCo (Ornstein-Uhlenbeck Mass Changes): Q-values vs SFs synaptic consolidation comparison]{Comparison of consolidating the parameters of Q-values and SFs using Synaptic Consolidation using the MuJoCo suite when embodiments under Ornstein-Uhlenbeck mass Changes. In this setting, applying SC to the SFs (purple) only yields better learning performance compared to applying SC to Q-values.}
  \label{fig:mujoco_ou_qvals_sf_sc_comparison}
\end{figure}

\newpage
\section{Analysis of Fast and Slow Timescale Variables}
\label{section:num_consolidation_modules_analysis}
In this section, we present analyses aimed at understanding the contributions of the different timescale variables. To do so, we control for the number of consolidation variables. Increasing the number of consolidation variables allows the Successor Features to be preserved over longer timescales.
\subsection{Slippery Four Rooms Environment Results}
\label{section:slippery_fourrooms_num_consolidation_modules_analysis}

\begin{figure}[h!]
    \centering
    \includegraphics[width=0.9\textwidth]{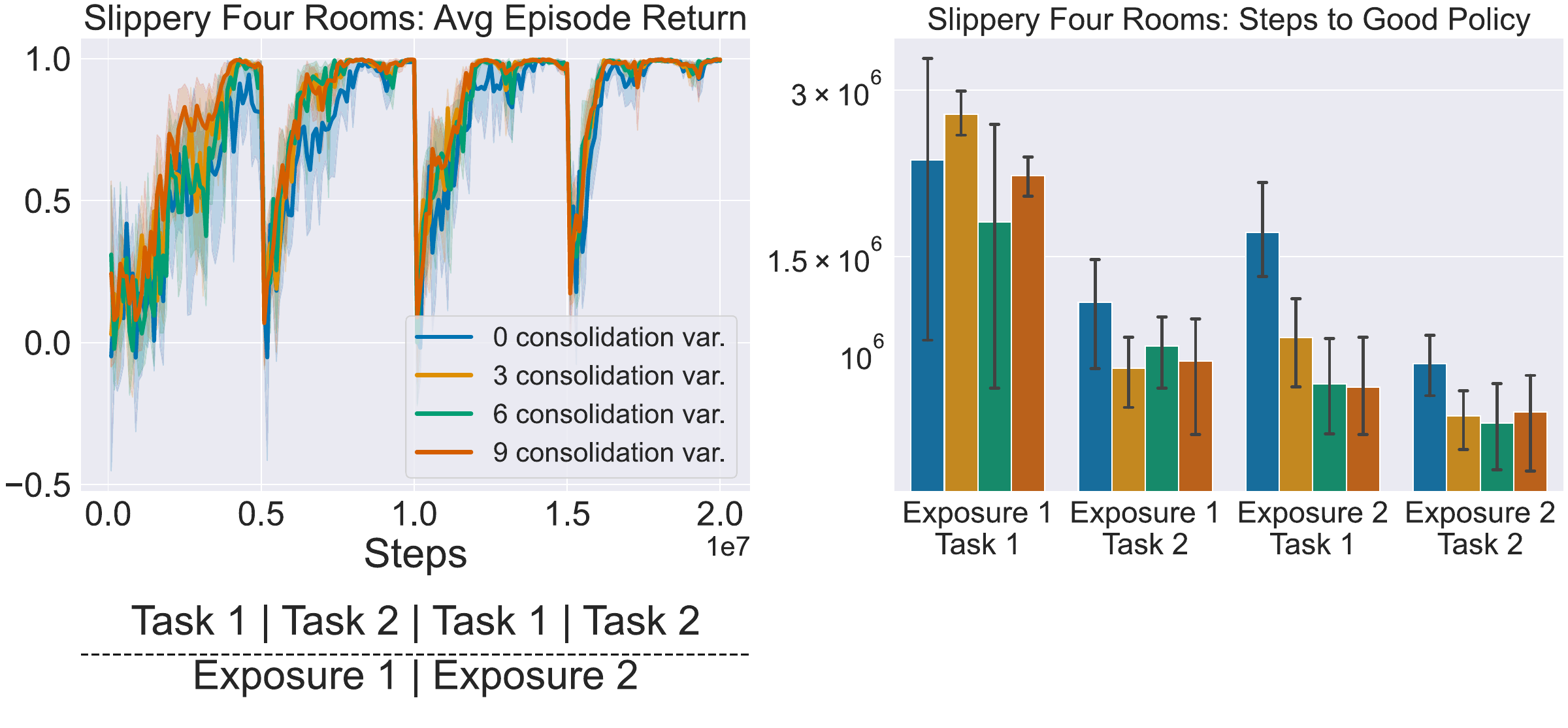}
    \caption[Slippery Four Rooms: Analysis of Fast and Slow Timescale Variables]{Analysis of fast and slow timescale variables in the 3D Slippery Four Rooms environment during training and evaluation. Using synaptic consolidation clearly leads to better learning efficiency, but there is no clear advantage between six and nine consolidation variables.}
    \label{fig:slippery_fourrooms_num_consolidation_variables_analysis}
\end{figure}

\subsection{MuJoCo suite Results}
\label{section:mujoco_num_consolidation_modules_analysis}
\begin{figure}[htbp]
    \centering
    \includegraphics[width=0.8\textwidth]{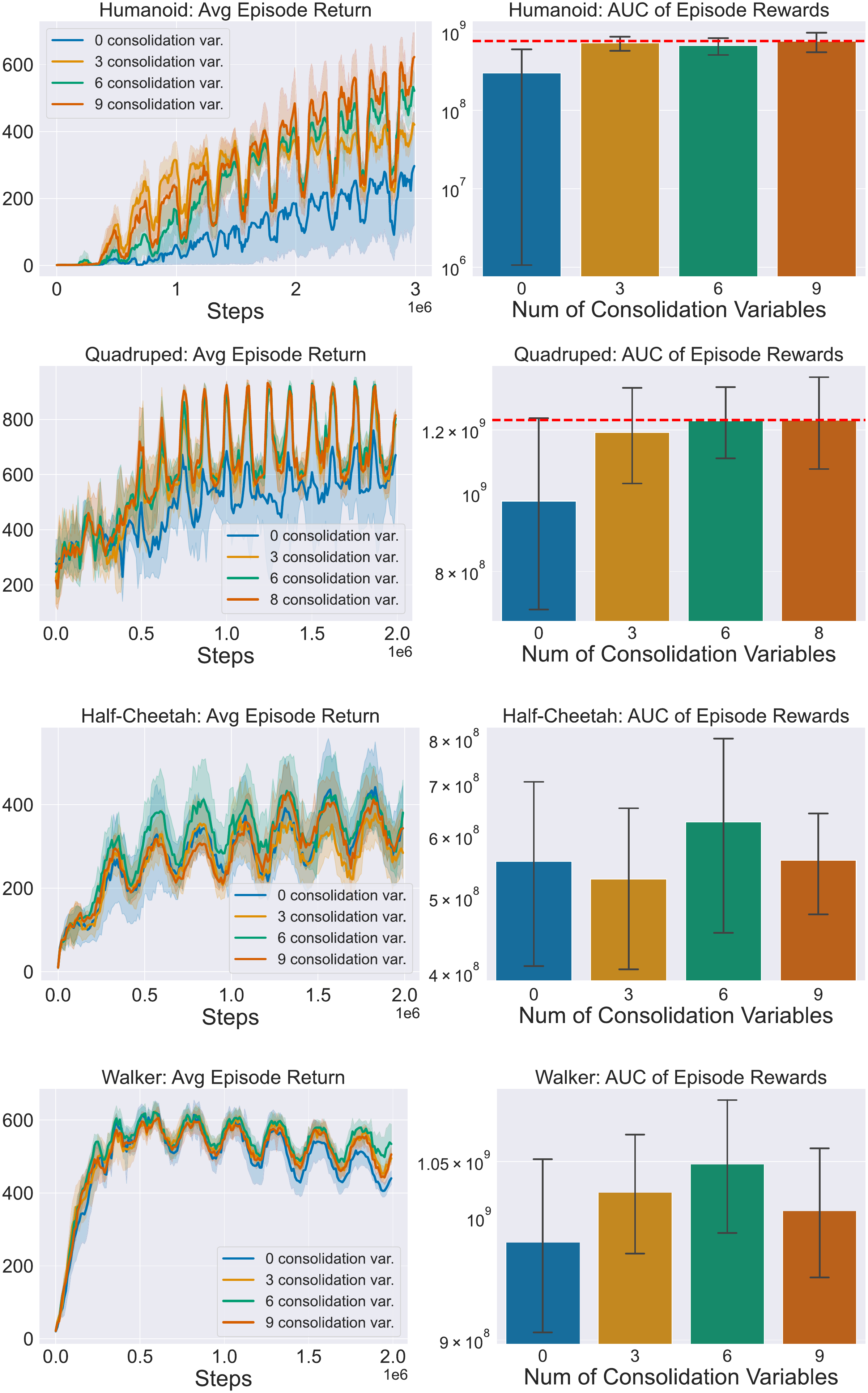}
    \caption[MuJoCo (Mass Changes): Analysis of Fast and Slow Timescale Variables]{Analysis of fast and slow timescale variables on the MuJoCo suite under continuous mass changes during training and evaluation. Using more consolidation variables (six, eight or nine) yields consistently higher learning efficiency, highlighting the importance of slower-timescale variables.}
  \label{fig: mujoco_num_consolidation_variables_analysis}
\end{figure}

\newpage
\section{Architecture for recalling consolidated Successor Features using cross-attention} 
\label{section:sf_sc_attention_architecture}
We adapt the canonical cross-attention mechanism by letting the reward weight vector $w$ serves as the query, while the SF consolidation variables (excluding the most plastic $SF_{u_{1}}$) serve as keys and values. To construct more discriminative representations, we subtracted each variable from its faster neighbor (eg. $SF_{u_k}$ = $SF_{u_k} - SF_{u_{k-1}}$). The keys and values were layer-normalized before projected through learnable weights $(W_\text{Keys}, W_\text{Values})$, while the query vector $w$ is projected through $W_\text{Query}$. Attention scores were computed via the query-keys multiplication, followed by softmax activation, which was then multiplied by the Values to produce a weighted sum, thus integrating information across timescales (see Appendix \ref{section:sf_sc_attention_architecture} for more details). Since the deeper SF consolidation variables $(SF_{u_2}, SF_{u_3}, \ldots)$ were computed analytically and not part of the computational graph, we applied a reparameterization trick (inspired by \citet{kingma2013auto}) to enable joint training of the cross-attention mechanism with the most plastic variable $(SF_{u_1})$ using the Q-SF-TD loss (Eq. \ref{eq:sf_td_loss}).

\begin{figure}[htbp]
    \centering
    \includegraphics[width=0.9\textwidth]{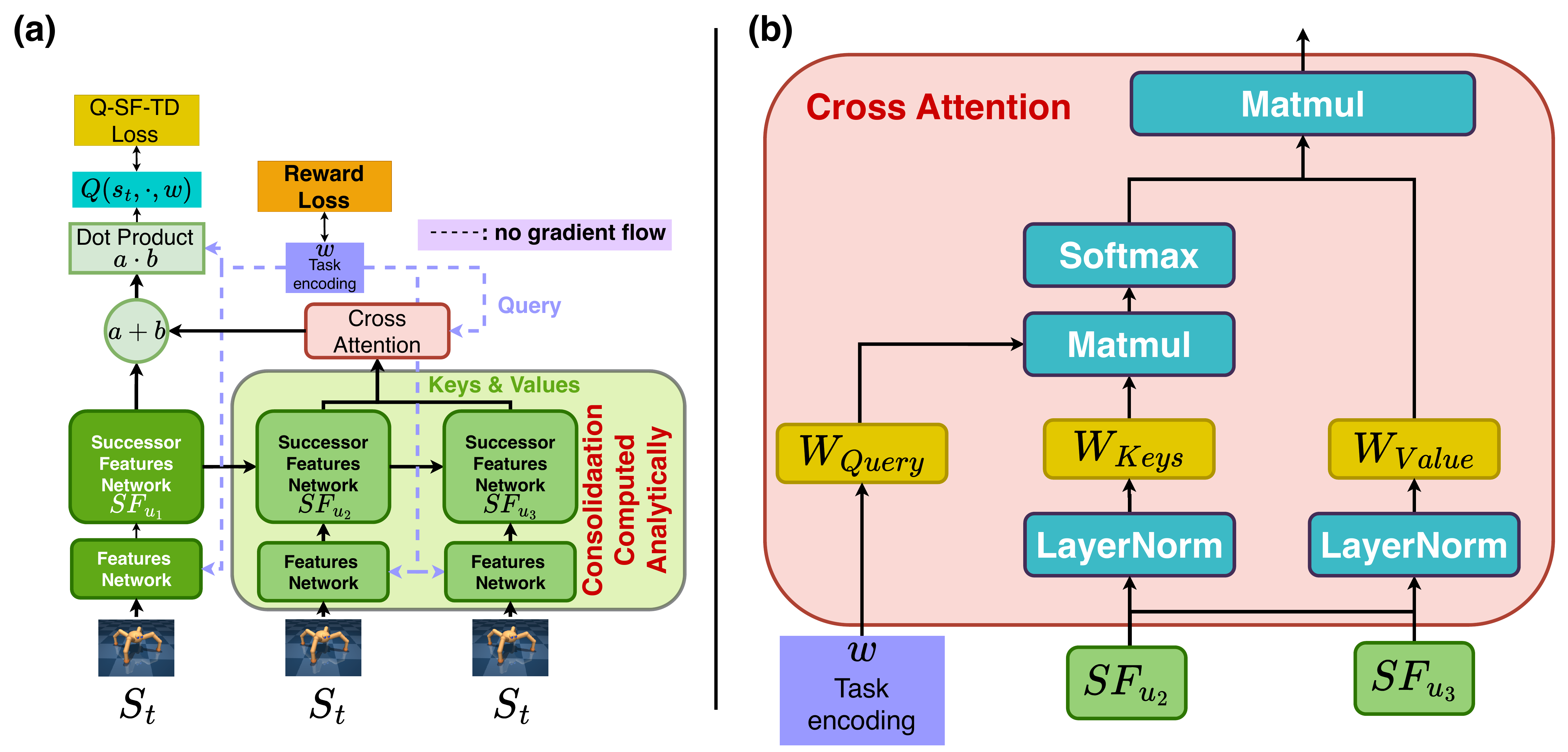}
    \caption{Using cross-attention to recall information from the SF consolidation modules. \textbf{(a: } A high-level schematic on how the cross-attention mechanism is used. \textbf{(b: } The computations for the cross-attention mechanism. We used the reward weight vector $w$ as the query, the SFs consolidation variables except the most plastic one as keys and values ($SF_{u_2}, SF_{u_3}, \ldots, SF_{u_K}$). Because these SFs consolidation variables are computed analytically, they are not part of the computational graph. Therefore, we apply a reparameterization trick to add the output of the cross-attention mechanism to the most plastic SF ($SF_{u_1})$ so that the learnable weights ($W_{Query}, W_{Keys}, W_{Values}$) in the cross-attention mechanism are learned via the Q-SF-TD loss (Eq. \ref{eq:sf_td_loss}).}
    \label{fig:sf_sc_attention_architecture}
\end{figure}

\newpage
\section{Cross-Attention Analysis of Fast and Slow Timescale Variables} 
\label{section:cross_attention_analysis}
In this section, we perform a set of analyses using the cross-attention architecture (Figure ~\ref{fig:sf_sc_attention_architecture}) to better understand the functional roles of the individual timescale variables. By allowing interactions across Successor Features consolidated over different timescales, the architecture enables us to examine how rapidly adapting versus slowly varying predictive representations contribute to the stability–plasticity tradeoff under naturalistic non-stationarity.
\subsection{Slippery Four Rooms Environment Results}
\label{section:slippery_fourrooms_cross_attention_analysis}
\begin{figure}[h!]
    \centering
    \includegraphics[width=\textwidth]{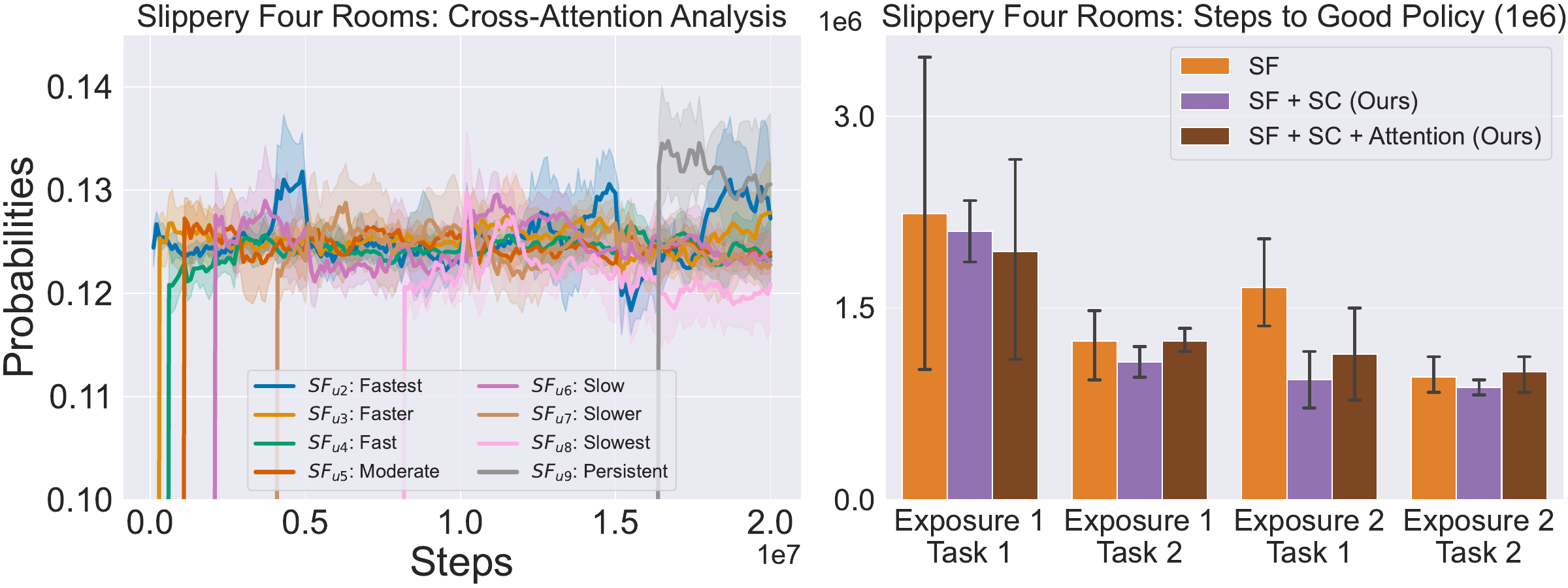}
    \caption[Slippery Four Rooms: Cross-Attention over Consolidated Variables]{Analysis of all consolidated variables using Cross-Attention during training in the 3D Slippery Four Rooms environment. The cross-attention probabilities indicate that fast and slow timescale variables were attended to similarly, suggesting nearly equal contribution. This may be due to the sparse reward structure in the 3D Slippery Four Rooms environment, which affects how discriminate the SFs are given that the SFs are learned via the reward signal using Q-SF-TD loss (Eq. \ref{eq:sf_td_loss})}
    \label{fig:slippery_fourrooms_cross_attention_analysis}
\end{figure}

\newpage
\subsection{MuJoCo suite Results}
\label{section:mujoco_cross_attention_analysis}
\begin{figure}[h!]
    \centering
    \includegraphics[width=0.7\textwidth]{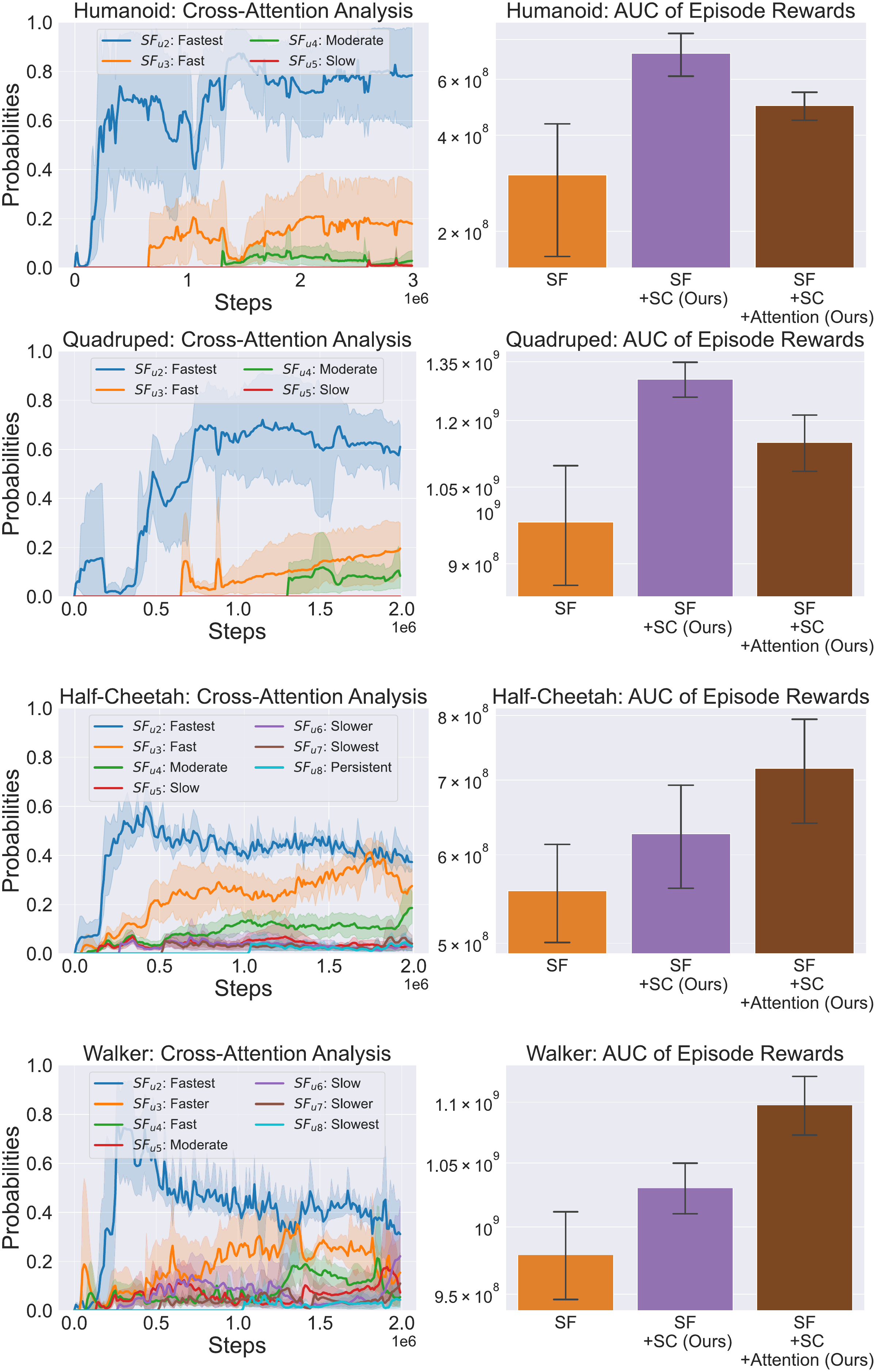}
    \caption[MuJoCo (Mass Changes): Cross-Attention over Consolidated Variables]{Analysis of all consolidated variables using cross-attention in the MuJoCo suite under continuous mass changes. Memory recall was performed solely through the cross-attention mechanism, rather than by waiting for information to propagate from slower to faster timescale variables. Unsurprisingly, faster timescale variables were attended to more than slower ones. Notably, Half-Cheetah and Walker benefited from memory recall via cross-attention, whereas Quadruped and Humanoid require more steps to learn. This may be due to the higher complexity of Quadruped and Humanoid, as their larger state and action spaces reduce the learning efficiency of the cross-attention mechanism.}
  \label{fig: mujoco_cross_attention_analysis}
\end{figure}

\newpage

\begin{figure}[h!]
    \centering
    \includegraphics[width=\textwidth]{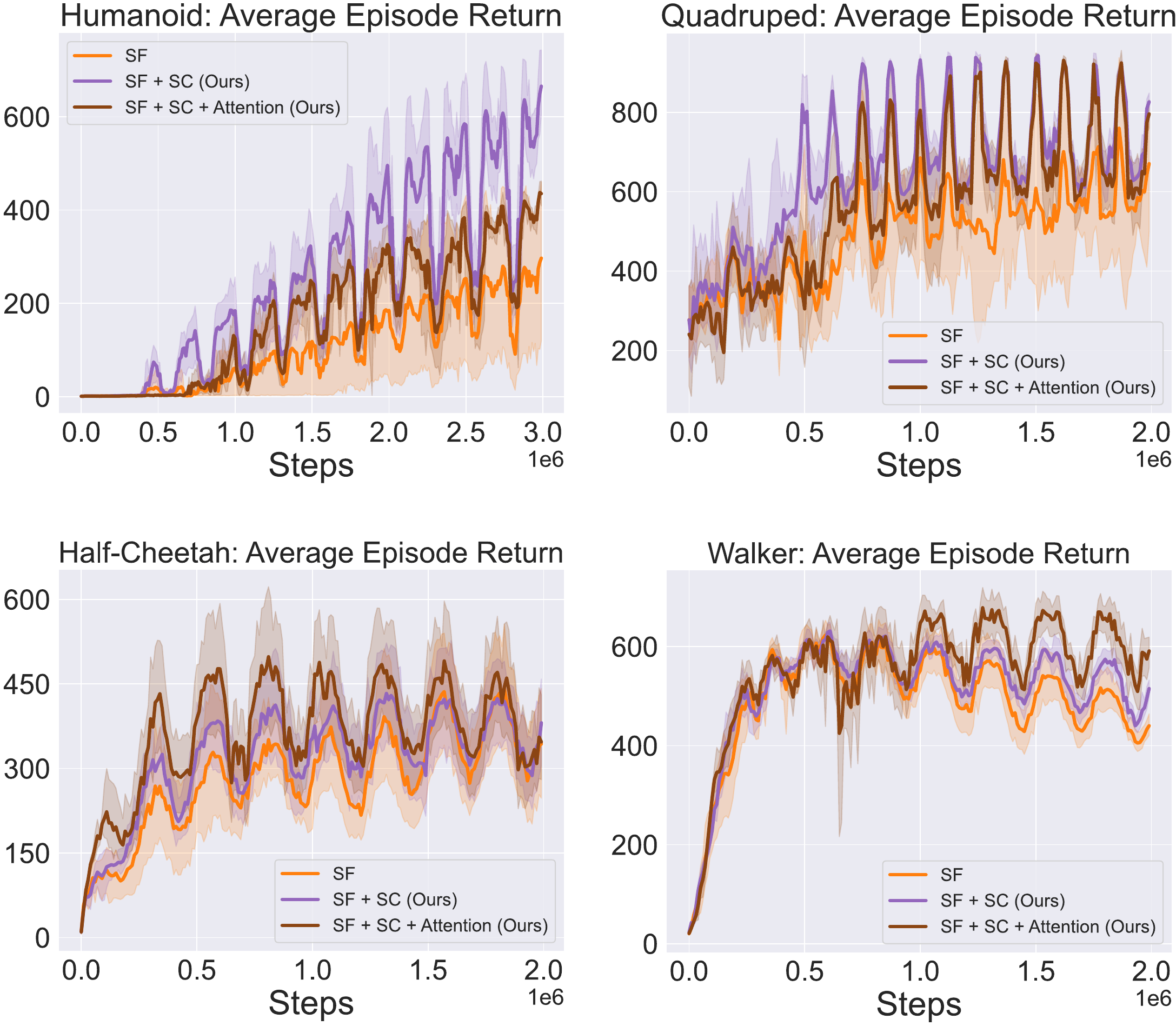}
   \caption{Learning curves in the MuJoCo suite under continuous mass changes with cross-attention over consolidated variables. Faster timescale variables were generally attended to more strongly than slower ones as shown in Figure \ref{fig: mujoco_cross_attention_analysis}. Half-Cheetah and Walker benefited from cross-attention–based memory recall, whereas Quadruped and Humanoid require more steps to learn, likely due to their higher complexity and larger state–action spaces which reduces the learning efficiency of the cross-attention mechanism.}
  \label{fig:mujoco_cross_attention_learning_curves}
\end{figure}

\newpage
\section{Agents}
\label{section:agents}
In this section, we describe our agent as well as the ones we used for comparisons.

\subsection{Successor Features with Synaptic Consolidation}
\label{section:sf_sc_architecture}
\begin{figure}[h]
    \centering
    \includegraphics[width=\textwidth]{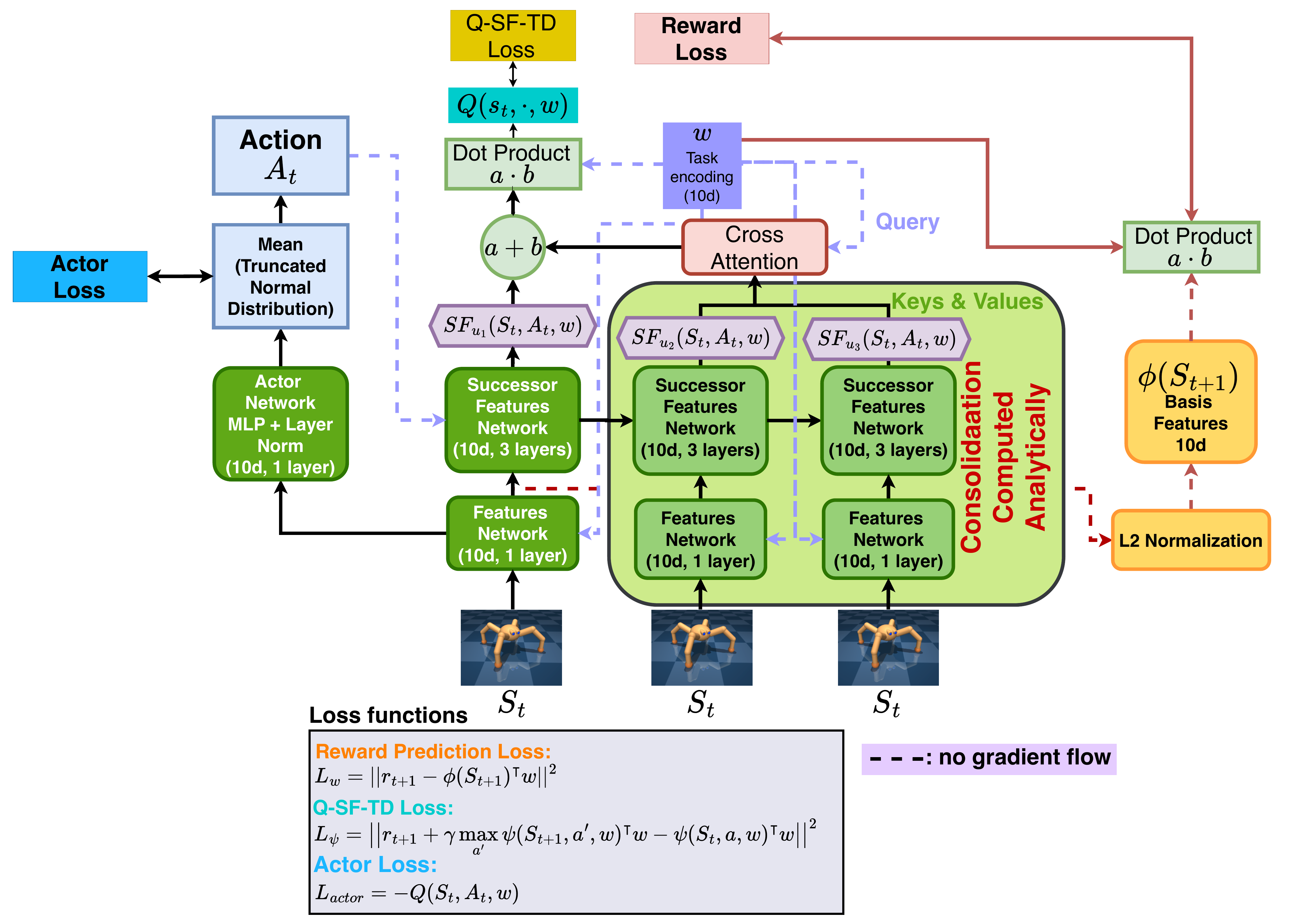}
    \caption{Simple SFs with synaptic consolidation architecture. Simple SFs were adapted from \citep{chua2024learning}, with TD3 \citep{Lillicrap_2015} as base model. The synaptic consolidation variables are updated analytically (see section \ref{section:our_method} for more details on the consolidation variables).}
    \label{fig:sf_sc_architecture}
\end{figure}

We swept the task learning rate of the reward weight vector across the values of $\{10^{-5}, 10^{-6}, \dots, 10^{-10}\}$ when optimizing the reward prediction loss (Eq. \ref{eq:r_pr_loss})  for the MuJoCo suite. In the naturalistic, continual non-stationary setting that we are studying, we find that a lower learning rate for learning the reward weight vector $w$ generally helps.

The DQN variant largely follows the same architecture, but without the actor network. The only major difference is that hidden dimensions are set to 256. The encoder architecture is the same as DQN encoder.

\begin{table*}[ht]
\caption{Simple SF with Synaptic Consolidation Hyperparameters}
\label{table:simple_sf_hyperparameters}
\vskip 0.15in
\begin{center}
\begin{small}
\begin{sc}
\begin{tabular}{l|l}
\toprule
Parameter & Value \\
\midrule
optimizer & Adam \citep{kingma2014adam} \\
discount($\gamma$) & 0.99 \\
replay buffer size & 1 Million\\
Double Q & Yes \citep{fujimoto2018addressing} \\
Target network: update period & 1000\\
Target smoothing coefficient & 0.01\\
Min replay size for sampling & 5000\\
Learning rate & 0.0001 \\

\midrule
Number of consolidation variables & 9\\
Flow strength $(g_{1,2})$ & 0.125 \\
Capacity for the first variable ($C_1$) & 2 \\
Learning rate for consolidation & 1 \\
Capacity scaling factor ($s$) & 20\\

\midrule
Basis features $\phi$ & l2-normalize (output of encoder) \\
feature $\phi$ dimension & 10 \\ 
\midrule
Features-task network hidden units & \{1024, 10\}\\
Features-task network normalization & Layer-Norm \\
Features-task network Non-linearity & Tanh \\
\midrule
SF $\psi$ dimension & 10 \\
SF $\psi$ network hidden units & \{1024, 1024, SF $\psi$ dim\}\\ 
SF $\psi$ network Non-linearity & ReLU \\
\bottomrule
\end{tabular}
\end{sc}
\end{small}
\end{center}
\vskip -0.1in
\end{table*}

\begin{table*}[ht]
\caption{Task $w$ encoding Hyperparameters}
\label{table:task_hyperparameters}
\vskip 0.15in
\begin{center}
\begin{small}
\begin{sc}
\begin{tabular}{l|l}
\toprule
Parameter & Value \\
\midrule
task $w$ dimension & 10\\
task $w$ learning rate & environment-dependent (see table  \ref{table:miniworld_hyperparameters} \& \ref{table:mujoco_parameters})\\
task $w$ optimizer & Adam \citep{kingma2014adam} \\
\bottomrule
\end{tabular}
\end{sc}
\end{small}
\end{center}
\vskip -0.1in
\end{table*}

\newpage
\subsection{Elastic Weight Consolidation}
\label{section:ewc_model}
For the  Elastic Weight Consolidation (EWC) agents \citep{kirkpatrick2017overcoming}, we adapt the online variant \citep{schwarz2018progress}, such that instead the fisher information is computed at every $k$ steps instead of waiting till the end of the task, thus removing the need for tasks boundaries information. The EWC loss is defined as: 
\begin{align}
    L(\theta) = L_{TD} (\theta) + \sum_i \frac{\lambda}{2}F_i(\theta_{i} - \theta^*_i)^2
    \label{eq:ewc_loss}
\end{align}
where $i$ is the number of parameters, $\lambda \in \mathbb{R}$ is the regularization factor, $L_{TD}$ can be either the DQN loss or Q-ST-TD loss (Eq. \ref{eq:sf_td_loss}),
 if we are learning SFs. The fisher information is computed by squaring the gradients of the parameters. In our experiments, we set the fisher computation interval to every 10k steps, which is the number of steps per episode in MuJoCo. We also swept the regularization factor $\lambda \in \{12, 25, 75, 125, 175\}$, following the same setup as \citep{schwarz2018progress}. In our experiments, we find $\lambda=25$ works best.

\subsection{Plasticity Injection Model}
\label{section:plasticity_injection_model}
For the plasticity injection models, we consider plasticity injection on the last layer (P-last) for both the Slippery Four Rooms environment and the MuJoCo suite. We follow the same setup as \citep{nikishin2023deep}, whereby during the plasticity injection step $k$, we retain freeze the parameters of last layer in the artificial neural network $\theta$ and introduce a new set of parameters $\theta^{'}$ which is sampled from random initialization. The set of new of parameters is then further copied, such that we will have $\theta^{'} = \theta^{'}_{1} = \theta^{'}_{2}$ and the output of the network is computed using:
\begin{align}
    h_{\theta}(x) + h_{\theta^{'}_{1}}(x) - h_{\theta^{'}_{2}}  
\end{align}
where $h$ is the function of the artificial neural network and $x$ is the input. After the plasticity injection step $k+1$ and beyond, only $\theta^{'}_{1}$ is allowed to be updated while $\theta^{'}, \theta^{'}_{2}$ are kept frozen, leading ($\theta^{'} - \theta^{'}_{2}$) to be the bias term. 

We experimented with injecting plasticity at 25\%, 50\% and 75\% of the training steps, but observed little effect in our setting. We also found that this method was ineffective when adapted to TD3 and evaluated in MuJoCo. Therefore, we included Continual Backprop \citep{dohare2024loss} as additional baseline for both the Slippery 3D Four Rooms environment and MuJoCo. 

\subsection{Continual Backprop}
\label{section:continual_backprop_model}
The plasticity injection method described above can hurt learning performance when injecting plasticity into the critic network of actor-critic architecture. Therefore, we consider another baseline model, known as continual backprop (CBP), that is more effective in enhancing plasticity \citep{dohare2024loss}. Rather than re-initializing parameters at random, CBP selectively injects plasticity into parameters who were least useful for the current task. This least useful metric is known as the contribution utility, which measures both the hidden unit's activity and its outgoing connection strength. For each hidden unit $i$ in layer $l$ at time $t$, the contribution utility is defined as:

\begin{align}
    u_l[i] = \eta \times u_l[i] + (1-\eta) \times |h_{l,i,t}| \times \sum_{k=1}^{n_{l+2}} |w_{l,i,k,t}|
\end{align}
where $h_{l,i,t}$ is the output of the $i$th hidden unit in layer $l$ at time $t$, $w_{l,i,k,t}$ is the weight connecting the $i$th unit in layer $l$ to the $k$th unit in layer $l+1$ at time $t$ and $n_{l+1}$ is the number of units in the next layer $l+1$. This contribution utility can be thought of as a running average of instantaneous contributions, with a decay rate $\eta$. 

At each step, CBP identifies eligible units for re-initialization based on two criteria, first is the low contribution utility which indicates that the unit has not been useful in recent training phase, and second is the lifespan, which ensures that units are only re-initialized after allowing to have a sufficient steps of learning. By periodically resetting under-utilized units, CBP ensures that the network maintains plasticity throughout learning. In our experiments, we broadly follow the parameters defined in \citep{dohare2024loss}. We swept the replacement rate across the values of $\{10^{-3}, 10^{-4}, 10^{-5}\}$ but observed little effect in our setting, so we kept it at $10^{-4}$. We also swept the maturity threshold across the values of $\{100, 1000, 10000\}$ and also observed little effect, so we kept it as 1000.

In our experiments, unsurprisingly, we found CBP to be much more effective than the plasticity injection model (P-last) introduced in section \ref{section:plasticity_injection_model}. Particularly in Quadruped, Half-Cheetah and Walker (Figure \ref{fig:mujoco_stability_plasticity_analysis}), we see that CBP outperforms P-last. However, across MuJoCo tasks, CBP generally failed to outperform its base model, TD3, suggesting that in our natural and continually evolving environments, stability rather than plasticity, is the critical bottleneck, since injecting plasticity did not improve performance. 

\newpage
\section{Implementation Details}
\label{section:implementation_details}
For our experimental setup, we used Python 3 \citep{Rossum_2009} as the primary programming language. The agents and the training framework were developed using Jax \citep{jax2018github,jraph2020github} for the Slippery Four Rooms environment and PyTorch \citep{paszke2019pytorch} for the MuJoCo embodiments \citep{todorov2012mujoco}. We used the DeepMind Control Suite (DMC) \citep{tunyasuvunakool2020dm_control} to manipulate the embodiments. Flax \citep{flax2020github} was employed for implementing the neural network components. For data visualization, we used Seaborn \citep{Waskom2021}, Matplotlib \citep{Hunter:2007} on Jupyter Notebooks \citep{Kluyver:2016aa} to generate the plots. The configuration and management of our experiments were performed with Hydra \citep{Yadan2019Hydra} and Weights \& Biases \citep{wandb} respectively. All experiments were conducted using Nvidia A100 GPUs and completed within one to two days max. The code used in the study will be released in the near future, following an internal review process.

\newpage
\section{Computational Complexity}
\label{section:computational_complexity}
In this section, we report the computational complexity of all methods, measured in frames per second (FPS) during training. Across both the Slippery Four Rooms and the MuJoCo Humanoid experiments, our model (SF + SC) exhibits the lowest throughput. This slowdown arises from the consolidation dynamics, which require analytical, sequential updates between consolidation variables, in addition to learning the SFs and the reward weight vector $\boldsymbol{w}.$ Because these updates are not handled by backpropagation and cannot be parallelised, they introduce a constant-factor computational overhead inherent to the mechanism. Moreover, this overhead increases with the number of consolidation variables, where using more variables leads to proportionally lower FPS.

\subsection{Slippery Four Rooms Environment}
\begin{figure}[h]
    \centering
    \includegraphics[width=0.85\textwidth]{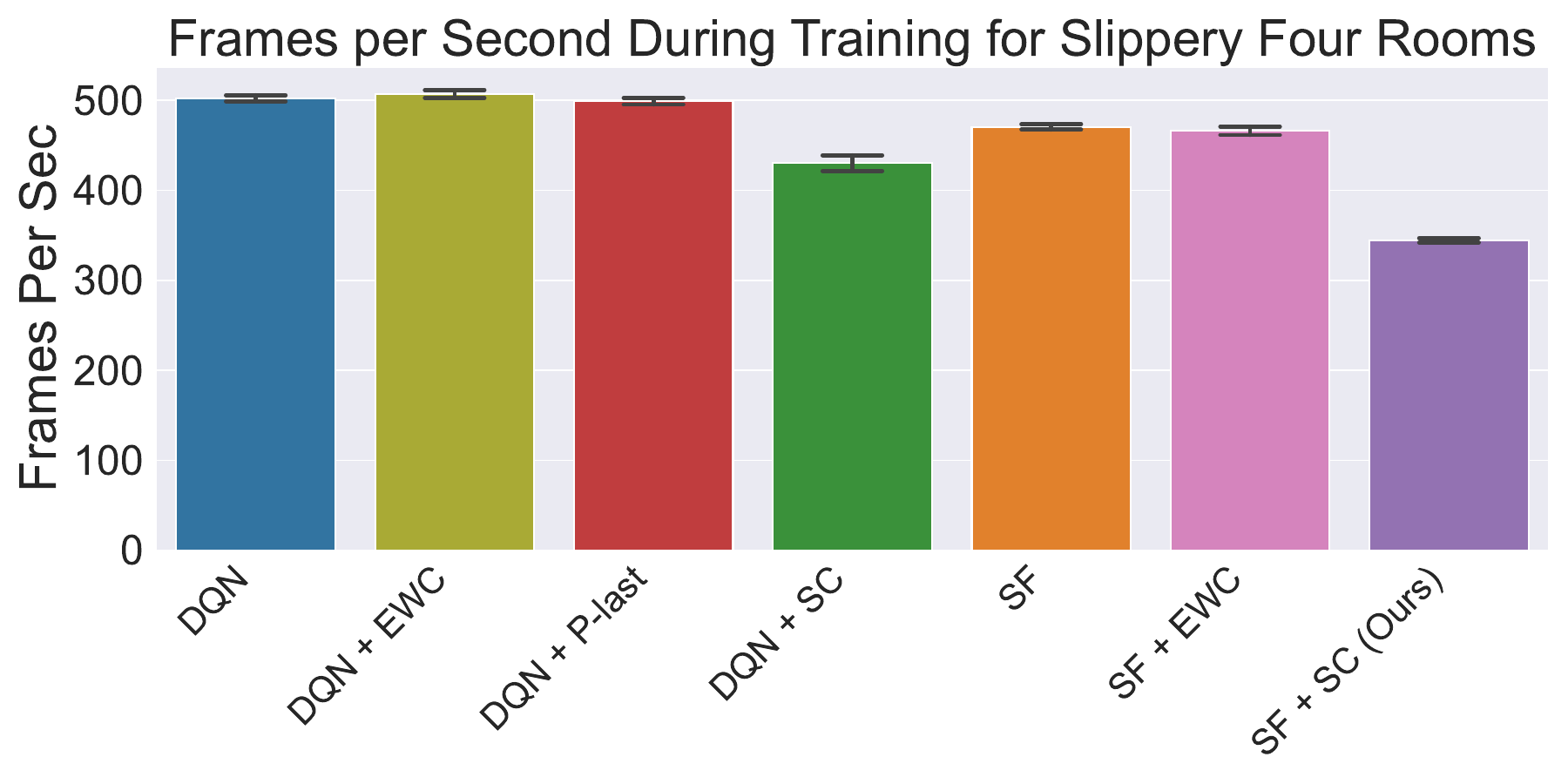}
    \caption{Comparison of training throughput (FPS) for all models in the Slippery Four Rooms environment. Higher FPS reflects more efficient computation.}
    \label{fig:computational_complexity_slippery_four_rooms}
\end{figure}

\begin{figure}[h]
    \centering
    \includegraphics[width=0.85\textwidth]{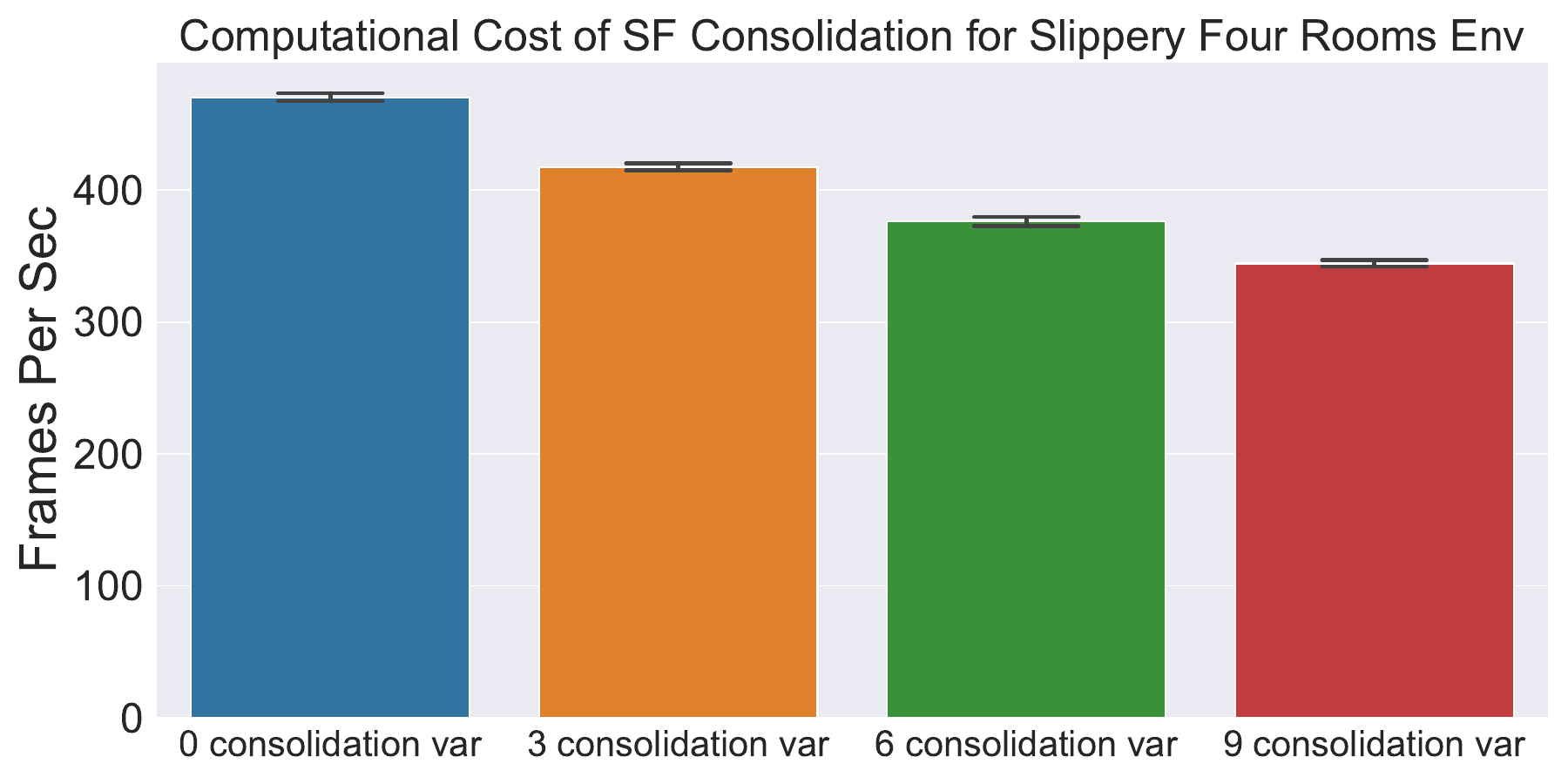}
    \caption{Comparison of training throughput (FPS) for different number of consolidation variables for the SFs within the slippery four rooms environment. Higher FPS reflects more efficient computation.}
    \label{fig:computational_complexity_num_consolidation_slippery_four_rooms}
\end{figure}

\subsection{MuJoCo - Humanoid}
\begin{figure}[h]
    \centering
    \includegraphics[width=0.85\textwidth]{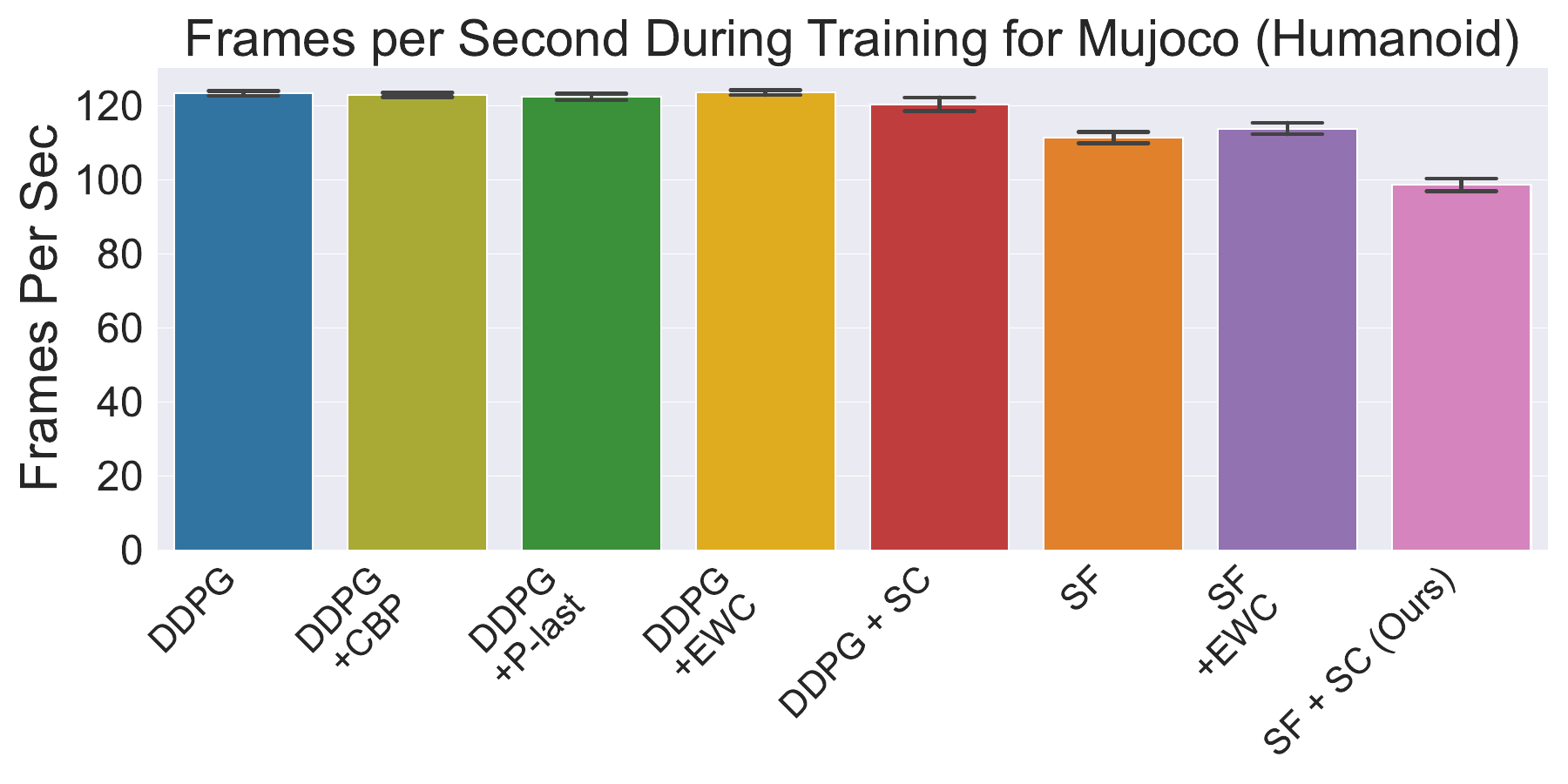}
    \caption{Comparison of training throughput (FPS) for all models in the humanoid embodiment within the MuJoCo environment. Higher FPS reflects more efficient computation.}
    \label{fig:computational_complexity_humanoid}
\end{figure}

\begin{figure}[h]
    \centering
    \includegraphics[width=0.85\textwidth]{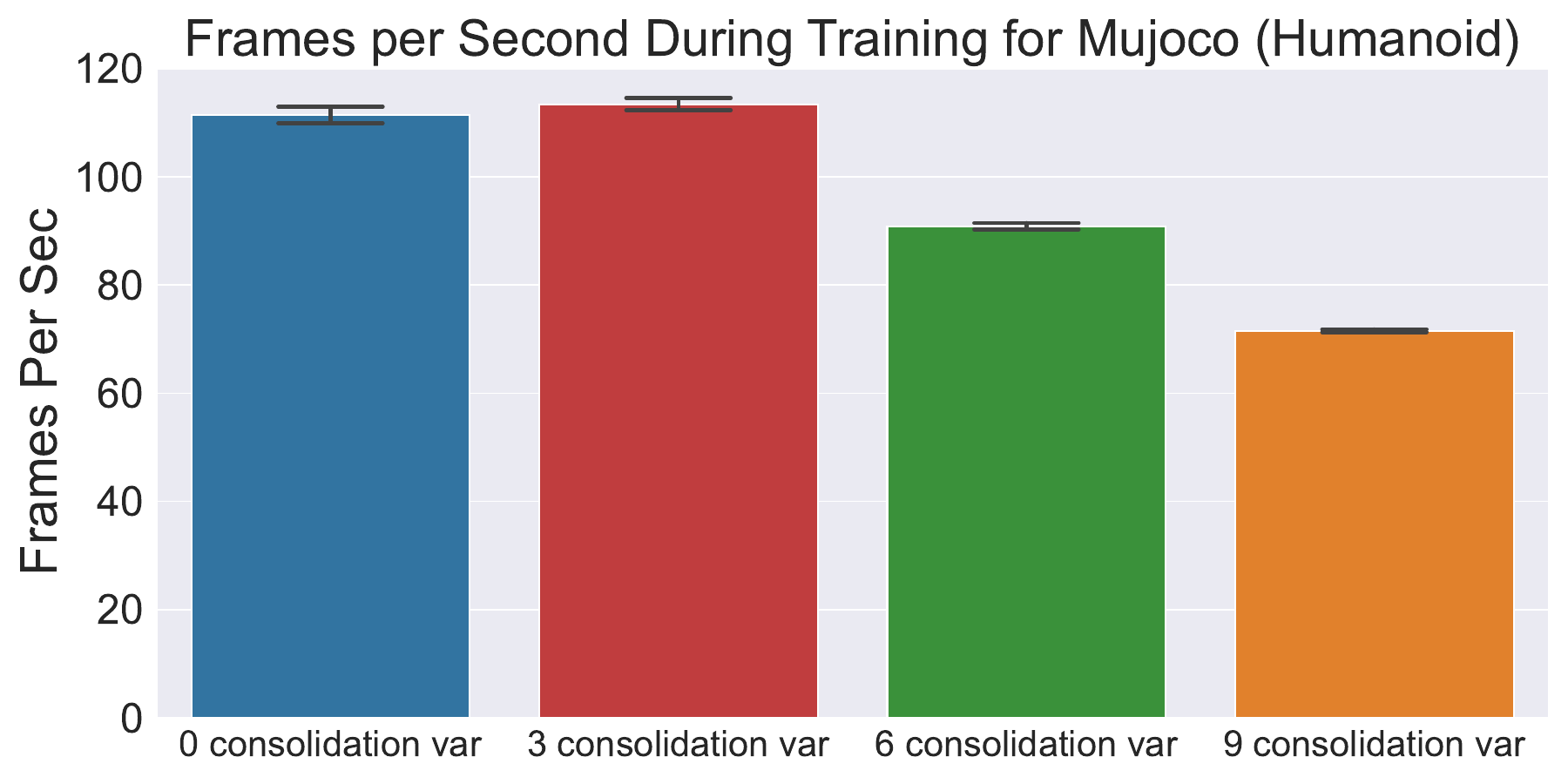}
    \caption{Comparison of training throughput (FPS) for different number of consolidation variables for the SFs within the humanoid embodiment within the MuJoCo environment. Higher FPS reflects more efficient computation.}
\label{fig:computational_complexity_num_consolidation_humanoid}

\end{figure}

\newpage
\section{Comparison with Trust-Region Methods (PPO)}
\label{section:ppo_comparison}
In this section, we include Proximal Policy Optimization (PPO) \citep{schulman2017proximal} as an additional baseline to investigate whether trust-region optimization alone is sufficient to maintain stability under continuous non-stationarity. PPO is particularly relevant in this setting because its clipped objective is designed to stabilize policy updates by constraining large changes in the policy distribution across optimization steps. Given that PPO relies on parallel sample collection, we also report the number of environment samples used during training to account for differences in data usage across methods. Despite these stabilization mechanisms, PPO struggles to maintain strong performance under gradual changes in the environment dynamics, suggesting that trust-region-based optimization alone may be insufficient for continual learning under persistent non-stationarity.

\subsection{MuJoCo suite Results}
\begin{figure}[htb]
    \centering
    \includegraphics[width=0.8\textwidth]{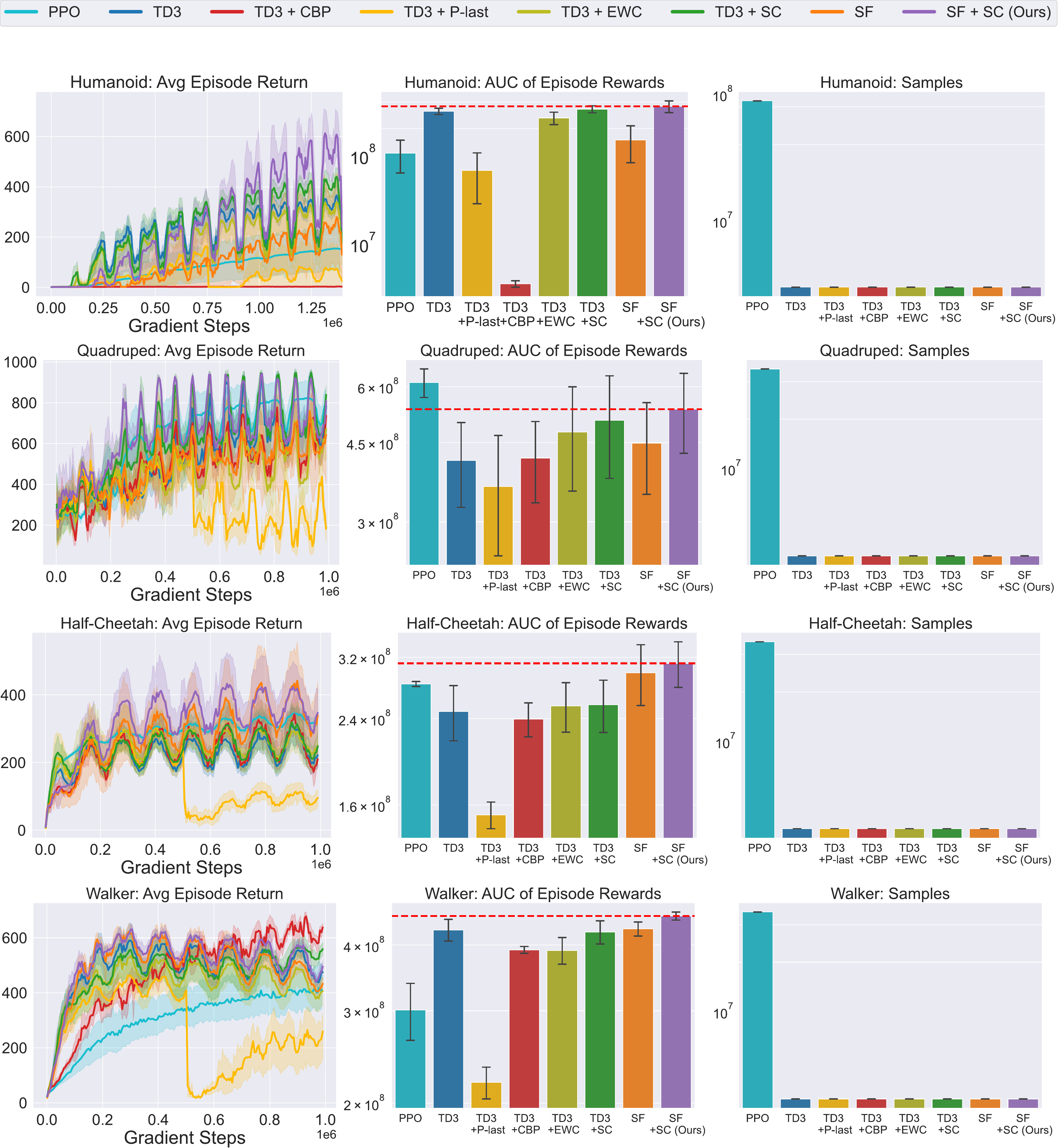}
    \caption[MuJoCo (Mass Changes): PPO Comparison]{Comparison of PPO (teal) with TD3, SF, and their variants with plasticity preservation or stability enhancement mechanisms under continuous mass changes. (Left) Average episode return over training. (Middle) Area under the curve (AUC) of the return, summarizing overall performance. (Right) Total number of environment samples used during training. While PPO leverages parallelized data collection and uses more samples, it achieves lower return and AUC compared to SF + SC (purple). This suggests that trust-region updates alone are insufficient to maintain stable and efficient learning under continuous non-stationarity, compared to explicit multi-timescale consolidation mechanisms.}
\label{fig:ppo_comparison_mujoco}
\end{figure}

\newpage
\section{Quantifying Continuous Non-Stationarity}
\label{section:quantifying_non_stationarity}
In this section, we present results under varying levels of perturbation in the 3D Four Rooms environment and MuJoCo, quantified across three regimes: mild, moderate, and severe. For the mild regime, perturbations were capped at 25\% of the maximum allowable value. Here, the maximum corresponds either to a 0.45 probability that the selected action is replaced by a randomly sampled alternative action in the 3D Four Rooms environment, or to the largest perturbation permitted by the MuJoCo physics engine before the simulation becomes unstable. For the moderate and severe regimes, perturbations were capped at 50\% and 100\% of this maximum, respectively. All perturbations were generated using the \textit{periodic} continuous-change setting (Sections \ref{section:slippery_four_rooms_periodic} and \ref{section:mujoco_periodic} in Appendix \ref{section:environment}).

We observe that, in the 3D Four Rooms environment, applying synaptic consolidation to the Successor Feature (SF) parameters consistently yields more robust learning performance across all perturbation regimes. In contrast, for MuJoCo environments under mild perturbations (25\%), applying synaptic consolidation to the Q-value parameters generally results in stronger performance. However, as the magnitude of perturbation increases in the moderate and severe regimes, consolidating the SF parameters becomes increasingly effective.

\subsection{Slippery Probabilities Perturbations (3D Four Rooms)}
\begin{figure}[htb]
    \centering
    \includegraphics[width=0.9\textwidth]{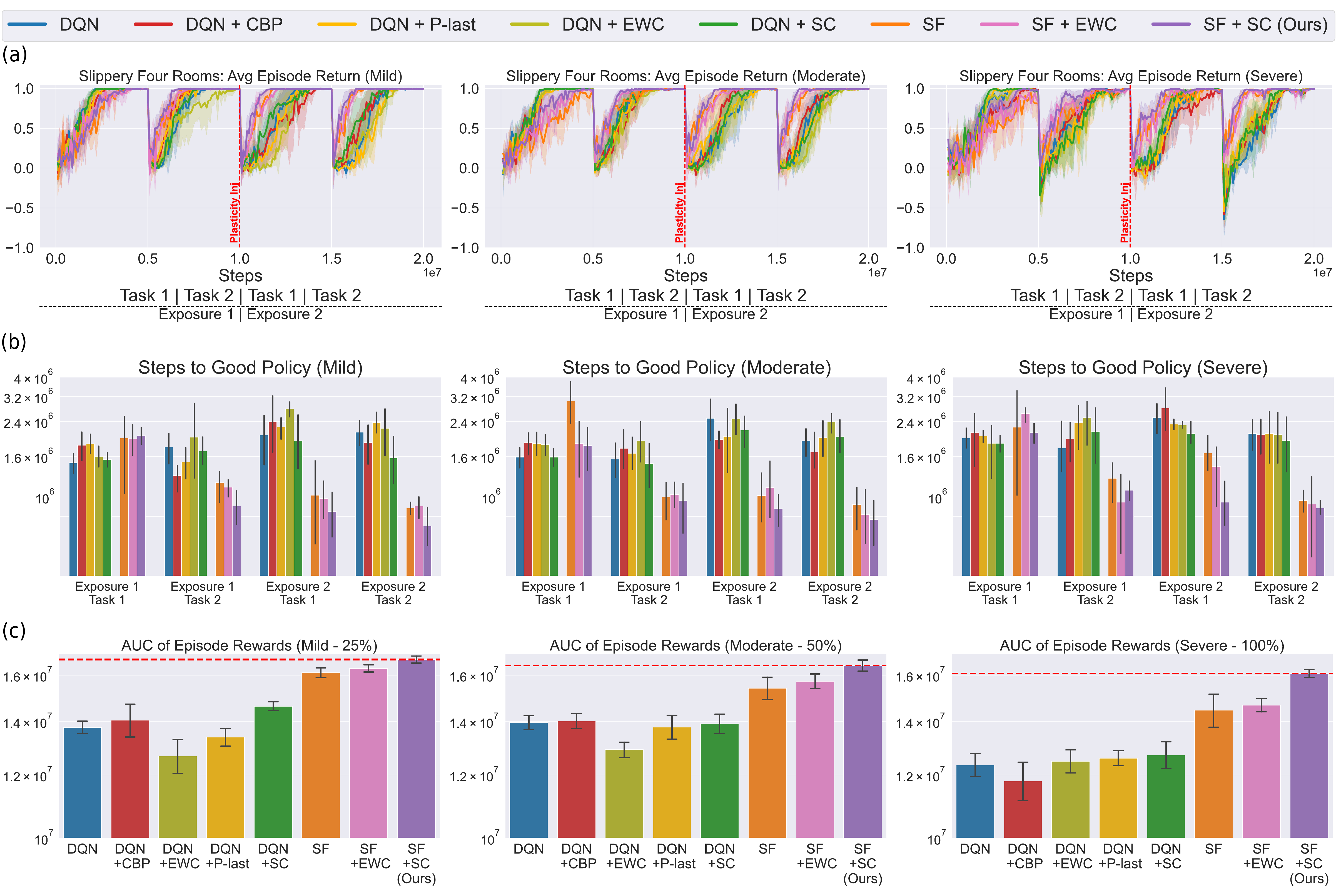}
    \caption[Quantification of slippery probabilities changes in the 3D Four Rooms Environment]{Quantification of slippery dynamics in the 3D Four Rooms environment. (a) Average episode return, (b) minimum number of environment steps required to learn a successful policy, and (c) area under the curve (AUC) of the episode returns. We consider three levels of slippery probability variation: mild (25\%), moderate (50\%), and severe (100\%), where the maximum corresponds to a 0.45 probability that the selected action is replaced by a randomly sampled alternative action. The results suggest that methods primarily designed to promote plasticity (CBP, P-last) are less effective under continuous non-stationarity than approaches incorporating synaptic consolidation (SC). While Successor Features (SFs) consistently improve performance across all settings, additional gains are only achieved when SC is applied to SFs.}
\label{fig:slip_prob_quantification}
\end{figure}

\newpage
\subsection{Mass Perturbations (MuJoCo)}
\subsubsection{Humanoid}
\begin{figure}[htbp]
    \centering
    \includegraphics[width=0.9\textwidth]{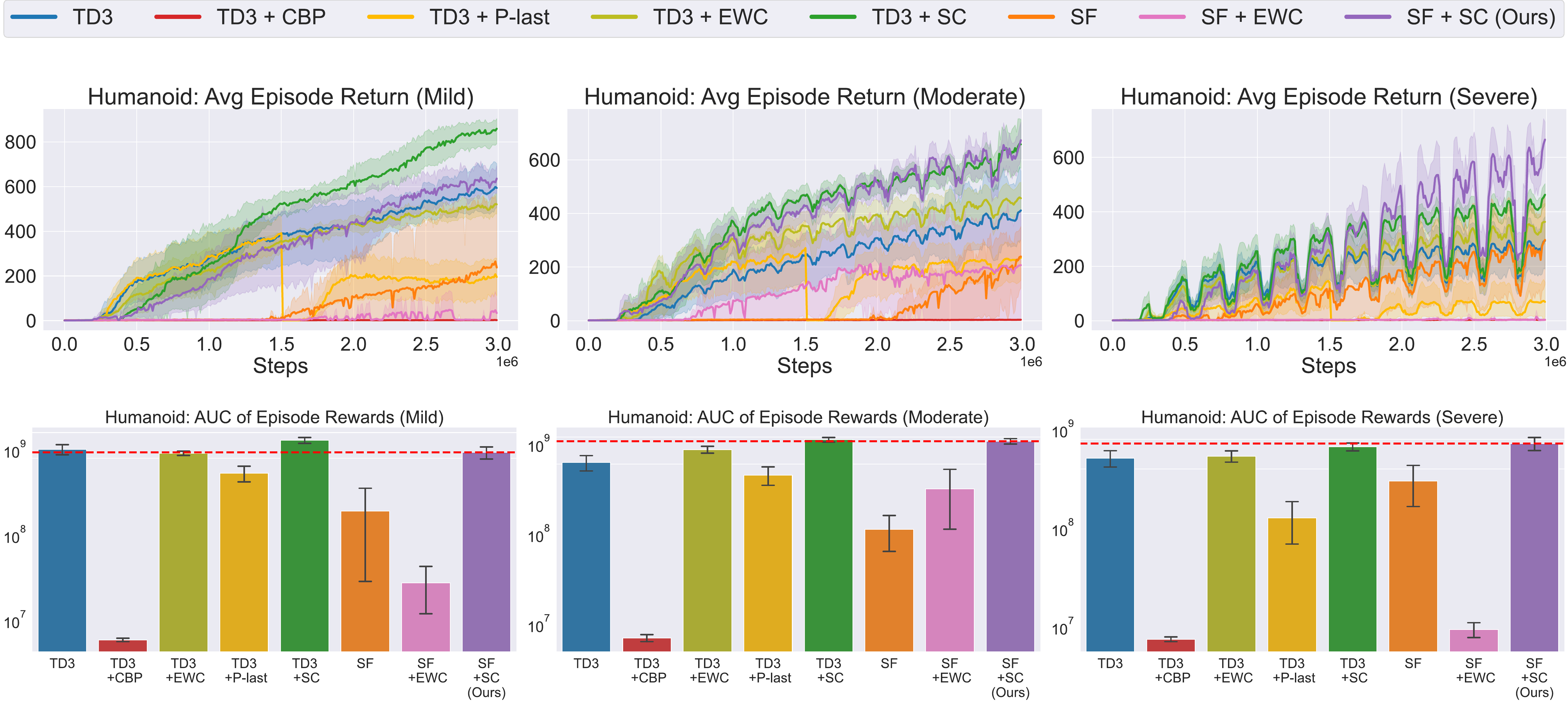}
    \caption[Quantification of mass changes:Humanoid]{Quantification of mass changes for the Humanoid embodiment. We consider three levels of mass dynamics variation: mild (25\%), moderate (50\%), and severe (100\%), corresponding to the maximum change allowed before the physical simulation becomes unstable. Across these settings, plasticity-preserving methods (CBP, P-last) are less effective than approaches incorporating synaptic consolidation (SC). Under moderate—and especially severe—conditions, where the agent experiences substantial changes in dynamics, applying SC to Successor Features (SF + SC) yields the best performance. In contrast, under mild changes, the benefits of SC are limited, as the environment remains close to stationary.}
\label{fig:mass_quantification_returns_auc_humanoid}
\end{figure}

\newpage
\subsubsection{Quadruped}
\begin{figure}[htbp]
    \centering
    \includegraphics[width=0.9\textwidth]{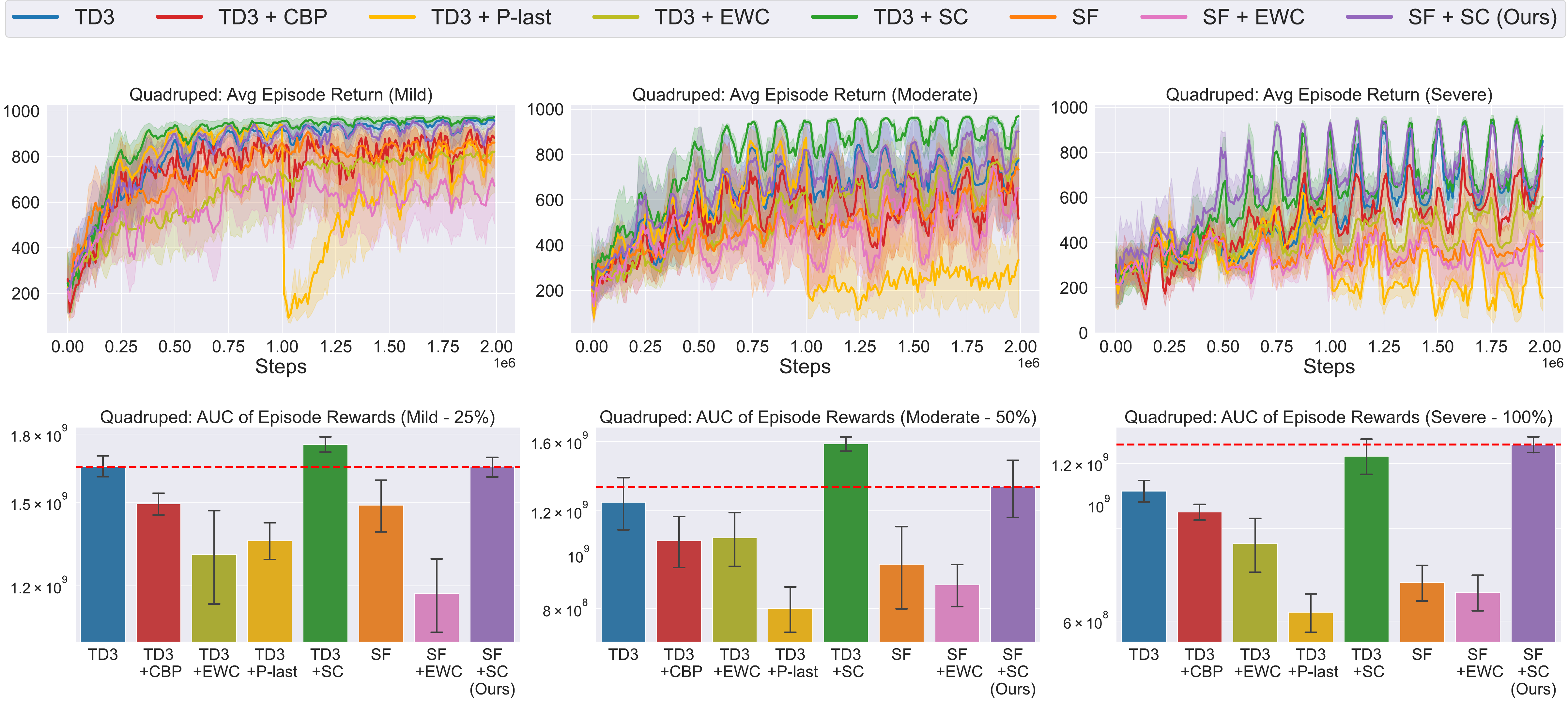}
    \caption[Quantification of mass changes:Quadruped]{Quantification of mass changes for the Quadruped embodiment. We consider three levels of mass dynamics variation: mild (25\%), moderate (50\%), and severe (100\%), corresponding to the maximum change allowed before the physical simulation becomes unstable. Across these settings, plasticity-preserving methods (CBP, P-last) are less effective than approaches incorporating synaptic consolidation (SC). Under severe conditions, where the agent experiences substantial changes in dynamics, applying SC to Successor Features (SF + SC) yields the best performance. In contrast, under mild and moderate changes, applying SC to Q-values (TD3 + SC) yields better performance, as the environment remains closer to stationary.}
\label{fig:mass_quantification_returns_auc_quadruped}
\end{figure}

\newpage
\subsubsection{Half-Cheetah}
\begin{figure}[htbp]
    \centering
    \includegraphics[width=0.9\textwidth]{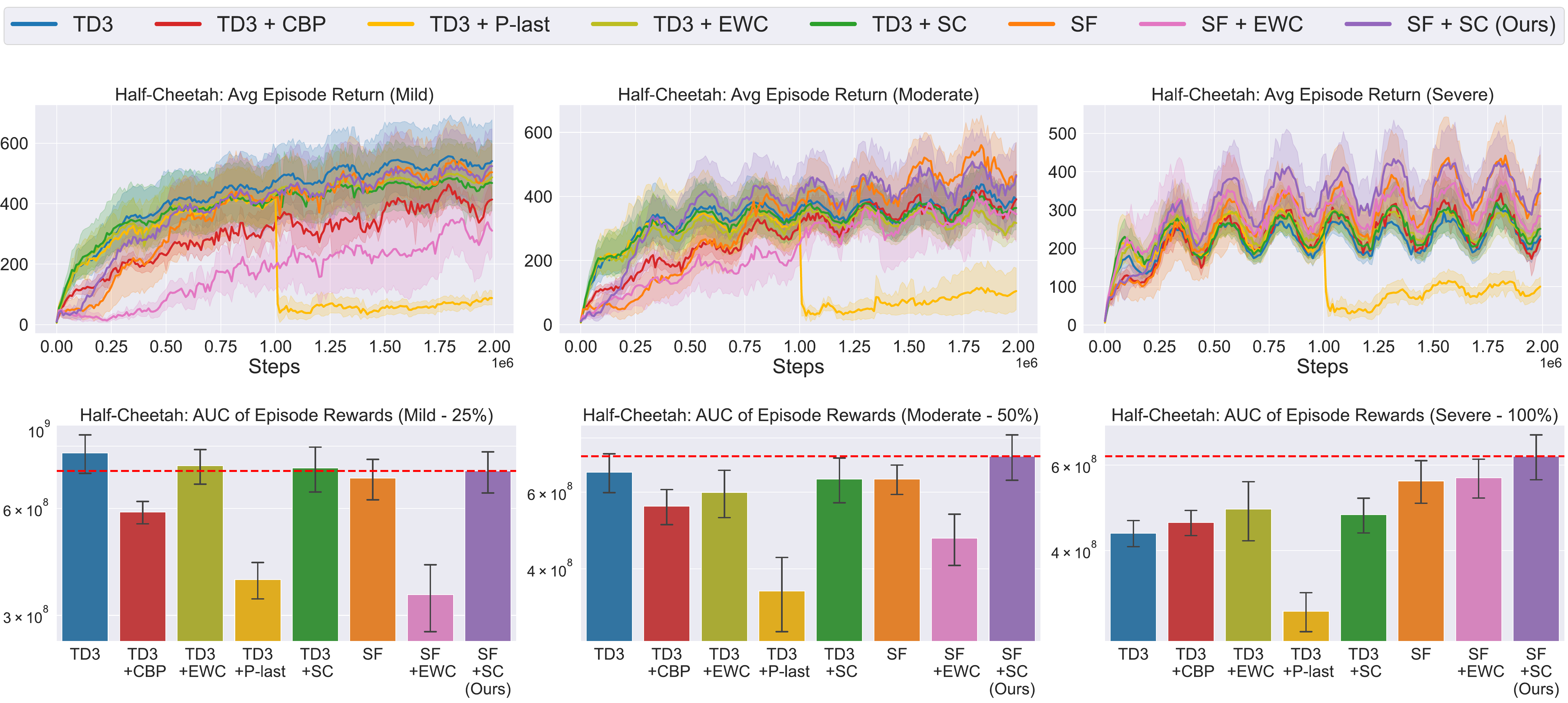}
    \caption[Quantification of mass changes:Half-Cheetah]{Quantification of mass changes for the Half-Cheetah embodiment. We consider three levels of mass dynamics variation: mild (25\%), moderate (50\%), and severe (100\%), corresponding to the maximum change allowed before the physical simulation becomes unstable. Across these settings, plasticity-preserving methods (CBP, P-last) are less effective than approaches incorporating synaptic consolidation (SC). Under moderate—and especially severe—conditions, where the agent experiences substantial changes in dynamics, applying SC to Successor Features (SF + SC) yields the best performance. In contrast, under mild changes, the benefits of SC are limited, as the environment remains close to stationary.}
\label{fig:mass_quantification_returns_auc_half_cheetah}
\end{figure}

\newpage
\subsubsection{Walker}
\begin{figure}[htbp]
    \centering
    \includegraphics[width=0.9\textwidth]{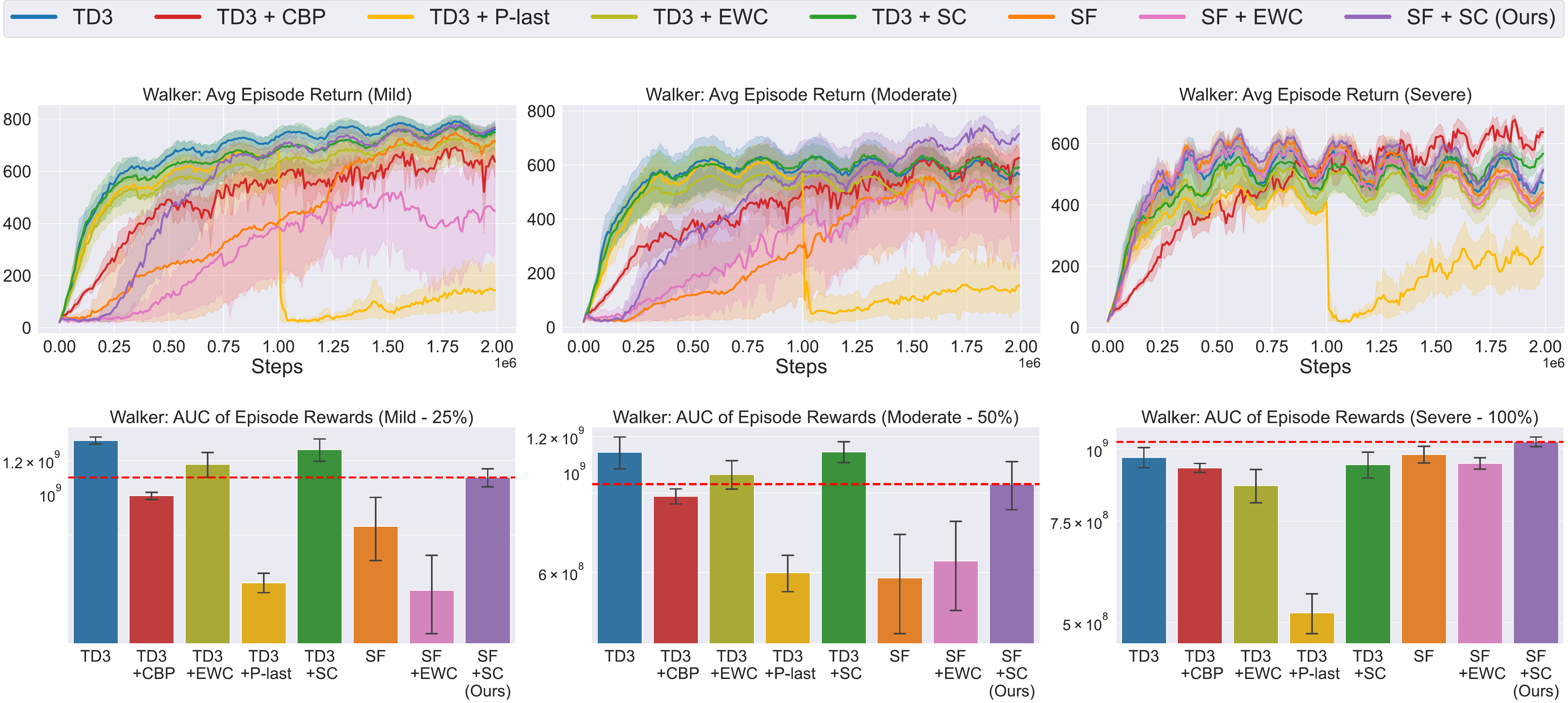}
    \caption[Quantification of mass changes:Walker]{Quantification of mass changes for the Walker embodiment. We consider three levels of mass dynamics variation: mild (25\%), moderate (50\%), and severe (100\%), corresponding to the maximum change allowed before the physical simulation becomes unstable. Across these settings, plasticity-preserving methods (CBP, P-last) are less effective than approaches incorporating synaptic consolidation (SC). Under severe conditions, where the agent experiences substantial changes in dynamics, applying SC to Successor Features (SF + SC) yields the best performance. In contrast, under mild and moderate changes, applying SC to Q-values (TD3 + SC) yields better performance, as the environment remains closer to stationary.}
\label{fig:mass_quantification_returns_auc_walker}
\end{figure}

\end{document}